\newcommand\rev[1]{#1}
\def\eqref#1{equation~\ref{#1}}
\def\1{\bm{1}}
\DeclareMathAlphabet{\mathsfit}{\encodingdefault}{\sfdefault}{m}{sl}
\SetMathAlphabet{\mathsfit}{bold}{\encodingdefault}{\sfdefault}{bx}{n}
\def\gD{{\mathcal{D}}}
\newcommand{\R}{\mathbb{R}}
\DeclareMathOperator*{\argmax}{arg\,max}
\newcommand{\xhdr}[1]{{\noindent\bfseries #1}.}
\newcommand{\cut}[1]{}
\lstdefinestyle{python}{
  language=Python,
  basicstyle=\ttfamily\footnotesize,
  keywordstyle=\color{blue}\bfseries,
  stringstyle=\color{green!50!black},
  commentstyle=\color{gray}\itshape,
  showstringspaces=false,
  breaklines=true,
  frame=tb,
  tabsize=4,
  numbers=none
}
\newtheorem{stepseq}{Alg.}[section]
\newenvironment{boxedsteps}[1][]{%
  \begin{mdframed}[linewidth=1pt,linecolor=black,roundcorner=5pt]
  \begin{stepseq}[#1]}{%
  \end{stepseq}
  \end{mdframed}}
\newcommand{\bx}{\boldsymbol{x}}
\newcommand{\comm}[1]{}
\newcommand\zack[1]{\noindent{\color{red} {\bf \fbox{Zack}
} {\it#1}}}
\newcommand{\shortname}{PAPL }
\theoremstyle{plain}
\newtheorem{theorem}{Theorem}[section]
\newtheorem{proposition}[theorem]{Proposition}
\newtheorem{lemma}[theorem]{Lemma}
\newtheorem{corollary}[theorem]{Corollary}
\theoremstyle{definition}
\newtheorem{definition}[theorem]{Definition}
\theoremstyle{remark}
\title{Planner Aware Path Learning in Diffusion Language Models Training}
\author{%
Fred Zhangzhi Peng$^{1,\ddagger,
}$\thanks{Correspondence to \texttt{zp70@duke.edu} and \texttt{zwb@duke.edu}}, 
Zachary Bezemek$^{1,\ddagger,*}$,
Jarrid Rector-Brooks$^{2,3,4}$,
Shuibai Zhang$^{5}$,\\
\textbf{Anru R. Zhang$^{1}$,
Michael Bronstein$^{6,7}$,
Alexander Tong$^{7\dagger}$,
Avishek Joey Bose$^{2,6, 8\dagger}$} \\
\\
$^{1}$ Duke University \quad
$^{2}$ Mila \quad
$^{3}$ Université de Montréal
$^{4}$ California Institute of Technology \\
$^{5}$ University of Wisconsin–Madison 
$^{6}$ University of Oxford 
$^{7}$ AITHYRA \\
$^{8}$ Imperial College London \quad
$^{\ddagger}$ Equal contribution \quad
$^{\dagger}$ Equal advising
}
\begin{document}

\maketitle
\etocdepthtag.toc{mtmain}
\begin{abstract}

\looseness=-1
Diffusion language models have emerged as a powerful alternative to autoregressive models, enabling fast inference through more flexible and parallel generation paths. This flexibility of sampling is unlocked by new engineered sampling strategies, or \emph{planners}, that select more favorable generation paths by iteratively planning---versus uniformly at random---where to denoise along the sequence. However, by modifying the reverse paths via planning, planners create an irrevocable mismatch between the uniformly random denoising paths assumed during training and planning-based inference. In this paper, we systematically investigate the mismatch of discrete diffusion training and inference under planning and theoretically prove that the standard discrete diffusion training evidence lower bound (ELBO) does not accurately describe a denoiser that uses a non-uniform planner. To address this gap, we derive a new planned evidence lower bound (P-ELBO) that incorporates planner-based reverse dynamics directly into the training objective.
Using the P-ELBO, we introduce \textit{Planner Aware Path Learning} (PAPL), a novel training scheme that aligns training and inference under a planned denoiser.
PAPL is implemented as a simple yet effective modification to the standard masked discrete diffusion loss, making it widely applicable and easy to adopt.
Empirically, we show PAPL delivers consistent gains across domains, including a 40\% relative improvement in protein sequences, improved text generation with up to a $4\times$ relative MAUVE gain, and 23\% relative improvement in code generation \textsc{HumanEval} pass@10. Code is available at \href{https://github.com/pengzhangzhi/PAPL}{github.com/pengzhangzhi/PAPL}.

\end{abstract}

\section{Introduction}
\label{sec:introduction}

\looseness=-1
The landscape of generative modeling over discrete data has led to foundational breakthroughs in deep learning, with Large Language Models (LLMs) being an exemplary technology that has transcended beyond natural language processing~\citep{achiam2023gpt}. Until recently, the de facto gold standard for building LLMs has been Autoregressive models (ARMs), which are highly scalable for pre-training LLMs---allowing them to capture complex dependencies in data---but incur rigid inference schemes due to the autoregression mechanism that generates samples in a causal order---e.g., left to right for natural language~\citep{deepseekai2025deepseekr1incentivizingreasoningcapability}. In contrast to ARMs, recent advances in Diffusion Language Models (DLMs) have the potential to disrupt the current status quo for generative modeling of discrete data, as they natively support flexible generation orders and allow for fully parallel sampling of tokens at inference time~\citep{Austin2021StructuredDD,Lou2023DiscreteDM,mdlm,shi2024simplified}. The increased flexibility of modeling discrete data in any order makes DLMs arguably a more natural tool than ARMs for tackling high-impact problem domains that lack a natural causal ordering, such as biological sequence design and code completion~\citep{nie2024scalingmaskeddiffusionmodels,DPLM,DPLM2}, spurring their rapid recent development and application~\citep{song2025seeddiffusionlargescalediffusion}.

\looseness=-1
Indeed, the most performant variant of DLMs, i.e., Masked Diffusion Models (MDMs)~\citep{shi2024simplified,mdlm}, approach generative modeling as a denoising task, wherein partially masked sequences are iteratively refined using a learned denoiser that time-reverses the Markov transition dynamics of the masked corruption process. However, a key assumption, and thus a central limitation, of DLMs is that following the reverse dynamics of uniformly denoising a position at inference implicitly assumes denoising with a \emph{perfect denoiser}~\citep{peng2025pathplanningmaskeddiffusion}. As a result, in practice, to fully take advantage of flexible generation paths and generate higher-quality samples beyond uniform decoding, the reverse process must be modified by a \emph{planner}: a rule that selects which tokens to reveal next, such as greedy decoding~\citep{Chang_2022_CVPR}, ancestral sampling~\citep{shi2024simplified,schiff2024simpleguidancemechanismsdiscrete}, or Path planning (P2)~\citep{peng2025pathplanningmaskeddiffusion}. In fact, using a planning strategy is more than a humble artefact of optimizing for inference; it can be seen as avoiding denoising over exponential infilling problems at inference---unlike training, in which the task is provably computationally intractable~\citep{kim2025trainworstplanbest}. More than just theory, employing a planner when denoising has been shown to substantially improve sample quality across various application domains, including text, code, protein sequences, and discrete image modeling~\citep{peng2025pathplanningmaskeddiffusion,nie2024scalingmaskeddiffusionmodels, shi2024simplified}.

\looseness=-1
A fundamental aspect of DLM inference under planning is the fact that denoising is not necessarily conducted by uniformly picking a position to unmask. Consequently, this new reverse process creates an irrevocable mismatch between the forward masking process, used during training, which corrupts sequences by masking positions uniformly at random. This mismatch of forward and reverse processes also suggests that, in effect, training denoisers in DLMs attempts to solve a harder problem than the one they are ultimately used for. This raises the following central question:

\begin{center} 

{\xhdr{Q} \em How should we adapt the training of denoisers in diffusion language models when inference inevitably proceeds under a planner?}

\end{center}


\looseness=-1
\xhdr{Present work} In this paper, we seek to answer the question by introducing a new theoretical framework that aims to align the training of DLMs with \emph{pre-assumed knowledge} of planner-based inference. Our framework is built using basic facts of Markov chains, which allows us to set up the training of DLMs as minimizing a path-wise KL divergence. More precisely, the path-wise KL is between the reverse dynamics of a DLM using a planner and supervised ideal reverse dynamics, also under planning, that hit the data distribution.

Armed with this path-wise KL, we first theoretically prove in~\S\ref{prop:greedynotanELBO} how greedy ancestral sampling at inference of DLMs violates the standard DLM ELBO~\citep{mdlm,shi2024simplified}---validating the thesis of a mismatch of forward and reverse dynamics under planning. We next derive a new planned evidence lower bound (P-ELBO) in~\S\ref{prop:general_ELBO}, of which the standard planning-agnostic---i.e., uniformly at random denoising---DLM ELBO is a special case. Furthermore, we demonstrate recent heuristics that act as planners, such as MaskGIT~\citep{Chang_2022_CVPR}, and P2~\citep{peng2025pathplanningmaskeddiffusion}, emerge as principled instances of our new P-ELBO, which unifies existing strategies under one umbrella. Given the insights in P-ELBO, we propose a new loss function for training the denoiser that directly incorporates any choice of planner, and thereby allows training to match inference properly. We summarize our main contributions in this paper as follows:

\begin{itemize}[topsep=0pt, partopsep=0pt, itemsep=0pt, parsep=0pt, leftmargin=*]
    \item  \looseness=-1 \xhdr{Unifying framework} We derive a novel generalized planner-aware generalized lower bound (P-ELBO) that takes into account the use of planning in the reverse dynamics of a DLM.
    \item \looseness=-1 \xhdr{Efficient implementation} Starting from the P-ELBO, we design a new simplified loss termed \emph{Planner Aware Path Learning} (PAPL) that amounts to a one-line code change and uses self-planning. Specifically, PAPL leverages the denoiser itself---i.e., places where the denoiser is most confident---to compute a weighted loss on more likely generation paths compared to standard DLMs.
    \item \looseness=-1 \xhdr{Improved performance} Empirically, PAPL consistently improves the quality of diffusion language models under identical configurations. We observe PAPL in protein sequence generation yields a $40\%$ relative increase in foldability, surpassing larger diffusion and autoregressive baselines while preserving diversity. On code generation, PAPL improves \textsc{HumanEval} pass@1 from $18.5$ to $20.8$, pass@10 from $31.1$ to $38.4$, and \textsc{HumanEval-Infill} pass@1 from $30.0$ to $32.5$. On text generation, PAPL achieves up to a $4\times$ improvement in MAUVE and reduces generative perplexity by over $40\%$ compared to prior diffusion models.
\end{itemize}


\cut{
\looseness=-1
Starting from our proposed P-ELBO, we next make an algorithmic contribution that aims to provide a painless implementation of the P-ELBO objective for DLM training. Specifically, we design in~\S\ref{subsec:efficientimplementation} \textit{Planner Aware Path Learning} (PAPL), a weighted planning strategy that focuses on likely positions to be actually denoised---i.e., places where the denoiser is most confident. Planning in this manner allows for an efficient softmax-based approximation that amounts to a single line change to compute additional weights in comparison to standard DLM training, allowing for seamless adoption at scale without additional overhead. We test the empirical caliber of PAPL across a suite of standard benchmarks, including protein sequence generation and code completion. With only a one-line modification to standard DLM training, PAPL consistently improves the quality and convergence of diffusion language models under identical configurations. On \textbf{protein generation}, PAPL yields a $40\%$ relative increase in foldability (from $42.4\%$ to $59.4\%$), surpassing larger diffusion and autoregressive baselines while preserving diversity. On \textbf{code generation}, PAPL improves \textsc{HumanEval} pass@1 from $18.5$ to $20.8$, pass@10 from $31.1$ to $38.4$, and \textsc{HumanEval-Infill} pass@1 from $30.0$ to $32.5$. Beyond final quality, we show that PAPL accelerates training convergence, improves sampling robustness, and remains stable across planner hyperparameters, confirming the generality and practicality of planner-aware training.
}

\cut{
Autoregressive models (ARMs) and Diffusion Language Models (DLMs) have emerged as two of the most powerful paradigms in generative modeling for discrete data. These frameworks have demonstrated state-of-the-art performance across a wide range of modalities, including text, images, and biological sequences. ARMs model the joint distribution over a sequence of variables by factorizing it into a product of conditional distributions. This allows them to capture complex dependencies and deliver highly accurate, sample-efficient generation. In parallel, DLMs \zack{ DLMs? }approach generation from a denoising perspective, iteratively refining a corrupted (e.g., masked or noisy) input back toward a data-consistent output. Unlike ARMs, DLMs support flexible generation orders and allow for fully parallel sampling at inference time.
A central limitation of diffusion language models is that no denoiser can ever be perfect. In practice, this means that the reverse process used at inference must be modified by a \emph{planner}: a rule that selects which tokens to reveal next, such as greedy decoding, ancestral sampling, or P2. Planning has been shown to substantially improve sample quality across text, protein, and image domains. However, training of DLMs continues to assume that tokens are denoised in uniformly random orders, creating a fundamental mismatch between the forward and reverse processes. In effect, we train denoisers to solve a harder problem than the one they are ultimately used for. 

Autoregressive models avoid this mismatch by fixing a canonical order (e.g., left-to-right in language), but this rigidity makes them poorly suited for tasks without a natural ordering. DLMs, in contrast, support arbitrary orders and enable parallel decoding, but their standard training objective corresponds to the unrealistic assumption of uniform random generation paths. As a result, the denoiser is not optimized for the planner that will actually govern inference. This raises the central question:

\begin{quote}
\emph{How should we train diffusion language models when inference inevitably proceeds under a planner?}
\end{quote}

We answer this question by introducing a framework that aligns training with planner-based inference. Starting from a pathwise KL divergence for discrete Markov chains, we derive a \textbf{generalized evidence lower bound (ELBO)} that treats the planner itself as the reverse process. For any planner—whether greedy, ancestral, or P2—the framework specifies the corresponding forward process and the correct training objective for the denoiser. Standard DLM training is recovered as the special case of a uniform planner, while commonly used heuristics emerge as principled instances of our general ELBO.

This perspective flips the usual paradigm. Rather than fixing the forward process and asking what reverse dynamics follow, we begin from the planner that will actually be used at inference and derive the forward process needed for training. Conceptually, this resolves the mismatch between training and inference and provides a simple theoretical foundation for order-aware DLMs.
Our work makes the following contributions:  
\begin{itemize}[topsep=0pt, partopsep=0pt, itemsep=0pt, parsep=0pt, leftmargin=*]
    \item \textbf{Generalized ELBO.} We derive a planner-aware ELBO that unifies existing training objectives and provides a principled loss for any planner-based sampling scheme.  
    \item \textbf{Efficient implementation.} We simplify the planner-aware loss so that it reduces to a one-line modification of the standard DLM objective, making it easy to adopt at scale without additional overhead.  
    \item \textbf{Empirical validation.} Across protein generation, code completion, and image modeling, planner-aware training consistently improves quality and convergence, outperforming standard DLMs under matched inference. 
\end{itemize}  
}

\cut{
\begin{figure}[t]
\centering
\begin{minipage}{0.45\linewidth}
    \centering
    \includegraphics[width=0.95\linewidth]{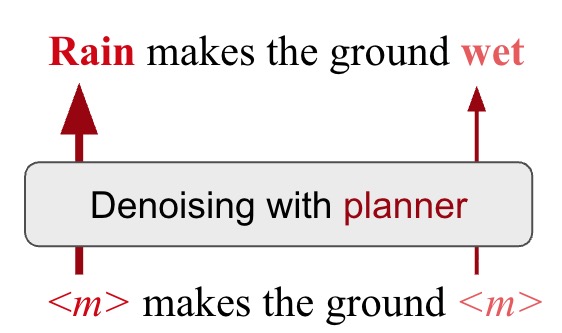}
    \vspace{-1mm}
\end{minipage}\hfill
\begin{minipage}{0.55\linewidth}
    \centering
    \small
    \resizebox{\linewidth}{!}{%
    \begin{tabular}{lcc}
    \toprule
     & Training & Inference \\
    \midrule
    DLM & Uniform & Uniform \\
    DLM + Greedy & Uniform & Planner \textit{(mismatch)} \\
    PAPL (\textbf{ours}) & Planner & Planner \textit{(aligned)} \\
    \bottomrule
    \end{tabular}}
\end{minipage}
\vspace{-10pt}
\caption{\looseness=-1 \textbf{Training–inference mismatch in diffusion language models.} 
\textbf{Left}: Visualization of denoising probabilities, with a thicker arrow denoting a higher probability to be denoised.
Uniform: all masked positions have equal probability.
Planner: denoising probability is given by a planner.
\textbf{Right}: training vs.\ inference summary. 
Standard DLMs use uniform probabilities throughout. 
DLM + Greedy improves inference with planners but introduces a mismatch with training. 
PAPL closes the gap by using planner-based probabilities in both training and inference.}
\label{fig:method_overview}
\end{figure}
}


\section{Background and Preliminaries}
\label{sec:background}

\looseness=-1
\xhdr{Notation}
Let $\mathcal{V} = \{1, \dots, d\}$ denote a finite vocabulary. We reserve the final symbol, $d = \mathbf{m}$, as a special \emph{mask token}, while the remaining $d-1$ symbols correspond to ordinary vocabulary items. We consider sequences of length $L$, so that a data point is represented as $\bx = (x^1, \dots, x^L) \in \mathcal{V}^L$. The empirical data distribution $p_{\text{data}}$ is supported on a training set $\gD \subset \mathcal{V}^L$.  
We denote by $\Delta^d = \{u \in \mathbb{R}^d: u^i \geq 0,\ \sum_{i=1}^d u^i = 1\}$ the probability simplex. Each $u \in \Delta^d$ specifies a categorical distribution $\text{Cat}(j; u) = u^j$ over $j \in \mathcal{V}$. For a particular token $x \in \mathcal{V}$, we write $\delta(x) \in \Delta^d$ for the one-hot distribution that places all its mass on $x$. To avoid ambiguity, we use superscripts (e.g., $x^i$) to index sequence positions, and subscripts (e.g., $x_t$) to index time steps of a stochastic process. For $\mathbf{x},\mathbf{y}\in\mathcal{V}^L$, we use $d_{\text{HAM}}(\mathbf{x},\mathbf{y})$ to denote the Hamming distance between $\mathbf{x}$ and $\mathbf{y}$. We also use $N_M(\mathbf{x})$ to denote the number of coordinates in $\mathbf{x}$ which are equal to $\mathbf{m}.$ For a finite set $S$, we use $\text{Unif}(S)$ to denote the uniform distribution on that set. For $l<n\in\mathbb{N}$, we denote by $[l:n]=\lbrace l,l+1,\dots,n\rbrace$.

\subsection{Masked Diffusion Language Models}
\label{subsection:maskeddiffusionmodels}

\looseness=-1
A masked diffusion language model (MDLM) generates samples from a data distribution $p_{\text{data}} \in \Delta^{d^L}$ through an iterative sampling process which gradually denoises a full mask sequence $[\mathbf{m}]^L$ to a sequence which does not contain any masks. This iterative sampling procedure makes use of a denoiser $D_\theta: \mathcal{V}^L \to (\Delta^d)^L$, which outputs a distribution over the clean tokens at each position. In particular, for $\mathbf{x}\in \mathcal{V}^L$ and $y\in \mathcal{V},y\neq \mathbf{m}$, $\text{Cat}(y;D^i_\theta(\mathbf{x}))$ approximates the probability that the $i$'th token in a sequence is $y$ given the unmasked positions in the sequence match those of $\mathbf{x}$~\citep{HoogeboomARDM22,mdlm,shi2024simplified,zheng2024maskeddiffusionmodelssecretly,ou2024}. 

\looseness=-1
Moreover, the Gillespie sampling scheme~\citep{gillespie_exact_1977,GILLESPIE1976403} allows us to establish an exact equivalence between DLM and any-order autoregressive model (AOARM)~\citep{UriaML14,HoogeboomARDM22}. More precisely, the Gillespie sampling scheme allows for a single coordinate to be denoised at each step. Concretely, starting from the fully masked sequence, the procedure iteratively (1) selects a position uniformly at random among the currently masked tokens, (2) samples a replacement token from the denoiser’s predictive distribution at that position, and (3) updates the sequence by filling in the chosen token while leaving all other positions unchanged.

\looseness=-1
Let $p^\text{unif}_\theta$ be the distribution on $\mathcal{V}^L$ of $\mathbf{x}_L$ resulting from applying the above iterative sampling procedure in which the masked coordinate to denoise is chosen uniformly at random. This uniform unmasking process is the main sampling strategy employed by MDLMs~\citep{mdlm,shi2024simplified}.
The connection between MDLMs and AOARMs~\citep{ou2024} can be seen more explicitly through the MDLM evidence lower bound (ELBO), which lower bounds the log marginal $\log(p^{\text{unif}}_\theta(\mathbf{x}_0))$,{\allowdisplaybreaks
\begin{align}\label{eq:AOARMELBO}
&\log(p^{\text{unif}}_\theta(\mathbf{x}_0))\geq\mathcal{E}^{\theta,\text{unif}}(\mathbf{x}_0)=\mathbb{E}_{\sigma\sim\text{Unif}(\Sigma^L)}\left[\sum_{i=1}^L\log\left(\text{Cat}(x^{\sigma(i)}_0;D_\theta^i\left(\mathbf{x}_0^{\sigma(<i)}\right)\right)\right]\\ 
&=L\mathbb{E}_{k\sim\text{Unif}([0:L-1])}\left[\mathbb{E}_{\mathbf{x}_k\sim \text{Unif}(\mathcal{X}_{L-k}(\mathbf{x}_0))}\left[\sum_{i=1,x_k^i= \mathbf{m}}^L\frac{1}{L-k}\log\left(\text{Cat}\left(x_0^i;D_\theta^i(\mathbf{x}_k)\right)\right)\right]\right]\nonumber,
\end{align}}
\looseness=-1
where $\Sigma^L$ is the set of all permutations of length $L$. Additionally, for $\sigma \in \Sigma^L$, we denote $\sigma(<i)$ as the first $i-1$ elements of $\sigma$. Thus, for $\mathbf{x}\in\mathcal{V}^L$, $\mathbf{x}^{\sigma(<i)}\in\mathcal{V}^L$ has coordinates in $\sigma(<i)$ set to those of $\mathbf{x}$ and the rest set to $\mathbf{m}$, and for $\mathbf{x}\in\mathcal{V}^L$ and $k\in [0:L]$, $\mathcal{X}_k(\mathbf{x})\subset \mathcal{V}^L$ is the set of sequences which are the same as $\mathbf{x}$ but with exactly $k$ coordinates masked.

\cut{

\looseness=-1
This connection was made via starting at the representation for the ELBO resulting from the ``discrete diffusion model'' framework \citep{campbell2022continuoustimeframeworkdiscrete,DFM,Lou2023DiscreteDM,Sun2022}, in which one uses a continuous time Markov chain to describe one's sampling procedure and compares paths against a family of reference continuous time Markov chains parameterized by $\mathbf{x}_0$ which generate a given data point $\mathbf{x}_0$ with probability 1. In manuscript, we forgo entirely lifting to a continuous time perspective and instead view the ELBO as simply comparing the paths of a discrete time Markov chain $X^\theta$ - which we will simulate to time $L$ to generate data samples - to those of a family of reference discrete time Markov chains $Y^{\mathbf{x}_0}$ which satisfy $X^\theta_0\overset{d}{=}Y^{\mathbf{x}_0}_0$ and $Y^{\mathbf{x}_0}_L=x_0$. 

\looseness=-1
Recall that the transition matrix $Q$ for a discrete time Markov chain $X$ encodes $Q(y,x)=\mathbb{P}(X_{k+1}=y|X_k=x)$.
From this perspective, for vanilla DLM we have $p^{\text{unif}}_\theta(\mathbf{x})=\mathbb{P}(X^\theta_L=\mathbf{x})$ where $X^\theta_0=(\mathbf{m},\dots,\mathbf{m})$ and $X^\theta$'s dynamics are described the transition matrix:
\begin{align*}
Q^\theta(\mathbf{y},\mathbf{x})=\begin{cases}
\text{Cat}(y^i;D^i_\theta(\mathbf{x}))/N_M(\mathbf{x})&,d_{\text{HAM}}(\mathbf{x},\mathbf{y})=1,x^i=\mathbf{m},y^i\neq \mathbf{m}\\ 
0&,\text{ otherwise}
\end{cases}, \mathbf{x},\mathbf{y}\in\mathcal{V}^L
\end{align*}
and we compare to $Y^{\mathbf{x}_0}$ with $Y^{\mathbf{x}_0}_0=(\mathbf{m},\dots,\mathbf{m})$ and transition matrices:
\begin{align}\label{eq:vanillatransitionmatrix}
R(\mathbf{y},\mathbf{x};\mathbf{x}_0)=\begin{cases}
\text{Cat}(y^i;\delta(x^i_0))/N_M(\mathbf{x})&,d_{\text{HAM}}(\mathbf{x},\mathbf{y})=1,x^i=\mathbf{m},y^i\neq \mathbf{m}\\ 
0&,\text{ otherwise}
\end{cases}, \mathbf{x},\mathbf{y}\in\mathcal{V}^L.
\end{align}

\looseness=-1
In this view, the second expression of \eqref{eq:AOARMELBO} is simply:
\begin{align}\label{eq:AOARMELBODTMCform}
\mathcal{E}^{\theta,\text{unif}}(\mathbf{x}_0)&=L\mathbb{E}_{k\sim\text{Unif}([0:L-1])}\left[\mathbb{E}_{\mathbf{x}_k\sim r_k(\cdot;\mathbf{x}_0)}\left[\sum_{\mathbf{y}\in \mathcal{V}^L}R(\mathbf{y},\mathbf{x}_k;\mathbf{x}_0))\log\left(\frac{Q^\theta(\mathbf{y},\mathbf{x}_k)}{R(\mathbf{y},\mathbf{x}_k;\mathbf{x}_0)}\right)\right]\right]\\ 
r_k(\mathbf{x};\mathbf{x}_0)&=\mathbb{P}(Y^{\mathbf{x}_0}_k=\mathbf{x})\nonumber.
\end{align}

\looseness=-1
This form of the ELBO is what follows immediately from our Proposition \ref{prop:ELBOviaDTMC}. This is also the form that we will use to formulate our main Proposition \ref{prop:general_ELBO}---that is, the form of the ELBO when the coordinates to denoise a chosen as a planner---as it forgoes entirely going through an approximation to a continuous time Markov chain, and leads immediately to an interpretable training method.

\looseness=-1
This perspective also has the benefit of allowing one to easily modify the sampling dynamics for $X^\theta$ and see what the corresponding ELBO would be. In particular, it prevents the mismatch between sampling dynamics and training loss currently present in many DLMs, of which we use greedy-ancestral as an example in Proposition~\ref{prop:greedynotanELBO}.

}

\looseness=-1
\xhdr{Existing DLM Sampling Strategies}
While the vanilla masked diffusion sampler proceeds by unmasking one position chosen uniformly at random, a variety of alternative planners have been proposed to improve generation quality by biasing the unmasking order. A straightforward modification is greedy decoding as in MaskGIT~\citep{Chang_2022_CVPR}, where at each step the denoiser selects the position with the highest confidence to unmask next~\citep{gong2024scalingdiffusionlanguagemodels}. Another line of work introduces remasking, where previously generated tokens may be reverted to the mask state and resampled. Resampling diffusion models (RDM)~\citep{RDM} extend greedy planning with resampling. 
More recently, path planning (P2)~\citep{peng2025pathplanningmaskeddiffusion} has been proposed as a unifying framework that generalizes all of the above strategies. P2 decomposes each step into a planning stage, where a planner chooses positions to update---including both masked and already unmasked tokens---and a denoising stage, where the selected positions are resampled with the denoiser. 

\begin{figure}[thb]
\centering
\includegraphics[width=1\linewidth, trim=0cm 9cm 0cm 7cm, clip]{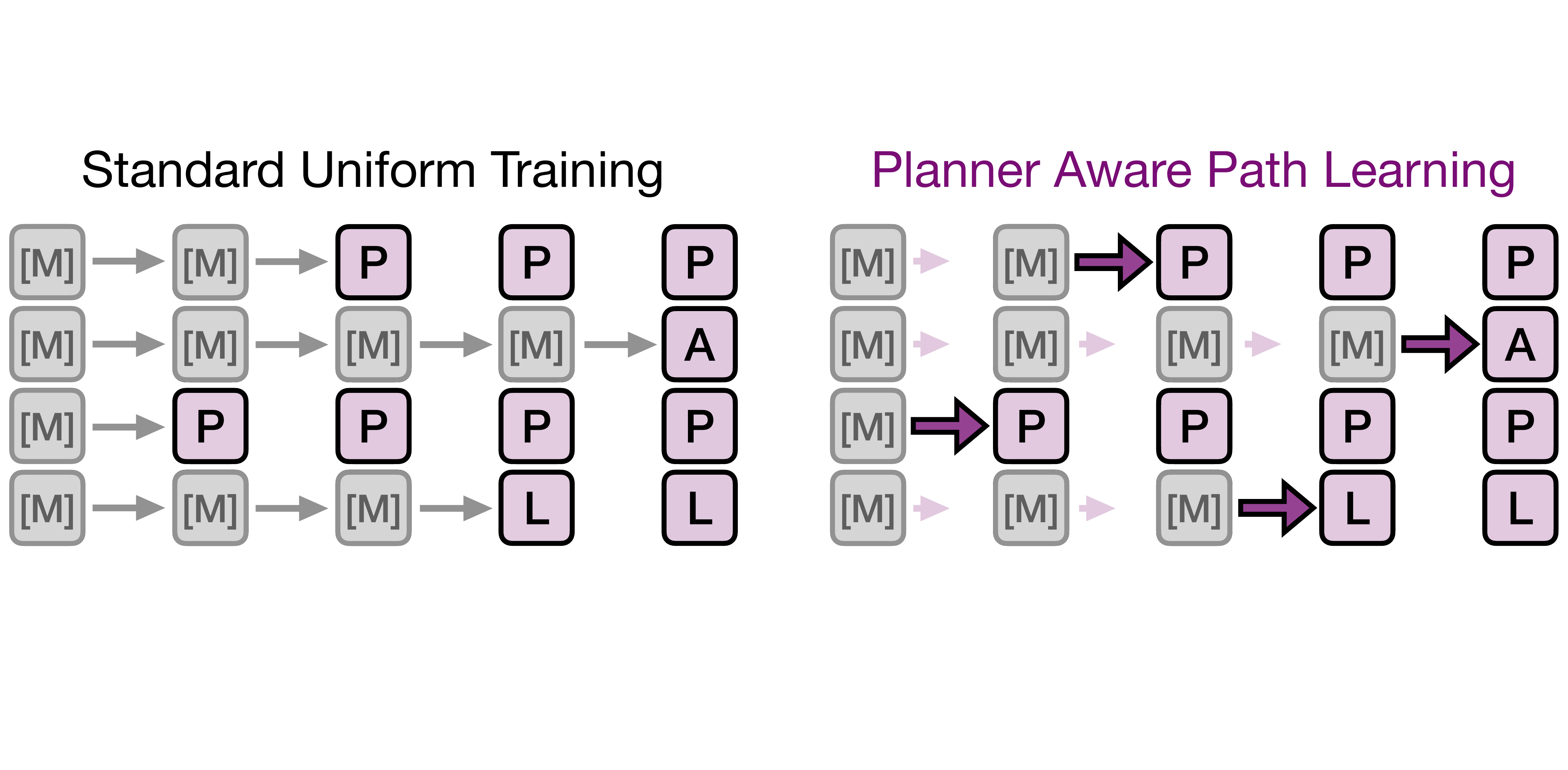}
\vspace{-25pt}
\caption{\textbf{Planner-Aware Path Learning (PAPL) resolves training–inference mismatch in DLMs.} Standard uniform training for DLMs (left) applies a uniform loss across all masked positions, distributing capacity over regions that inference-time planners never traverse. PAPL (right) introduces planner-aware weights into the loss, aligning training with the planner’s preferred trajectories (outlined arrows) and eliminating training-inference mismatch.}
\label{fig:method_overview}
\end{figure}

\section{Method: Path Learning}
\label{sec:method}

\looseness=-1
We now present \emph{Planner-aware Path Learning} (PAPL), aiming to align training with planner-based inference. In~\S\ref{subsec:pl-setup}, we introduce a general formulation of sampling with a planner and derive the corresponding transition probabilities. In~\S\ref{subsec:greedyancestral}, we show that greedy sampling modifies these dynamics in a way that departs from the MDLM ELBO, motivating the need for a new objective. We then derive a planner-aware ELBO in(P-ELBO)~\S\ref{subsec:generalelbo}, from which different sampling schemes—including uniform and greedy ancestral—emerge as special cases (\S\ref{app:instantiations}). Finally, in \S\ref{subsec:efficientimplementation}, we simplify the P-ELBO into an efficient training objective, leading to our final efficient (PAPL) algorithm (Alg.~\ref{alg:papl_training}).

\looseness=-1
\xhdr{Discrete-time Markov Chains}
The starting point for our analysis is to re-examine the ELBO in~\cref{eq:AOARMELBO} through the lens of a continuous time Markov chain (CTMC)~\citep{campbell2022continuoustimeframeworkdiscrete,DFM,Lou2023DiscreteDM,Sun2022}. From the CTMC perspective, the sampling path of a DLM can be compared against a family of reference chains parameterized by a sample $\mathbf{x}_0$, which generates that specific datum. For ease of presentation, we forego lifting to this continuous perspective and instead compute the ELBO via a \emph{discrete-time} Markov chain of the DLM $X^{\theta}$ to that of a family of reference discrete Markov chains $Y$, before taking appropriate limits to recover the continuous perspective. Specifically, for a datum $\mathbf{x}_0$ we have discrete Markov chains of length $L$ which satisfy $X^\theta_0\overset{d}{=}Y^{\mathbf{x}_0}_0$ and $Y^{\mathbf{x}_0}_L=\mathbf{x}_0$.

\looseness=-1
We now recall that the transition matrix $Q$ for a discrete time Markov chain $X$ encodes $Q(y,x)=\mathbb{P}(X_{k+1}=y|X_k=x)$.
From this perspective, for vanilla DLMs we have $p^{\text{unif}}_\theta(\mathbf{x})=\mathbb{P}(X^\theta_L=\mathbf{x})$ where $X^\theta_0=[\mathbf{m}]^L$ and $X^\theta$'s dynamics are described the transition matrix, given by:
\begin{align*}
Q^\theta(\mathbf{y},\mathbf{x})=\begin{cases}
\text{Cat}(y^i;D^i_\theta(\mathbf{x}))/N_M(\mathbf{x})&,d_{\text{HAM}}(\mathbf{x},\mathbf{y})=1,x^i=\mathbf{m},y^i\neq \mathbf{m}\\ 
0&,\text{ otherwise}
\end{cases}, 
\end{align*}
\looseness=-1
where $d_{\text{HAM}}$ is the hamming distance and $\mathbf{x},\mathbf{y}\in\mathcal{V}^L$. We compare to $Y^{\mathbf{x}_0}$ with $Y^{\mathbf{x}_0}_0=[\mathbf{m}]^L$ and define the transition matrices as follows:
\begin{align}\label{eq:vanillatransitionmatrix}
R(\mathbf{y},\mathbf{x};\mathbf{x}_0)=\begin{cases}
\text{Cat}(y^i;\delta(x^i_0))/N_M(\mathbf{x})&,d_{\text{HAM}}(\mathbf{x},\mathbf{y})=1,x^i=\mathbf{m},y^i\neq \mathbf{m}\\ 
0&,\text{ otherwise}
\end{cases}. 
\end{align}

\looseness=-1
In this view, the second expression of \eqref{eq:AOARMELBO} is can be rewritten as:
\begin{equation}
\label{eq:AOARMELBODTMCform}
\mathcal{E}^{\theta,\text{unif}}(\mathbf{x}_0)=L\mathbb{E}_{k\sim\text{Unif}([0:L-1])}\left[\mathbb{E}_{\mathbf{x}_k\sim r_k(\cdot;\mathbf{x}_0)}\left[\sum_{\mathbf{y}\in \mathcal{V}^L}R(\mathbf{y},\mathbf{x}_k;\mathbf{x}_0))\log\left(\frac{Q^\theta(\mathbf{y},\mathbf{x}_k)}{R(\mathbf{y},\mathbf{x}_k;\mathbf{x}_0)}\right)\right]\right],
\end{equation}
\looseness=-1
where we set $r_k(\mathbf{x};\mathbf{x}_0)=\mathbb{P}(Y^{\mathbf{x}_0}_k=\mathbf{x})$. We rederive the ELBO from this perspective by stating the (discrete) path wise KL divergence in our Proposition~\S\ref{prop:ELBOviaDTMC}. Importantly, this ELBO enjoys a simple interpretation in how the denoised coordinates are chosen---a fact which will later develop to incorporate coordinates chosen under a planner in~\cref{prop:general_ELBO}. Moreover, this perspective is more easily amenable to modifying the sampling dynamics for $X^\theta$ and deducing the corresponding ELBO.

\subsection{Reverse Dynamics with a Planner}
\label{subsec:pl-setup}


\looseness=-1
We begin by introducing a planner function $G_\phi:\mathcal{V}^L\times \mathcal{V}^L\rightarrow \Delta^L$ that modifies the sampling process by selecting in reverse transition which coordinate in the sequence to be denoised next. 
For simplicity, we assume the planner only selects masked positions, i.e.\ $\text{Cat}(i;G_\phi(\mathbf{z},\mathbf{x}))=0$ whenever $x^i\neq \mathbf{m}$---matching greedy ancestral sampling~\citep{nie2024scalingmaskeddiffusionmodels}. Under these dynamics, each backwards step can be decomposed into two substeps as follows: 1.) starting from the current sequence $\mathbf{x}_k$, the denoiser produces candidate predictions $\mathbf{z}\sim D_\theta(\mathbf{x}_k)$ for all positions. 2.) The planner then samples an index $i \sim G_\phi(\mathbf{z},\mathbf{x}_k)$, and the token at this index is updated by setting $x^i_{k+1}=z^i$ while all other positions remain unchanged. Thus, each transition from $\mathbf{x}_k$ to $\mathbf{x}_{k+1}$ differs at exactly one coordinate.

\looseness=-1
The probability  of unmasking the $i$-th coordinate of $\mathbf{x}_k$ to token $y$, after marginalizing over $\mathbf{z}$ can be explicitly written as a transition kernel $q^i_{\theta,\phi}(y|\mathbf{x}_k)$ as follows:
\begin{align}
q^i_{\theta,\phi}(y|\mathbf{x}_k)=\text{Cat}\left(y;D^i_\theta(\mathbf{x}_k)\right)F_{\theta,\phi}(\mathbf{x}_k,y,i)\label{eq:planned_transition_probs}\\
F_{\theta,\phi}(\mathbf{x},y,i):=\mathbb{E}_{\mathbf{z}\sim D_\theta(\mathbf{x})}\left[\text{Cat}\left(i;G_\phi(\mathbf{z}^{-i,y},\mathbf{x}_k)\right)\right],\label{eq:Gtilde}
\end{align}
where $\mathbf{z}^{-i,y}$ denotes the same sample $\mathbf{z}$ except that the $i$-th component has been replaced with $y$.

\cut{We begin by describing an algorithm in which samples are generated from a fully masked sequence via iteratively unmasking one coordinate position at a time according to a denoiser's output, but where the coordinate to be denoised is selected is informed by a planner $G_\phi:\mathcal{V}^L\times \mathcal{V}^L\rightarrow \Delta^L$. 

We then derive the one-step transition probabilities associated to a sequence being generated by such an algorithm. 

We begin by modifying the dynamics of vanilla DLM sampling to include sampling the dimension to be denoised via a 

In particular, the first argument to $G_\phi$ will be a sample taken from $D_\theta$, and the second will be the current partially denoised sample. For simplicity and for the sake of specializing to the case of greedy ancestral sampling more easily, we will assume that $\text{Cat}\left(i;G_\phi(\mathbf{z},\mathbf{x})\right)=0$ when $x^i\neq \mathbf{m}$. The Gillespie sampler for such a sampling methodology per Alg. \ref{alg:plannedsampling}.
\begin{boxedsteps}[Sampling with a Planner]\label{alg:plannedsampling}
For $k=0,\dots,L-1$, taking $\mathbf{x}_0=(\mathbf{m},\dots,\mathbf{m})$:
\begin{enumerate}
\item Sample $z^i\sim D^i_\theta(\mathbf{x}_k)$ independently for $i=1,\dots,L$
\item Sample $i\sim G_\phi(\mathbf{z},\mathbf{x}_k)$
\item Set $x^i_{k+1}=z^i$ and $x^j_{k+1}=x^j_k$ for $j\neq i$
\end{enumerate}
\end{boxedsteps}

In \S \ref{subsec:proofs_for_method1} we provide psuedocode formally describing this sampling method for reference.

Observe that under the sampling dynamics \ref{alg:plannedsampling}, $\mathbf{x}_{k+1}$ is the same as $\mathbf{x}_k$ but with one of $\mathbf{x}_k$'s masked coordinates unmasked to some value in $\mathcal{V}$. Letting $q^i_{\theta,\phi}(y|\mathbf{x}_k)$ represent the probability of unmasking the $i$'th coordinate of $\mathbf{x}_k$ to $y$ to form $\mathbf{x}_{k+1}$, marginalizing out $\mathbf{z}$ shows (see \ref{subsec:proofs_for_method1}) for $i$ such that $x_k^i=\mathbf{m}$:
\begin{align}
q^i_{\theta,\phi}(y|\mathbf{x}_k)=\text{Cat}\left(y;D^i_\theta(\mathbf{x}_k)\right)F_{\theta,\phi}(\mathbf{x}_k,y,i)\label{eq:planned_transition_probs}\\
F_{\theta,\phi}(\mathbf{x},y,i):=\mathbb{E}_{\mathbf{z}\sim D_\theta(\mathbf{x})}\left[\text{Cat}\left(i;G_\phi(\mathbf{z}^{-i,y},\mathbf{x})\right)\right],\label{eq:Gtilde}
\end{align}
and for $y\in\mathcal{V}$, we define $\mathbf{z}^{-i,y}\in \mathcal{V}^L$ be equal to $\mathbf{z}$ but 
only the $i$-th value is replaced with $y$, $\left[ x^0, x^1, \dots,x^i=y, x^{i+1}, \dots \right]$ .
}

\subsection{Greedy Ancestral Violates the Vanilla DLM ELBO}\label{subsec:greedyancestral}

\looseness=-1
Greedy ancestral sampling~\citep{nie2024scalingmaskeddiffusionmodels}, as widely used in the literature~\citep{gong2024scalingdiffusionlanguagemodels,besnier2025halton} such as MaskGIT~\citep{Chang_2022_CVPR}, employs a specific choice of planner which selects the most confident position according to the denoiser itself,
\begin{align}\label{eq:greedyancestralG}
G_\phi(\mathbf{z},\mathbf{x}_k)=\delta\left(\argmax_{j:x_k^j= \mathbf{m}} \text{Cat}(z^j;D_\theta(\mathbf{x}_k))\right).
\end{align}

\looseness=-1
Unfortunately, when using greedy sampling, the standard DLM ELBO may not, in fact, satisfy the ELBO inequality. Using the transition probabilities~\eqref{eq:planned_transition_probs}, we prove the following:
\begin{mdframed}[style=MyFrame2]
\begin{restatable}{proposition}{greedynotanELBO}
\label{prop:greedynotanELBO}
For $p^\text{greedy}_\theta(\mathbf{x}_0)$ defined with $G_\phi$ in \eqref{eq:greedyancestralG} \rev{and $D_\theta$ an imperfect denoiser}, we may have
\begin{align*}
\log(p^{\text{greedy}}_\theta(\mathbf{x}_0))<\mathcal{E}^{\theta,\text{unif}}(\mathbf{x}_0),
\end{align*}
where $\mathcal{E}^{\theta,\text{unif}}(\mathbf{x}_0)$ is as in \eqref{eq:AOARMELBO}.
\end{restatable}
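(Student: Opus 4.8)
The statement is an existence claim, so it suffices to exhibit a single denoiser $D_\theta$ and a single target $\mathbf{x}_0$ for which the inequality is strict. Since the proposition imposes no constraint on $D_\theta$ beyond its being a valid map $\mathcal{V}^L\to(\Delta^d)^L$, my plan is to build an explicit minimal counterexample: take $L=2$ with a vocabulary of two ordinary tokens plus the mask, and assign the denoiser's outputs at the handful of reachable partially-masked states by hand. The mechanism I want to exploit is that the greedy planner \eqref{eq:greedyancestralG} pins the unmasking order to the denoiser's confidences, which need not be the order that best explains $\mathbf{x}_0$, whereas $\mathcal{E}^{\theta,\text{unif}}$ in \eqref{eq:AOARMELBO} averages the log-likelihoods over all orders. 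If greedy is forced into the order assigning $\mathbf{x}_0$ the smaller log-likelihood, its marginal will drop below the order-averaged bound.

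Concretely, first I would force position $1$ to always win the greedy $\argmax$ out of the fully masked state by setting $D^1_\theta([\mathbf{m}]^2)=\delta(a)$, a point mass (or a sharp near-point-mass to remain in the interior), while choosing $D^2_\theta([\mathbf{m}]^2)$ spread out enough that its maximal confidence is strictly below $1$. Then from $[\mathbf{m}]^2$ the planned dynamics of \S\ref{subsec:pl-setup} deterministically unmask position $1$ to $a$ and only afterwards unmask position $2$ from $(a,\mathbf{m})$. Hence the greedy marginal collapses onto a single path, $p^{\text{greedy}}_\theta((a,b))=\text{Cat}(b;D^2_\theta((a,\mathbf{m})))$, which I can read off from the transition kernel \eqref{eq:planned_transition_probs} once $F_{\theta,\phi}$ in \eqref{eq:Gtilde} is evaluated for this degenerate planner.

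Next I would expand $\mathcal{E}^{\theta,\text{unif}}((a,b))$ from \eqref{eq:AOARMELBO} by summing over the two permutations of $\{1,2\}$. Writing $A=\log\text{Cat}(a;D^1_\theta([\mathbf{m}]^2))+\log\text{Cat}(b;D^2_\theta((a,\mathbf{m})))$ for the order $1\to2$ and $B=\log\text{Cat}(b;D^2_\theta([\mathbf{m}]^2))+\log\text{Cat}(a;D^1_\theta((\mathbf{m},b)))$ for the order $2\to1$, the bound is $\mathcal{E}^{\theta,\text{unif}}((a,b))=\tfrac12(A+B)$, while the point-mass choice gives $\log p^{\text{greedy}}_\theta((a,b))=A$ (the first term of $A$ vanishes since $\text{Cat}(a;\delta(a))=1$). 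The target inequality $A<\tfrac12(A+B)$ is therefore equivalent to $A<B$, i.e.\ to the requirement that the order $2\to1$ assign $\mathbf{x}_0$ a strictly larger log-likelihood than the order $1\to2$. This is secured by the remaining free entries: take $\text{Cat}(b;D^2_\theta((a,\mathbf{m})))$ small while keeping $\text{Cat}(b;D^2_\theta([\mathbf{m}]^2))$ and $\text{Cat}(a;D^1_\theta((\mathbf{m},b)))$ bounded away from zero, so that $B>A$; substituting back yields $\log p^{\text{greedy}}_\theta((a,b))<\mathcal{E}^{\theta,\text{unif}}((a,b))$.

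The main obstacle, and really the only delicate point, is making the greedy planner genuinely collapse to one deterministic order, which requires care with the $\argmax$ in \eqref{eq:greedyancestralG}: I must ensure position $1$ strictly dominates position $2$ in confidence for every token that can be sampled, so no tie-breaking ambiguity or residual mass leaks into the $2\to1$ order, which is exactly what the point mass at position $1$ guarantees. A secondary point is the bookkeeping around whether the $\argmax$ ranges over masked positions, and verifying that evaluating $F_{\theta,\phi}$ for this planner does reproduce the single-path marginal. Once the order is pinned down the remaining computations are elementary, and since the strict inequality is robust, a near-point-mass perturbation keeps $D_\theta$ in the interior of the simplices should a non-degenerate witness be preferred.
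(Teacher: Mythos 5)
Your proof is correct, and it shares the paper's high-level strategy of an explicit $L=2$ counterexample computed from \eqref{eq:planned_final_distribution}, but the construction and the mechanism it isolates are genuinely different. The paper keeps the denoiser strictly interior, parameterizes it by six constants $c_1,\dots,c_6$, works out the indicator conditions hidden in $F_{\theta,\phi}$, and picks numbers ($c_1=c_3=1/4$, $c_2=c_5=1/2$) for which the two orders actually assign $\mathbf{x}_0$ \emph{equal} likelihood; there the strict inequality comes entirely from the factor $d_2=1-c_1<1$, i.e.\ from probability mass lost when $F_{\theta,\phi}$ marginalizes the planner over the denoiser's sample $\mathbf{z}$ and the sampled token at the losing position sometimes flips the $\argmax$. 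Your point-mass (or near-point-mass) choice at position $1$ kills that effect entirely --- $F_{\theta,\phi}$ becomes exactly $1$ along the forced path and $0$ elsewhere, so $p^{\text{greedy}}_\theta(\mathbf{x}_0)$ collapses to the single-order product --- and the violation instead comes purely from the order-averaging in $\mathcal{E}^{\theta,\text{unif}}$: greedy is locked into the order with the smaller log-likelihood $A$ while the uniform ELBO reports $\tfrac12(A+B)>A$. Both mechanisms are legitimate sources of the gap; yours is the more transparent one pedagogically, the paper's demonstrates that the bound can fail even when greedy follows the better-or-equal order, and your closing perturbation remark correctly handles the (inessential) concern about staying in the interior of the simplex, since the $\argmax$ is strict and all quantities vary continuously there.
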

\end{mdframed}

\looseness=-1
We prove Proposition~\ref{prop:greedynotanELBO}in~\S\ref{subsec:greedynotanELBOproof}. The key takeaway is that the ELBO in~\eqref{eq:AOARMELBO} is only valid for the uniform unmasking process $p^{\text{unif}}_\theta$.  As a result, the reverse dynamics of greedy sampling no longer match those assumed by the standard training loss. In a nutshell, the model is being trained for uniform unmasking, but inference follows a different process. More broadly, this insight applies to any planner: whenever the sampling procedure deviates from uniform, the training objective no longer strictly reflects the quality of the generated samples.

\rev{We remark that the proof of Proposition \ref{prop:greedynotanELBO} hinges on constructing an explicit counter-example, in particular assuming the denoiser is inconsistent along different paths: That is, we assume in our construction that there are two different permutations $\sigma,\bar{\sigma}$ of $[1:L]$ and $\mathbf{x}_0\in\mathcal{V}^L$ a clean sequence in the data distribution such that $\prod_{i=1}^L\text{Cat}\left(x^{\sigma(i)}_0;D_\theta^i\left(\mathbf{x}_0^{\sigma(<i)}\right)\right)\neq \prod_{i=1}^L \text{Cat}\left(x^{\bar{\sigma}(i)}_0;D_\theta^i\left(\mathbf{x}_0^{\bar{\sigma}(<i)}\right)\right)$. Indeed, for a perfect denoiser, in the above we would have equality for any $\mathbf{x}_0,\sigma,$ and $\bar{\sigma}$, and sampling along any path $\sigma$ would always result in a sample from the data distribution. In this situation, there would be no point of planning a generation path. However, in practice there is no relationship being enforced between these quantities, and it has been observed repeatedly in the literature that the ``path''=``denoising order'' taken greatly influences sample quality - see, e.g. \cite{ou2024} Appendix J.4, \cite{shih2022traininginferenceanyorderautoregressive} Section 4, and \cite{li2021discoveringnonmonotonicautoregressiveorderings} Section 6.}

\cut{
This shows that while\eqref{eq:AOARMELBO} holds for $p^{\text{unif}}_\theta$---i.e., uniformly at random unmasking. Consequently, when greedy sampling is used, it no longer provides the same reverse dynamics for the optimized loss in standard DLM training $\mathcal{L}^{\text{unif}}(\theta)=-\mathbb{E}_{\mathbf{x}_0\sim p_{\text{data}}}[\mathcal{E}^{\theta,\text{mask}}(\mathbf{x}_0)]$. More broadly, any modification to the sampling process also leads to the same conclusion, where $D_{\theta}$ may no longer strictly correspond to the quality of generated samples.}

\cut{
This means that athough the bound \eqref{eq:AOARMELBO} holds for $p^{\text{unif}}_\theta$ obtained from unmasking states uniformly at random, when modifying the sampling algorithm to instead generate from $p^{\text{greedy}}_\theta$, the network's training to make the standard MDLM loss $\mathcal{L}^{\text{unif}}(\theta)=-\mathbb{E}_{\mathbf{x}_0\sim p_{data}}[\mathcal{E}^{\theta,\text{mask}}(\mathbf{x}_0)]$ small no longer provides a guarantee the samples are close to $p_{\text{data}}$. In other words: when one modifies the sampling algorithm used, the loss used for training may no longer strictly correspond to the quality of generated samples. See \S \ref{subsec:greedynotanELBOproof} for proof and additional discussion.
}

\subsection{Planner-Aware Evidence Lower Bound}\label{subsec:generalelbo}
The key mismatch is that vanilla DLM training assumes uniform unmasking, while inference instead follows a planner. To correct this, we next introduce a planner-aware ELBO (P-ELBO) that explicitly accounts for the planner’s role in the reverse dynamics.

\begin{mdframed}[style=MyFrame2]
\begin{restatable}{proposition}{ELBO}\label{prop:general_ELBO}
For any planner $G_\phi$, let $p^{G_\phi}_\theta$ denote the distribution of $\mathbf{x}_L$ obtained via the planner-guided sampling scheme. Then we have the following ELBO:
{\footnotesize\allowdisplaybreaks
\begin{align*}
&\log (p^{G_\phi}_\theta(\mathbf{x}_0))\geq\mathcal{E}^{\theta,\phi}(\mathbf{x}_0)=\mathcal{E}^{\theta,\phi}_1(\mathbf{x}_0)+\mathcal{E}^{\theta,\phi}_2(\mathbf{x}_0),\\
&\mathcal{E}^{\theta,\phi}_1(\mathbf{x}_0)=L\underset{k\sim \text{Unif}([0:L-1])}{\mathbb{E}}\left[\underset{{\mathbf{x}_k\sim r^{G_\phi}_k(\cdot;\mathbf{x}_0)}}{\mathbb{E}}\biggl[\sum_{i=1,x^i_k=\mathbf{m}}^L \text{Cat}(i;G_\phi(\mathbf{x}_0,\mathbf{x}_k))\log\left(\text{Cat}(x_0^i;D^i_\theta(\mathbf{x}_k))\right)\biggr]\right]\\ 
&\mathcal{E}^{\theta,\phi}_2(\mathbf{x}_0)=-L\underset{k\sim \text{Unif}([0:L-1])}{\mathbb{E}}\left[\underset{{\mathbf{x}_k\sim r^{G_\phi}_k(\cdot;\mathbf{x}_0)}}{\mathbb{E}}\biggl[\sum_{i=1,x^i_k=\mathbf{m}}^L \text{Cat}(i;G_\phi(\mathbf{x}_0,\mathbf{x}_k))\log\left(\frac{\text{Cat}(i;G_\phi(\mathbf{x}_0,\mathbf{x}_k))}{F_{\theta,\phi}(\mathbf{x}_k,x_0^i,i)}\right)\biggr]\right],
\end{align*}}
where $r^{G_\phi}_k$ is the distribution at time $k$ of a Markov chain with initial data $(\mathbf{m},\dots,\mathbf{m})$ and transition rates as in \eqref{eq:vanillatransitionmatrix} but with $1/N_M(\mathbf{x})$ replaced by $G_\phi^i(\mathbf{x}_0,\mathbf{x})$.
\end{restatable}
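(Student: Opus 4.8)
The plan is to realize the bound as a path-wise Kullback--Leibler inequality between two discrete-time Markov chains of length $L$, exactly in the spirit of the vanilla derivation captured by \cref{eq:AOARMELBODTMCform}, and then to invoke the general path-wise estimate established in \autoref{prop:ELBOviaDTMC}. Concretely, I would let $X^\theta$ be the planner-guided sampling chain, so that $p^{G_\phi}_\theta(\mathbf{x}_0)=\mathbb{P}(X^\theta_L=\mathbf{x}_0)$ with $X^\theta_0=[\mathbf{m}]^L$, and introduce a reference chain $Y^{\mathbf{x}_0}$ with $Y^{\mathbf{x}_0}_0=[\mathbf{m}]^L$ that hits $\mathbf{x}_0$ at step $L$ almost surely. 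Since both chains unmask exactly one coordinate per step and share the same deterministic initial distribution, \autoref{prop:ELBOviaDTMC} yields
$$\log p^{G_\phi}_\theta(\mathbf{x}_0)\ \geq\ \sum_{k=0}^{L-1}\mathbb{E}_{\mathbf{x}_k\sim r^{G_\phi}_k(\cdot;\mathbf{x}_0)}\Bigl[\textstyle\sum_{\mathbf{y}} R^{G_\phi}(\mathbf{y},\mathbf{x}_k;\mathbf{x}_0)\log\tfrac{Q^{\theta,\phi}(\mathbf{y},\mathbf{x}_k)}{R^{G_\phi}(\mathbf{y},\mathbf{x}_k;\mathbf{x}_0)}\Bigr],$$
so the whole argument reduces to identifying the two transition kernels and evaluating this per-step term.

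The first step is to read off the model kernel $Q^{\theta,\phi}$ directly from the planner-guided dynamics: for $\mathbf{y}$ obtained from $\mathbf{x}$ by unmasking a single masked coordinate $i$ to $y^i$, the marginalization over the denoiser sample $\mathbf{z}$ in \cref{eq:planned_transition_probs,eq:Gtilde} gives $Q^{\theta,\phi}(\mathbf{y},\mathbf{x})=\text{Cat}(y^i;D^i_\theta(\mathbf{x}))\,F_{\theta,\phi}(\mathbf{x},y^i,i)$. The second step is to define the reference kernel by mimicking \cref{eq:vanillatransitionmatrix} but replacing the uniform selection weight $1/N_M(\mathbf{x})$ with the planner weight evaluated at the ground truth, i.e.\ $R^{G_\phi}(\mathbf{y},\mathbf{x};\mathbf{x}_0)=\text{Cat}(y^i;\delta(x^i_0))\,G^i_\phi(\mathbf{x}_0,\mathbf{x})$ on single-coordinate unmaskings and $0$ otherwise. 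Because $\text{Cat}(y^i;\delta(x_0^i))=\indic{y^i=x_0^i}$ and the planner is supported on masked positions, the reference chain only ever fills a masked position with its true value and unmasks one new coordinate per step; hence $Y^{\mathbf{x}_0}_L=\mathbf{x}_0$ almost surely and its time-$k$ marginal is precisely the $r^{G_\phi}_k$ named in the statement.

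The third step is the per-step algebra. Since $R^{G_\phi}(\mathbf{y},\mathbf{x}_k;\mathbf{x}_0)$ vanishes unless $\mathbf{y}$ unmasks a single masked coordinate $i$ to the correct token $x_0^i$, the inner sum over $\mathbf{y}$ collapses to a sum over masked indices $i$, with $\mathbf{y}$ forced to equal $\mathbf{x}_k$ with position $i$ set to $x_0^i$. Plugging in the two kernels and writing $G^i_\phi(\mathbf{x}_0,\mathbf{x}_k)=\text{Cat}(i;G_\phi(\mathbf{x}_0,\mathbf{x}_k))$, the per-step term becomes $\sum_{i:\,x_k^i=\mathbf{m}} G^i_\phi(\mathbf{x}_0,\mathbf{x}_k)\log\bigl(\text{Cat}(x_0^i;D^i_\theta(\mathbf{x}_k))\,F_{\theta,\phi}(\mathbf{x}_k,x_0^i,i)/G^i_\phi(\mathbf{x}_0,\mathbf{x}_k)\bigr)$. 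Splitting this logarithm into $\log\text{Cat}(x_0^i;D^i_\theta(\mathbf{x}_k))$ and $\log\bigl(F_{\theta,\phi}/G^i_\phi\bigr)=-\log\bigl(G^i_\phi/F_{\theta,\phi}\bigr)$ separates the bound into the two summands, and rewriting $\sum_{k=0}^{L-1}$ as $L\,\mathbb{E}_{k\sim\text{Unif}([0:L-1])}$ produces exactly $\mathcal{E}^{\theta,\phi}_1(\mathbf{x}_0)+\mathcal{E}^{\theta,\phi}_2(\mathbf{x}_0)$.

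I expect the main obstacles to be two bookkeeping points rather than any deep difficulty. First, justifying that the path-wise KL bound applies requires absolute continuity of $R^{G_\phi}$ with respect to $Q^{\theta,\phi}$ along admissible transitions: whenever $G^i_\phi(\mathbf{x}_0,\mathbf{x})>0$ one needs $\text{Cat}(x_0^i;D^i_\theta(\mathbf{x}))>0$ and $F_{\theta,\phi}(\mathbf{x},x_0^i,i)>0$, which I would secure by the standing positivity assumption on the denoiser and planner (otherwise the corresponding log-term is $-\infty$ and the bound holds vacuously). Second, care is needed with the asymmetry between the two chains: the model kernel feeds the \emph{denoiser sample} $\mathbf{z}$ into $G_\phi$, hence the marginalized factor $F_{\theta,\phi}$, whereas the reference kernel feeds the \emph{ground truth} $\mathbf{x}_0$; keeping these two planner evaluations distinct is precisely what generates the correction term $\mathcal{E}^{\theta,\phi}_2$ and distinguishes the P-ELBO from the planning-agnostic bound \cref{eq:AOARMELBO}.
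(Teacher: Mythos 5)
Your proposal is correct and follows essentially the same route as the paper's proof in \S\ref{subsec:proofofelbo}: identify the planner-guided chain's transition kernel from \eqref{eq:planned_transition_probs}, choose the reference chain by replacing $D_\theta$ with $\delta(\mathbf{x}_0)$ while keeping the planner weight $\text{Cat}(i;G_\phi(\mathbf{x}_0,\mathbf{x}))$, apply Proposition~\ref{prop:ELBOviaDTMC}, and split the logarithm to obtain $\mathcal{E}^{\theta,\phi}_1+\mathcal{E}^{\theta,\phi}_2$. The only addition is your explicit remark on absolute continuity of the reference kernel with respect to the model kernel, which the paper leaves implicit but which is handled exactly as you suggest.
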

\end{mdframed}

\cut{
where $r^{G_\phi}_k(\mathbf{x};\mathbf{x}_0)=\mathbb{P}(Y^{\mathbf{x}_0}_k=\mathbf{x})$ for $Y^{\mathbf{x}_0}$  the discrete time Markov chain with rate matrix \eqref{eq:discrete_time_conditional}
\begin{align}\label{eq:discrete_time_conditional}
R^{G_\phi}(\mathbf{y},\mathbf{x};\mathbf{x}_0)=\begin{cases}\text{Cat}(i;G_\phi(\mathbf{x}_0,\mathbf{x}))\text{Cat}(y^i;\delta(x^i_0)),&\quad d_{\text{HAM}}(\mathbf{x},\mathbf{y})=1,x^i\neq y^i,x^i=\mathbf{m}\\ 
0,&\text{otherwise}
\end{cases}
\end{align}
and $Y^{\mathbf{x}_0}_0=(\mathbf{m},\dots,\mathbf{m})$,
}

\looseness=-1
\rev{A proof of this ELBO from a self-contained, discrete-time Markov chain perspective can be found in \S\ref{subsec:proofofelbo} and from a continuous time Markov chain perspective in \S\ref{sec:CTMCversionofELBO}.} The first term $\mathcal{E}_1^{\theta,\phi}$ resembles the standard DLM ELBO: it is the log-probability of predicting the correct token in each masked position, but now \emph{weighted by the probability that the planner would choose that position next}. In other words, it is a planner-weighted cross-entropy.

\looseness=-1
The second term $\mathcal{E}_2^{\theta,\phi}$ is new. It appears only when the planner’s decision can depend on the full clean target sequence $\mathbf{x}_0$. Intuitively, it measures the gap between (a) the “ideal” planner that has access to the ground truth, and (b) the “effective” planner that only relies on the denoiser’s predictions, and mathematically is the negative KL divergence between these two distributions. We highlight that for the uniform planner setting, this term vanishes, recovering the standard DLM ELBO. Putting this together, the planner-aware loss used for training is simply the negative ELBO from \cref{prop:general_ELBO} $\mathcal{L}(\theta,\phi)
= - \mathbb{E}_{\mathbf{x}_0 \sim p_{\text{data}}}\!\left[
   \mathcal{E}^{\theta,\phi}(\mathbf{x}_0)
\right],
$
which minimizes the KL divergence between the planner-guided model distribution $p^{G_\phi}_\theta$ and the data. We curate results on how several common existing instantiations of planned denoising fall into this framework and their corresponding ELBOs in \S \ref{app:instantiations}. \rev{We also prove a result identifying the form of the optimal planner for a fixed denoiser under the trainling loss associated to the ELBO from Proposition \ref{prop:general_ELBO} in \S \ref{sec:minimizerproof}.}

\cut{
Note that $\mathcal{E}_1^{\theta,\phi}$ is the weighted average of the logits in each masked position of $Y^{\mathbf{x}_0}_k$, where the weight is given by the liklihood of choosing that position as the next to unmask at the $k+1$'st step. The term $\mathcal{E}_2^{\theta,\phi}$, which vanishes in the vanilla setting due to the fact that the planning is independent of the target sequence, is essentially the negative $KL$ divergence between the planner $G_\phi(\mathbf{x}_0,\mathbf{x}_k)$ which uses information about the full clean data $\mathbf{x}_0$ and the effective planner $F_{\theta,\phi}(\mathbf{x}_k,x_0^{\cdot},\cdot)$ which estimates the full clean sequence using the average output of the denoiser at the current time step. Recall, as always, the loss associated to the ELBO $\mathcal{E}^{\theta,\phi}$ from Prop. \ref{prop:general_ELBO} used for training is given by:
\begin{align}
\mathcal{L}(\theta,\phi):=-\mathbb{E}_{\mathbf{x}_0\sim p_{\text{data}}}\left[\mathcal{E}^{\theta,\phi}(\mathbf{x}_0\right],
\end{align}
as minimizing this expression minimizes the KL divergence between $p^{G_\phi}_\theta$ and $p_{\text{data}}$ - see \S\ref{subsubsec:roleofELBO}. 
}

\cut{
A training step (assuming batch size 1 for simplicity) associated with the ELBO $\mathcal{E}^{\theta,\phi}$ is then as per Alg. \ref{alg:trainingfromELBO}.
\begin{boxedsteps}[Training step from the ELBO $\mathcal{E}^{\theta,\phi}$]\label{alg:trainingfromELBO}
Here $Y^{\mathbf{x}_0},r^{G_\phi}_k$, and $\mathcal{E}^{\theta,\phi}$ are as in Proposition \ref{prop:general_ELBO}.
\begin{enumerate}
\item Sample a random time $k$ and a data point $\mathbf{x}_0$.
\item Simulate $Y^{\mathbf{x}_0}$ up to time $k$ to obtain a sample from $r^{G_\phi}_k(\cdot;\mathbf{x}_0)$.
\item Compute $-1*$ the sum in the integrand of $\mathcal{E}^{\theta,\phi}(\mathbf{x}_0)$ and perform gradient descent.
\end{enumerate}
\end{boxedsteps}
}

\cut{

Taking $\text{Cat}(i;G_\phi(\mathbf{z},\mathbf{x}))=1/N_M(\mathbf{x})$ when $x^i=\mathbf{m}$ and $0$ otherwise, Alg. \ref{alg:plannedsampling} becomes Alg. \ref{alg:ancestralDLM}. Inserting this choice into Proposition \ref{prop:general_ELBO} and recalling the definition of $F_{\theta,\phi}$ from \eqref{eq:Gtilde}, we see $G_\phi$'s lack of dependence on $\mathbf{z}$ causes $\mathcal{E}_2^{\theta,\phi}(\mathbf{x}_0)=0$, and this immediately recovers the vanilla DLM ELBO \eqref{eq:AOARMELBODTMCform}.

Specializing to the case of greedy ancestral, we can bound the terms $\mathcal{E}_2^{\theta,\phi}$ below to get and observe the paths of $Y^{\mathbf{x}_0}$ become deterministic to get:
\begin{corollary}\label{cor:greedyELBO}
Let $Y_k$ and $j_k$ be defined recursively via $Y_0=(\mathbf{m},\dots,\mathbf{m})$ and:
\begin{align*}
Y^i_k&=\begin{cases}
x_0^{j_{k-1}},&i=j_{k-1}\\ 
Y^i_{k-1},&\text{otherwise}
\end{cases},\quad k=1,\dots,L,\\ 
j_k&=\text{argmax}_{i\in[1:L],Y_k^i=\mathbf{m}}\text{Cat}(x_0^i;D^i_\theta(Y_k)),\quad k=0,\dots,L
\end{align*}
Observe $Y$ depends on the data point $\mathbf{x}_0$, but we suppress this in the notation.
Then for $p_\theta^{\text{greedy}}$ the distribution of $\mathbf{x}_L$ from Alg. \ref{alg:plannedsampling} with $G_\phi$ as in \eqref{eq:greedyancestralG}, we have:
\begin{align*}
\log(p_\theta^{\text{greedy}}(\mathbf{x}_0))&\geq \mathcal{E}^{\theta,\text{greedy}}(\mathbf{x}_0),\\
\mathcal{E}^{\theta,\text{greedy}}(\mathbf{x}_0)&: = L \mathbb{E}_{k\sim \text{Unif}([0:L-1])}\left[\sum_{i=1,Y^i_k=\mathbf{m}}^L\text{Cat}(x_0^i;D^i_\theta(Y_k))\right].
\end{align*}
\end{corollary}

Comparing with \eqref{eq:AOARMELBO}, one sees that to rigorously have an ELBO in the greedy-ancestral setting, one should, rather than accumulate logits along a path chosen uniformly at random, only mask positions in the optimal (as determined by the denoiser) order corresponding to that particular data point.

Finally, we remark that, although, for the sake of exposition, we assume that there is no remasking in \ref{alg:plannedsampling}, the proof methodology used to obtain Proposition \ref{prop:general_ELBO} can readily be extended to encapsulate P2-Topk \zack{insert citation} and RDM \zack{insert citation} wherein unmasked tokens may be remasked according to the planner's output. A general proposition of the form of Proposition \ref{prop:general_ELBO} in the context of a ``P2 style planner'' is provided in \S \ref{subsec:generalizing_to_P2}, and it's specialization to P2-TopK (the analogue of greedy-ancestral in the P2 framework) is contained therein as Corollary \ref{cor:P2ELBO}. 
}

\begin{wrapfigure}{r}{0.55\linewidth}
\vspace{-23pt}
\begin{minipage}[htb]{\linewidth}
\begin{algorithm}[H]
\small
\caption{PAPL Training}
\label{alg:papl_training}
\begin{algorithmic}[1]
\While{not converged}
    \State $\mathbf{x}_0 \sim p_{\text{data}}(x)$
    \State $k \sim \text{Unif}([0:L-1])$
    \State  $\mathbf{x}_k \sim \text{Unif}(\mathcal{X}_{L-k}(\mathbf{x}_0))$
    \State  $w^i \gets \text{Cat}(i; G_\phi^\tau(\mathbf{x}_0, \mathbf{x}_k))$ for all masked $i$
    \State \textit{ // Setting $\alpha=0$ recovers standard DLM training}
    \State $\mathcal{L}_{\text{PAPL}} \gets -\sum \frac{1}{L-k}(1+\alpha w^i) \log \text{Cat}(x_0^i; D_\theta^i(\mathbf{x}_k))$
    \State Update parameters $\theta$ using $\nabla_\theta \mathcal{L}_{\text{PAPL}}$
\EndWhile
\State \textbf{return} Trained denoiser $D_\theta$
\end{algorithmic}
\end{algorithm}
\end{minipage}
\vspace{-12pt}
\end{wrapfigure}

\subsection{Efficient Implementation of Planner Aware Path Learning (PAPL)}\label{subsec:efficientimplementation}





\looseness=-1
Greedy decoding is widely used at inference and often improves sample quality. A natural idea is therefore to train the denoiser under the same greedy planner but with corrections such that we optimize the P-ELBO. Unfortunately, the exact greedy ELBO from Cor.~\ref{cor:greedyELBO} is computationally infeasible: simulating the greedy path for each data point requires $k$ denoiser evaluations at step $k$, whereas vanilla DLM training needs only a single forward pass. Consequently, we aim to design an efficient algorithm that still falls within our theoretical framework of training under planning.

\looseness=-1
\xhdr{Soft greedy planner}  
We relax the deterministic argmax in~\cref{eq:greedyancestralG} into a softmax: 
\[
\text{Cat}(j;G^\tau_\phi(\mathbf{z},\mathbf{x})) \propto
   \exp\!\left(\tfrac{1}{\tau}\log \text{Cat}\left(z^j;D_\theta^j(\mathbf{x})\right)\right),
\]
\looseness=-1
where $\tau$ is the softmax temperature. This assigns higher weight to positions where the denoiser is confident, and reduces to uniform sampling as $\tau\uparrow\infty$ \rev{and greedy sampling as $\tau\downarrow 0$}.
Specializing Prop.~\ref{prop:general_ELBO} to $G_\phi^\tau$ yields a planner-weighted cross-entropy plus a complex correction term as outlined in Corollary~\ref{cor:softmaxelbo}. Detaching gradients through the planner removes the correction term, leaving only the simple weighted cross-entropy. This makes the objective stable and efficient.

\looseness=-1
\xhdr{Stabilization}  
Instead of simulating planner-driven paths to sample $\mathbf{x}_k$, we reuse the vanilla DLM masking scheme: mask $L-k$ positions uniformly at random. This keeps training as cheap as standard DLM. In practice, the pure weighted loss can have high variance, as shown in the training curves Fig.~\ref{fig:papl_instability}. We stabilize this by interpolating with the vanilla DLM loss. The resultant effect is that the uniform weights $1/(L-k)$ are replaced with planner-adjusted weights: $\frac{1}{L-k}(1+\alpha w^i)$, where the weights are $w^i \rev{\propto} \text{Cat}(i;G^\tau_\phi(\mathbf{x}_0,\mathbf{x}_k))$ and $\alpha$ controls the strength of planner weighting.

\looseness=-1
\xhdr{Final objective}  
The resulting \emph{Planner-Aware Path Learning (PAPL)} loss is therefore just the standard masked diffusion cross-entropy, augmented with planner weights:
\begin{align}\label{eq:papl-loss}
\mathcal{L}_{\text{PAPL}}(\theta)
= -\,\mathbb{E}_{\mathbf{x}_0,k,\mathbf{x}_k}\left[
   \sum_{i:\,x_k^i=\mathbf{m}}
   \tfrac{1}{L-k}(1+\alpha w^i)\;
   \log \left(\text{Cat}(x_0^i; D_\theta^i(\mathbf{x}_k))\right)
\right].
\end{align}
\looseness=-1
This amounts to a one-line modification of the vanilla DLM loss, making PAPL easy to adopt. \rev{We detail the connection between the softmax regularized training objective from Corollary \ref{cor:softmaxelbo} and the PAPL training objective \eqref{eq:papl-loss} in \S \ref{subsec:connectionwithsoftmaxcorollary}.}

\cut{\subsection{Efficient Implementation}\label{subsec:efficientimplementation}
Now that we have a rigorous ELBO corresponding to greedy-ancestral sampling as per Corollary \ref{cor:greedyELBO}, we seek to find a computationally viable loss which is informed by the general Proposition \ref{prop:general_ELBO}. Observe that performing Alg. \ref{alg:trainingfromELBO} with $\mathcal{E}^{\theta,\phi}=\mathcal{E}^{\theta,\text{greedy}}$ and hence $Y^{\mathbf{x}_0}=Y$ from Corollary \ref{cor:greedyELBO} is extremely inefficient, because for each sampled $\mathbf{x}_0$ and $k$, one needs to make $k$ function evaluations of the denoiser in order to determine $Y^{\mathbf{x}_0}_k$. This is opposed to performing Alg. \ref{alg:trainingfromELBO} in the case of vanilla DLM (recall \eqref{eq:AOARMELBODTMCform}) where $Y^{\mathbf{x}_0}$ has dynamics described by the one-step transitions as in \eqref{eq:vanillatransitionmatrix}, and hence $Y^{\mathbf{x}_0}_k\sim r_k$ is obtained simply by masking position $L-k$ positions of $\mathbf{x}_0$ uniformly at random. 

We thus seek to approximate the singular greedy ancestral planner of \eqref{eq:greedyancestralG} with one which results in the distribution of the planned reference paths of $Y^{\mathbf{x}_0}$ from Proposition \ref{prop:general_ELBO} to be closer to $r_k$. This is clearly not the case for $Y$ from Corollary \ref{cor:greedyELBO}, whose paths are entirely deterministic as informed by the denoiser.

We choose a natural ``regularized'' approximation using the softmax - that is, we take for $0<\tau$:
\begin{align}\label{eq:softmaxG}
\text{Cat}\left(j;G^\tau_\phi(\mathbf{z},\mathbf{x})\right)&:= \exp\left(\frac{1}{\tau}\log\left(\text{Cat}(z^j;D_\theta^j(\mathbf{x})\right)\right)/C_\tau(\mathbf{z},\mathbf{x})\\
C^\tau(\mathbf{z},\mathbf{x})&:=\sum_{i=1,x^i\neq m}^L\exp\left(\frac{1}{\tau}\log\left(\text{Cat}(z^i;D_\theta^i(\mathbf{x})\right)\right).\nonumber 
\end{align}

Specializing Proposition \ref{prop:general_ELBO} to this choice of planner, bounding the term $\mathcal{E}_2^{\theta,\phi}$ below yields:
\begin{corollary}\label{cor:softmaxelbo}
For $p_{\theta}^{\tau}$ the distribution of $\mathbf{x}_L$ resulting from Alg. \ref{alg:plannedsampling} with $G_\phi=G_\phi^\tau$ as in \eqref{eq:softmaxG} and $r^{\tau}_k(\cdot;\mathbf{x}_0)$ the distribution at time $k$ of the discrete time Markov chain with rate matrix \ref{eq:discrete_time_conditional} with $G_\phi=G_\phi^\tau$, we have:
{\footnotesize
\begin{align*}
\log(p_{\theta}^{\tau}(\mathbf{x}_0))&\geq \mathcal{E}^{\theta,\phi,\tau}_1(\mathbf{x}_0)+\mathcal{E}^{\theta,\phi,\tau}_2(\mathbf{x}_0),\\
\mathcal{E}^{\theta,\phi,\tau}_1(\mathbf{x}_0)&=L \underset{k\sim \text{Unif}([0:L-1])}{\mathbb{E}}\left[\underset{{\mathbf{x}_k\sim r^{\tau}_k(\cdot;\mathbf{x}_0)}}{\mathbb{E}}\biggl[\sum_{i=1,x^i_k=m}^L \text{Cat}(i;G_\phi^\tau(\mathbf{x}_0,\mathbf{x}_k))\log\left(\text{Cat}(x_0^i;D^i_\theta(\mathbf{x}_k))\right)\biggr]\right]\\
\mathcal{E}^{\theta,\phi,\tau}_2(\mathbf{x}_0)&=L \underset{k\sim \text{Unif}([0:L-1])}{\mathbb{E}}\biggl[\underset{{\mathbf{x}_k\sim r^{\tau}_k(\cdot;\mathbf{x}_0)}}{\mathbb{E}}\biggl[\underset{{\mathbf{z}\sim D_\theta(\mathbf{x}_k)}}{\mathbb{E}}\biggl[\sum_{i=1,x^i_k=m}^L \text{Cat}(i;G_\phi^\tau(\mathbf{x}_0,\mathbf{x}_k))\times\nonumber\\
&\qquad\qquad\qquad\qquad\qquad\qquad\qquad\qquad\qquad\qquad\times\log\left(\frac{C^\tau(\mathbf{x}_0,\mathbf{x}_k)}{C^\tau(\mathbf{z}^{-i},\mathbf{x}_k)}\right)\biggr]\biggr]\biggr],
\end{align*}}
where here the notation $\mathbf{z}^{-i}$ means the i'th coordinate of $\mathbf{z}$ is replaced by the i'th coordinate of $\mathbf{x}_0$.
\end{corollary}

Starting with the loss resulting from Corollary \ref{cor:softmaxelbo}, we perform now a careful series of ablations to obtain the path learning loss used in Alg. \ref{alg:pathlearning}.

Firstly, we find that for finite $\tau$, one may replace $r_k^\tau$ in Corollary \ref{cor:softmaxelbo}/ Alg. \ref{alg:trainingfromELBO} with $r_k$ from Vanilla DLM \eqref{eq:AOARMELBODTMCform} - that is, we may, rather than sampling $\mathbf{x}_0$ and $k$ and simulating the trajectory $Y^{\mathbf{x}_0}$ to choose those masks placement (which requires $k$ function evaluations of $D_\theta$), we simply choose $L-k$ masked positions uniformly at random. We conjecture that this is due to the fact that the path taken to obtain $r^\tau_k$ from Corollary \ref{cor:softmaxelbo} is much closer to having distribution $r_k$ from \eqref{eq:AOARMELBODTMCform} compared to the deterministic trajectories of $Y_k$ from Corollary \ref{cor:greedyELBO}. Indeed, sending $\tau\rightarrow \infty$, $r_k^\tau$ approaches $r_k$. Experimental results corresponding to this ablation can be found in \zack{insert reference to 2 step experiment}

Next, we find that in practice we may detach gradients of the weights from $G^\tau_\phi$ even though they depend on the denoiser itself. As $C^\tau$ is a function of the weights (see \eqref{eq:softmaxG}), this means we simply ignore the term $\mathcal{E}^{\theta,\phi,\tau}_2(\mathbf{x}_0)$ from Corollary \ref{cor:softmaxelbo}. Experiments corresponding to this ablation can be found in \zack{reference detach grad experiment}.

Finally, we observe that the gradient-detached loss exhibits high variance during training (as shown in Figure~\ref{fig:papl_instability}) which can be stabilized by interpolating with the standard DLM loss. As we have already removed $\mathcal{E}_2^{\theta,\phi,\tau}$ and replaced $r^\tau_k$ by $r_k$ in $\mathcal{E}^{\theta,\phi,\tau}_1$ at this stage, one sees that comparing Corollary \ref{cor:softmaxelbo} with \eqref{eq:AOARMELBO}, such an interpolation results in simply replacing the weights $\frac{1}{L-k}$ in \eqref{eq:AOARMELBO} with $\frac{1}{L-k}+\lambda w^i$, where $w^i=\text{Cat}(i;G_\phi^\tau(\mathbf{x}_0,\mathbf{x}_k))$ are the weights from $\mathcal{E}_1^{\theta,\phi,\tau}$ in Corollary \ref{cor:softmaxelbo} and $\lambda?0$ is some interpolation constant. We found in practice that choosing $\lambda$ to depend on the number of masks as $\lambda =\alpha/(L-k)$ yields good stability for the interpolated loss in practice. \zack{Experiments on this?}

Performing the softmax regularization and this series of ablations thus yields the PAPL loss for greedy-ancestral sampling from DLMs:
{\small
\begin{align}
\mathcal{L}^{\tau}(\theta)&=-L\mathbb{E}_{\mathbf{x}_0\sim p_{data}}\biggl[\mathbb{E}_{k\sim \text{Unif}([0:L-1])}\biggl[\mathbb{E}_{\mathbf{x}_k\sim \text{Unif}(\mathcal{X}_{L-k}(\mathbf{x}_0))}\biggl[\sum_{i=1,x^i_k=m}^L \frac{1}{L-k}\biggl(1+\text{Cat}(i;G_\phi^\tau(\mathbf{x}_0,\mathbf{x}_k))\biggr)\times\nonumber\\ 
&\qquad\qquad\qquad\qquad\qquad\qquad\qquad\qquad\qquad\qquad\qquad\qquad\times\log\left(\text{Cat}(x_0^i;D^i_\theta(\mathbf{x}_k))\right)\biggr]\biggr]\biggr],
\end{align}}
where $\mathcal{X}_k(\mathbf{x}_0)$ is as in \eqref{eq:AOARMELBO} and $G_\phi^\tau$ is as in \eqref{eq:softmaxG}.

Our training method hence becomes that described in Alg. \ref{alg:pathlearning}.
}

\section{Experiments}

\looseness=-1
We evaluate \shortname on protein sequence generation, text and code generation, domains that demand non-trivial structure. Across domains, we compare against (i) the masked diffusion language model (DLM) baseline (same architecture, size, and training setup as PAPL), (ii) prior autoregressive and diffusion-based models (cf.~\S\ref {app:exp-detail} for details).

\subsection{Protein Sequence Generation}

\begin{wrapfigure}{r}{0.38\linewidth}
    \vspace{-40pt}
    \centering
    \includegraphics[width=1\linewidth]{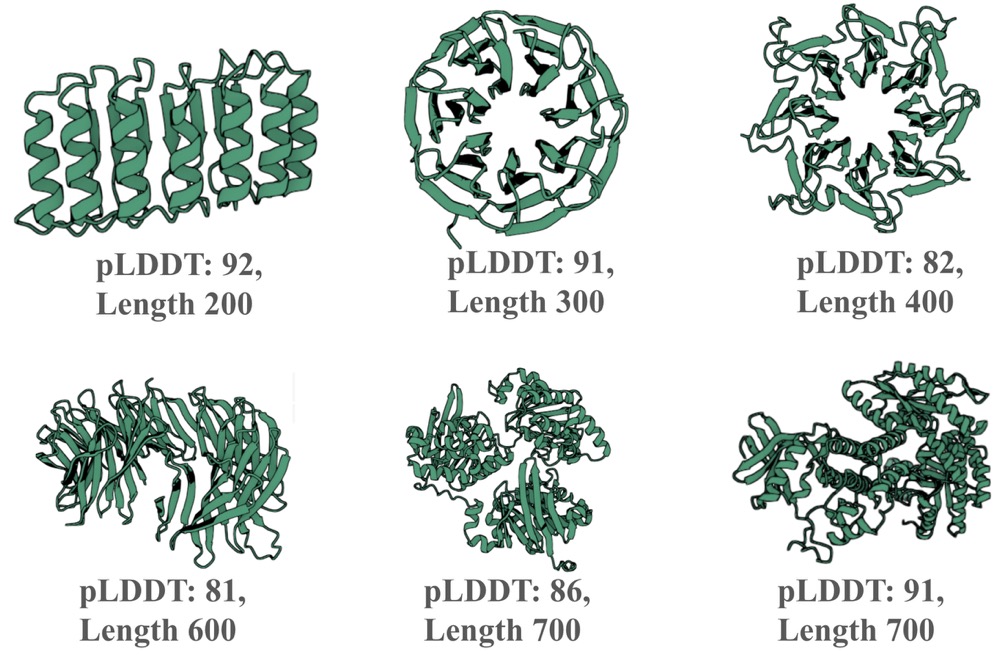}
    \caption{Visualization of PAPL generated proteins folded with ESMFold.}
    \label{fig:protein-vis}
     \vspace{-10pt}
\end{wrapfigure}
\textbf{Setup}. We evaluate \shortname on the task of protein sequence generation. We train a 150M DLM baseline and a PAPL-augmented variant under identical configurations. During inference, both models use P2 sampling. For evaluation, each model generates 100 sequences at lengths ${200,300,\dots,800}$, which are folded into 3D structures using ESMFold~\citep{esm2}. Structural quality is measured by pLDDT, pTM, and pAE; diversity is measured by token-level entropy and sequence uniqueness. To obtain a single robust metric of functionality, we define a sequence as foldable if it simultaneously satisfies pLDDT $>80$, pTM $>0.7$, and pAE $<10$. Comparisons include diffusion-based baselines (EvoDiff~\citep{Alamdari2024ProteinGW}, DPLM~\citep{DPLM}) and autoregressive baselines (ESM3~\citep{esm3}, ProGen2~\citep{Nijkamp2022ProGen2ET}). See~\S\ref{sec:protein_benchmark_eval} for further details.

\begin{table}[htb]
\caption{\small Protein sequence generation benchmark. We evaluate structure quality via pLDDT, pTM, and pAE, and diversity via token entropy and sequence uniqueness. Foldability is the percentage of sequences satisfying pLDDT $>80$, pTM $>0.7$, and pAE $<10$. }
\label{tab:protein_performance}
\centering
\scriptsize
\vspace{-10pt}
\begin{tabular}{l l rrrrrr} 
\toprule
& Model & pLDDT↑ & pTM↑ & pAE↓ & Foldability (\%)↑ & Entropy↑ & Diversity (\%)↑ \\
\midrule
\multirow{4}{*}{\rotatebox[origin=c]{90}{Large}} 
  & ESM3            & 34.13 & 0.23 & 24.65 & 1.50 & 3.99 & \textbf{93.44} \\
  & ProGen2-medium  & 57.94 & 0.38 & 20.81 & 12.75 & 2.91 & 91.45 \\
  & ProGen2-large   & 55.07 & 0.35 & 22.00 & 11.87 & 2.73 & 91.48 \\
  & DPLM-650M       & 79.53 & 0.66 & 11.85 & 49.14 & 3.18 & 92.22 \\
\midrule
\multirow{5}{*}{\rotatebox[origin=c]{90}{150M-scale}} 
  & EvoDiff         & 31.84 & 0.21 & 24.76 & 0.43 & \textbf{4.05} & 93.19 \\
  & ProGen2-small   & 49.38 & 0.28 & 23.38 & 4.48 & 2.55 & 89.31 \\
  & DPLM-150M       & 80.23 & 0.65 & 12.07 & 48.14 & 3.14 & 92.80 \\
  & DLM-150M        & 81.32 & 0.65 & 12.00 & 42.43 & 3.21 & 92.45 \\
  & \textbf{DLM-150M + PAPL (ours)} & \textbf{81.48} & \textbf{0.72} & \textbf{8.97} & \textbf{59.40} & 3.12 & 91.73 \\
\bottomrule
\end{tabular}
\vspace{-10pt}
\end{table}

\looseness=-1
\xhdr{Results} Table~\ref{tab:protein_performance} reports quantitative results. Compared to the DLM-150M baseline, PAPL achieves higher pLDDT ($81.48$ vs.\ $81.32$), stronger pTM ($0.72$ vs.\ $0.65$), lower pAE ($8.97$ vs.\ $12.00$), and yields a 40\% relative increase in foldability (59.40\% vs. 42.43\%). Importantly, entropy (3.12) and diversity (91.73\%) remain on par with the baseline, confirming that improved folding does not induce collapse. 
PAPL-trained models outperform EvoDiff and ESM3 in all structural metrics, even when compared to  DPLM-650M and ProGen2-2.7B. Figure~\ref{fig:protein-vis} visualizes 3D folds of sequences generated by PAPL, which exhibit well-formed helices and sheets with coherent tertiary organization. 

\looseness=-1
\xhdr{Takeaway}
By aligning the training objective with the inference-time planner, PAPL significantly improves the structural plausibility of generated proteins. This is achieved under identical model configurations and without sacrificing diversity, outperforming all baselines.

\subsection{Text Generation}
\label{sec:textgen}

\xhdr{Setup} 
We evaluate \shortname on unconditional text generation using the \textsc{OpenWebText}~\citep{Gokaslan2019OpenWeb} (OWT) corpus, a large-scale collection of web pages designed to replicate the distribution of OpenAI’s WebText. Text is tokenized with the GPT-2 byte-pair tokenizer, and sequences are wrapped to a maximum length of $L=1024$ tokens. We compare against both autoregressive and diffusion-based baselines, including an autoregressive GPT-style language model, standard MDLMs, and MDLMs equipped with forward–backward (FB) and discrete flow matching (DFM) correctors. All checkpoints are reused from prior work to ensure comparability.  
In inference, we generate 5,000 sequences using planner-based decoding with P2 sampling~\citep{peng2025pathplanningmaskeddiffusion}. We evaluate generation quality and diversity primarily with MAUVE~\citep{pillutla2021mauve}, which measures the divergence between generated and reference distributions. As secondary metrics, we also report generative perplexity (Gen PPL) and entropy. 

\begin{table*}[htb]
\centering
\caption{
\looseness=-1 Unconditional text generation\cut{ results on \textsc{OWT}}. 
For each sampling step $T$, we report MAUVE (higher is better), generative perplexity (Gen PPL; lower is better), and entropy (higher is better). 
$\dagger$ indicates nucleus sampling. For each $T$, the best diffusion scores are \textbf{bolded}. 
}
\label{tab:remdm_fast}
\vspace{-10pt}
\scriptsize
\begin{tabular}{lcccccccccccc}
\toprule
 & \multicolumn{3}{c}{$T=32$} & \multicolumn{3}{c}{$T=64$} & \multicolumn{3}{c}{$T=128$} \\
\cmidrule(lr){2-4} \cmidrule(lr){5-7} \cmidrule(lr){8-10} \cmidrule(lr){11-13}
Method & MAUVE & PPL & Ent. & MAUVE & PPL & Ent. & MAUVE & PPL & Ent. \\
\midrule
Data (ref.)        & 1.00  & 14.8 & 5.44 & 1.00  & 14.8 & 5.44 & 1.00  & 14.8 & 5.44 \\
AR$\dagger$ & 0.760 & 1.21 & 5.22 & 0.760 & 1.21 & 5.22 & 0.760 & 1.21 & 5.22 \\
\midrule
MDLM$\dagger$   & 0.006 & 100.45 & \textbf{5.60} & 0.011 & 72.08 & \textbf{5.55} & 0.015 & 61.5 & \textbf{5.52} \\
MDLM+FB$\dagger$ & 0.007 & 95.76 & 5.56 & 0.016 & 59.05 & 5.49 & 0.064 & 42.8 & 5.44 \\
MDLM+DFM$\dagger$ & 0.004 & 303.8 & 5.31 & 0.007 & 108.8 & 5.33 & 0.041 & 37.9 & 5.31 \\
ReMDM$\dagger$   & 0.007 & 93.53 & 5.58 & 0.016 & 60.38 & 5.51 & 0.057 & 42.5 & 5.43 \\
PAPL$\dagger$ (\textbf{ours})  & \textbf{0.013} & \textbf{40.19} & 5.32 & \textbf{0.046} & \textbf{29.98} & 5.24 & \textbf{0.067} & \textbf{24.33} & 5.16 \\
\bottomrule
\end{tabular}
\vspace{-5pt}
\end{table*}
\looseness=-1
\xhdr{Results} 
Table~\ref{tab:remdm_fast} reports unconditional text generation performance across sampling steps $T\in\{32,64,128\}$. 
Our method, PAPL, consistently and substantially improves diffusion-based generation. At $T=128$, PAPL attains the strongest diffusion MAUVE ($0.067$) and lowest Gen PPL ($24.3$), outperforming ReMDM ($0.057$ MAUVE, $42.5$ PPL) and MDLM+DFM ($0.041$ MAUVE, $37.9$ PPL). entropy remains comparable across all methods (5.1–5.6), indicating that PAPL’s gains in quality and perplexity are not driven by mode collapse.

\looseness=-1
\xhdr{Takeaways} 
PAPL delivers consistent and significant improvements over prior discrete diffusion models across all sampling budgets. These gains hold under fast sampling regimes ($T<L$). While diffusion approaches still trail autoregressive models in absolute quality, PAPL markedly reduces this gap without sacrificing diversity.

\subsection{Code Generation}

\label{sec:codegen_exp}
\begin{wraptable}{r}{0.5\linewidth}
\vspace{-15pt}
\caption{\small Code infilling performance on \textsc{HumanEval-Infill Pass@1} and \textsc{SantaCoder-FIM Exact Match}. 
Large-scale models ($\geq$7B) are shown as reference, while the main comparison is among compact sub-billion models. }
\vspace{-10pt}
\label{tab:codeinfilling_main}
\centering
\scriptsize
\resizebox{0.48\columnwidth}{!}{
\begin{tabular}{lrr}
\toprule
Model & HumanEval & SantaCoder \\
\midrule
\multicolumn{3}{l}{\textbf{Reference Models ($\geq$7B)}} \\
LLaDA-8B & 48.3 & 35.1 \\
Dream-7B& 39.4 & 40.7 \\
DiffuCoder-7B & 54.8 & 38.8 \\
Dream-Coder-7B & 55.3 & 40.0 \\
\midrule
\multicolumn{3}{l}{\textbf{Compact Models ($\leq$1B)}} \\
DLM (0.5B) & 30.0 & 30.7 \\
DLM (0.5B) + PAPL (\textbf{Ours}) & \textbf{32.5} & \textbf{32.3} \\
DLM (0.5B) + PAPL (Oracle length) & 77.4 & 56.4 \\
\bottomrule
\end{tabular}
}
\vspace{-10pt}
\end{wraptable}

\textbf{Setup.} 
We evaluate \shortname on code generation, a domain requiring both syntactic validity and semantic correctness. Following the Open-dLLM framework~\citep{opendllm2025}, we initialize from Qwen2.5-Coder and adapt it to the diffusion setting with bidirectional attention. Models are trained on the FineCode corpus, which combines algorithmic and QA-style data, and are optimized with a masked cross-entropy loss as in the Open-dLLM recipe. Evaluation covers \textsc{HumanEval}~\citep{Chen2021EvaluatingLL}, \textsc{MBPP}~\citep{austin2021programsynthesislargelanguage}, and their augmented variants, as well as \textsc{HumanEval-Infill} and the Python subset of \textsc{SantaCoder-FIM}. We report pass@1 and pass@10 for completion tasks, and exact match for infilling, using official protocols. See \cref{app:code-gen-setup} for more details.

\begin{table}[thb]
\vspace{-5pt}
\caption{\small Code generation performance on \textsc{HumanEval}, \textsc{HumanEval+}, \textsc{MBPP}, and \textsc{MBPP+}. 
Large-scale models ($\ge$7B) are shown as reference, while the main comparison is among compact sub-billion models. 
Results marked with $^{\dagger}$ are adopted from prior work~\citep{havasi2025editflowsflowmatching}.}
\label{tab:codegen_main}
\centering
\vspace{-10pt}
\scriptsize
\begin{tabular}{lcccccccc}
\toprule
& \multicolumn{2}{c}{\textsc{HumanEval}} & \multicolumn{2}{c}{\textsc{HumanEval+}} & \multicolumn{2}{c}{\textsc{MBPP}} & \multicolumn{2}{c}{\textsc{MBPP+}} \\
\cmidrule(lr){2-3} \cmidrule(lr){4-5} \cmidrule(lr){6-7} \cmidrule(lr){8-9}
Model & Pass@1 & Pass@10 & Pass@1 & Pass@10 & Pass@1 & Pass@10 & Pass@1 & Pass@10 \\
\midrule
\multicolumn{9}{l}{\textbf{Reference Models ($\geq$7B)}} \\
LLaDA (8B) & 35.4 & 50.0 & 30.5 & 43.3 & 38.8 & 53.4 & 52.6 & 69.1 \\
Dream (7B) & 56.7 & 59.2 & 50.0 & 53.7 & 55.4 & 56.2 & 71.5 & 72.5 \\
\midrule
\multicolumn{9}{l}{\textbf{Compact Models ($\leq$1B)}} \\
Autoregressive$^{\dagger}$ (1.3B) & 17.0 & 34.7 & 14.0 & 28.6 & \textbf{25.6} & \textbf{45.4} & -- & -- \\
Mask DFM (1.3B)$^{\dagger}$ & 9.1 & 17.6 & 7.9 & 13.4 & 6.2 & 25.0 & -- & -- \\
Edit Flow (1.3B)$^{\dagger}$ & 12.8 & 24.3 & 10.4 & 20.7 & 10.0 & 36.4 & -- & -- \\
Uniform $X_{0}$ + Edit Flow (1.3B)$^{\dagger}$ & 9.7 & 24.3 & 9.7 & 19.5 & 9.4 & 33.4 & -- & -- \\
DLM (0.5B) & 18.5 & 31.1 & 17.5 & 28.0 & 17.6 & 32.6 & 17.6 & 32.6 \\
DLM (0.5B) + PAPL (\textbf{Ours}) & \textbf{20.8} & \textbf{38.4} & \textbf{17.6} & \textbf{35.2} & 16.7 & 38.4 & \textbf{23.9} & \textbf{53.6} \\
\bottomrule
\end{tabular}
\vspace{-5pt}
\end{table}

\textbf{Results}.
Table~\ref{tab:codegen_main} shows that \shortname consistently outperforms the baseline diffusion model across all completion benchmarks. On \textsc{HumanEval}, pass@1 improves from $18.5$ to $20.8$ and pass@10 from $31.1$ to $38.4$. Similar gains are observed on \textsc{HumanEval+} and \textsc{MBPP+}, where PAPL improves pass@10 by more than +10 points. This disproportionate improvement at pass@10 suggests that PAPL is not just improving the single best prediction, but is learning a more robust generative distribution over the entire solution space, making it highly effective at generating a diverse set of high-quality candidates.
Table~\ref{tab:codeinfilling_main} reports results on infilling. Here too, PAPL improves over the baseline: pass@1 increases from $30.0$ to $32.5$ on \textsc{HumanEval-Infill}, and exact-match accuracy rises from $30.7$ to $32.3$ on \textsc{SantaCoder-FIM}. These improvements hold under identical configurations and inference settings, confirming that planner-aware training better aligns the denoiser with the planner-based reverse process.

\looseness=-1
\xhdr{Takeaway}
Across both completion and infilling tasks, PAPL consistently improves the correctness of code generation. These results highlight the generality of our approach: learning better generative paths benefits tasks requiring strict logical structure just as it does those requiring biological fidelity.

\begin{figure}[h]
    \centering
    \vspace{-10pt}
    \includegraphics[width=\linewidth]{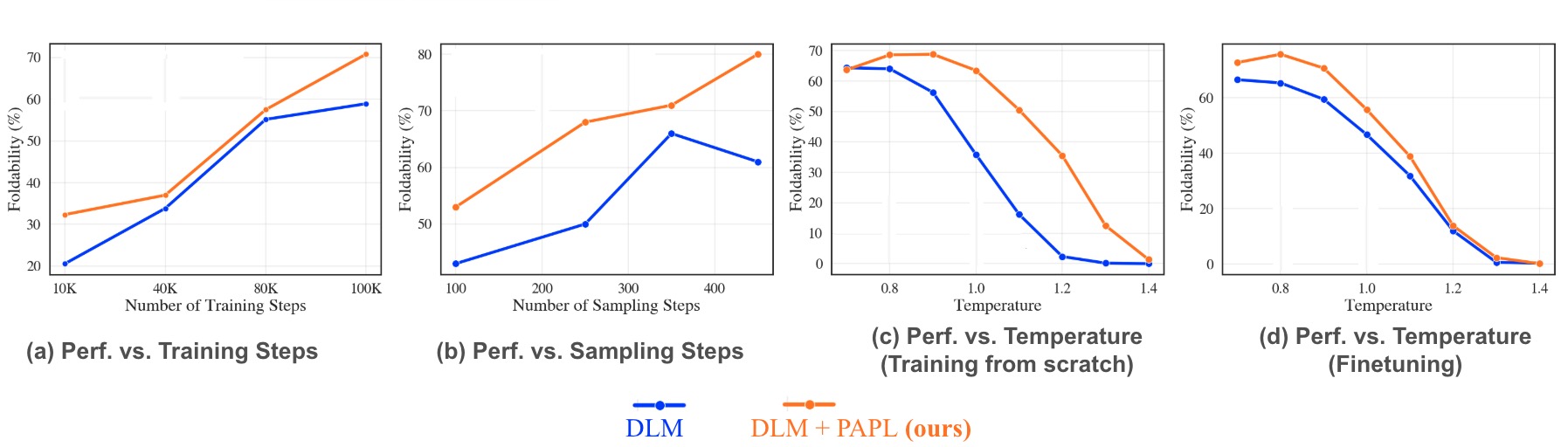}
    \vspace{-15pt}
    \caption{\shortname consistently improves over DLM across training, sampling steps, and temperature. 
    (a) Faster convergence in training steps. 
    (b) Higher performance across sampling steps. 
    (c) More robust to temperature when training from scratch. 
    (d) More robust to temperature when fine-tuning.}
    \label{fig:ablations-main}
\end{figure}

\subsection{Ablation}

\looseness=-1
\textbf{Head-to-head comparison.}
We compare PAPL with the vanilla DLM baseline across training and inference conditions. Figure~\ref{fig:ablations-main} shows that PAPL converges faster during training (panel a), maintains stronger performance across different sampling steps (panel b), and is substantially more robust to temperature variation, both when trained from scratch and when fine-tuned (panels c–d). These results confirm that PAPL improves final quality while also stabilizing optimization and inference dynamics.

\textbf{Hyperparameter tuning.}
We study the two key components introduced by PAPL: the softmax temperature $\tau$ and the path learning weight $\alpha$. As shown in Fig.~\ref{fig:hyperparam-sweep}, lowering $\tau$ below the default ($\tau=1$) consistently improves foldability, while larger values hurt performance, suggesting that sharper planner distributions provide more effective supervision. Increasing $\alpha$ strengthens performance up to $\alpha=5$, demonstrating that emphasizing planner-weighted paths enhances stability and final quality. \rev{Increasing $\alpha$ beyond 5 on this task lowers performance indicating the usefulness of interpolating between MDLM and PAPL losses. Runs for these weights were stopped early to save compute and so do not appear in Fig.~\ref{fig:hyperparam-sweep}.}

\section{Related Work}
\label{sec:related_work}
\looseness=-1
\begin{wrapfigure}{r}{0.5\linewidth}
    \centering
    \vspace{-20pt}
    \includegraphics[width=\linewidth]{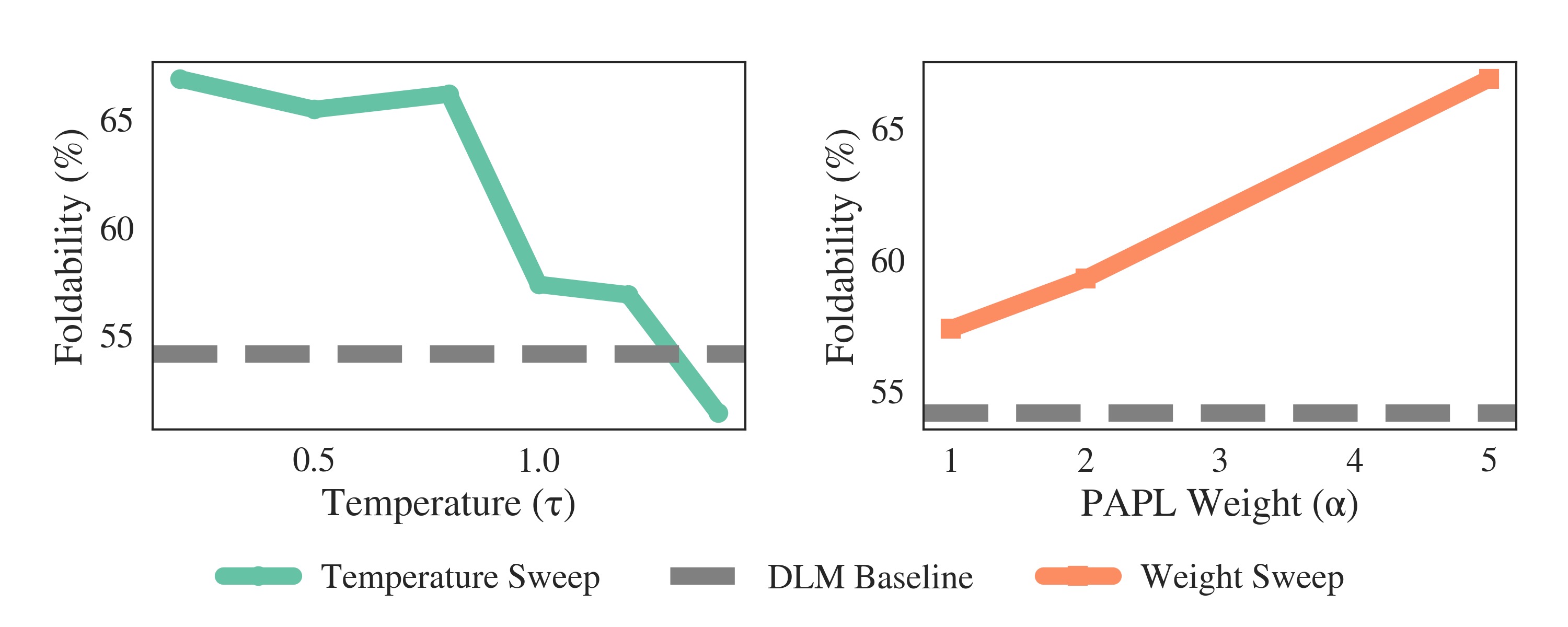}
    \vspace{-20pt}
    \caption{\small \looseness=-1 \textbf{Effect of $\tau$ and $\alpha$ on foldability.} 
    Lower $\tau$ ($<1$) improves performance. Increasing $\alpha$ steadily boosts foldability up to $\alpha=5$. The dashed line denotes the vanilla DLM baseline.}
    \label{fig:hyperparam-sweep}
    \vspace{-10pt}
\end{wrapfigure}
Masked diffusion language models (DLMs) have recently emerged as promising alternatives to autoregressive models for discrete sequence generation~\citep{mdlm, shi2024simplified, nie2024scalingmaskeddiffusionmodels, gong2024scalingdiffusionlanguagemodels}. A large body of work has focused on improving sampling efficiency and quality through heuristic strategies such as greedy unmasking~\citep{gong2024scalingdiffusionlanguagemodels}, iterative remasking~\citep{RDM, DPLM}, informed correctors~\citep{zhao2024informedcorrectorsdiscretediffusion}, and planner-guided approaches like P2~\citep{peng2025pathplanningmaskeddiffusion} and confidence-planning~\citep{}. While effective, these methods assume models trained under a uniform denoising order and modify the sampling path only at inference time, creating a mismatch between training and generation. 

\looseness=-1
A related line of work investigates generation order in the setting of \emph{any-order autoregressive models} (AOARMs). \citet{shih2022traininginferenceanyorderautoregressive} propose restricting the model to a fixed family of orders to reduce redundancy, whereas LO-ARM~\citep{wang2025learningorderautoregressivemodelsapplication} and \citep{li2021discoveringnonmonotonicautoregressiveorderings} treat generation order as a latent variable and learn it with REINFORCE~\citep{williams1992simple}. However, the reliance on high-variance policy gradient estimators in LO-ARM limits its scalability to large models and datasets. DDPD~\citep{ddpd} trains for a planner-selected denoising order, where the planner is effectively a uniform discrete diffusion model. This methodology requires a planner of similar or greater size to the denoiser, and hence suffers from a similar pitfall to its AOARM counterparts. 

\cut{
In contrast, we address the problem of learning generation order directly within DLMs. Our approach integrates path learning into training, thereby aligning the training and sampling processes rather than relying on inference-time heuristics. To ensure scalability, we design an efficient implementation with lightweight modifications to standard DLM training that can be implemented with only two additional lines of code, preserving efficiency while closing the train–test gap.
}

\section{Conclusion}
\looseness=-1
In this work, we investigate the role of planning at inference time with respect to denoiser training for DLMs. Through this, we identify a mismatch in popular planner-guided inference paths and standard denoiser training that uses uniformly at random masking. We propose a new Planner-Aware Evidence Lower Bound (P-ELBO) and developed a practical algorithm, Planner-Aware Path Learning (PAPL), to align the denoiser training with its intended use at inference. We demonstrate that the P-ELBO recovers all current planning strategies, and in particular enjoys straightforward implementation with PAPL being a single line code change with no additional computational overhead from standard DLM training. Empirically, PAPL achieves a 40\% relative increase in protein foldability, a $4\times$ MAUVE gain  in text generation, and a 23\% relative improvement in code generation on HumanEval. demonstrating the benefits of reconciling the training and sampling processes. While PAPL uses the denoiser's own confidence to plan, there are many other possible planning functions presenting ripe directions to extend and generalize the PAPL algorithm. \rev{We discuss how other unmasking schemes such as ``top probability margin \cite{kim2025trainworstplanbest}, RDM \cite{RDM}, Top-k ``block-denoising'' \cite{nie2025largelanguagediffusionmodels}, and ``Confidence Thresholding'' \cite{wu2025fastdllmtrainingfreeaccelerationdiffusion} can fit into a properly adapted version of our mathematical framework in \S \ref{app:instantiations} and \S \ref{app:moreotherdenoisingmethods}. Expanding this analysis to find a computationally viable loss analogous to what the PAPL loss of \eqref{eq:papl-loss} provides for greedy ancestral is an interesting avenue for future work. Finally, we remark that our papers strength is also its limitation: while we show that one should modify the ELBO in order to account for alternative decoding strategies in MDMs, this means that some amount of post-training is necessary in order to test whether the decoding strategies performance can be improved via using a planner-aware loss. This training overhead makes our methodology expensive to implement with large planning models compared to just testing performance at inference time.}

\cut{
\looseness=-1
Implemented as a one-line modification to the standard loss, PAPL incurs no computational overhead yet delivers substantial gains across diverse domains, It achieved a 40\% relative increase in protein foldability and a 23\% relative improvement in code generation on HumanEval, demonstrating the benefits of reconciling the training and sampling processes. Our work highlights that making training objectives inference-aware is a crucial and effective step towards building more powerful generative models.
}

\section*{Acknowledgments}
Fred extends sincere gratitude to  Kaiwen Zheng for his invaluable insights. Zack extends his gratitude to Jim Nolen for his support and insightful discussions.
The authors acknowledge funding from UNIQUE, CIFAR, NSERC, Intel, Samsung, as well as the Hartwell Foundation and CHDI Foundation. The research was enabled in part by computational resources provided by the Digital Research Alliance of Canada (\url{https://alliancecan.ca}), Mila (\url{https://mila.quebec}), and NVIDIA.
This research is partially supported by the EPSRC Turing AI World-Leading Research Fellowship No. EP/X040062/1 and EPSRC AI Hub No. EP/Y028872/1. Z.B. is partially supported by NSF-DMS award 2038056. F.Z.P. and A.R.Z. are partially supported by NIH R01HL169347.

\section*{Ethics Statement}
This work introduces Planner Aware Path Learning (PAPL), a training framework for discrete diffusion models. Our experiments are limited to publicly available datasets in code and biological sequence generation. All results are computational; no biological synthesis or wet-lab experiments were performed. While improved generative models may be misused (e.g., generating harmful biological sequences or insecure code), we explicitly discourage such applications and release models only with documentation of intended use and limitations.

\section*{Reproducibility Statement}
We provide detailed descriptions of model architectures, training objectives, datasets, and evaluation protocols in the main text and appendix. Hyperparameters, training schedules, and implementation details are included to enable replication. Code and pretrained models will be released upon publication to support full reproducibility.
\clearpage
\bibliographystyle{iclr2026_conference}
\bibliography{iclr2026_conference}

\newpage
\appendix

\etocdepthtag.toc{mtappendix}
\etocsettagdepth{mtmain}{none}
\etocsettagdepth{mtappendix}{subsection} 
\renewcommand{\contentsname}{Appendices}
\tableofcontents

\clearpage 
\section{Theoretical Derivations and Proofs}
\label{app:theory}
\allowdisplaybreaks
\subsection{Foundational Derivations}
\label{app:foundations}

\subsubsection{Properties of KL Divergence}\label{subsubsec:KLdiv}

Recall for $p,q\in \Delta^{|\mathcal{X}|}$ for $\mathcal{X}$ some finite set, we define
\begin{align*}
D_{KL}(p||q):=\sum_{x\in\mathcal{X}}p(x)\log\left(\frac{p(x)}{q(x)}\right)
\end{align*}
when $p(x)=0$ for every $x\in\mathcal{X}$ such that $q(x)=0$ and $+\infty$ otherwise. Also recall that for $x$ such that $p(x)=0$, we interpret $0\log(0)=0$. Here we will recall some basic properties of $D_{KL}$ that will aid in our proof.
\begin{lemma}\label{lemma:KLprops}
Non-negativity of KL-Divergence: For any $p,q$ distributions on a finite set $\mathcal{X}$,
\begin{align*}
D_{KL}(p||q)\geq 0,
\end{align*}
with $D_{KL}(p||q)=0$ if and only if $p=q$.
\end{lemma}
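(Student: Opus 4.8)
The plan is to prove Gibbs' inequality via the elementary bound $\log(t) \le t - 1$, which holds for every $t > 0$ with equality if and only if $t = 1$. First I would reduce attention to the support of $p$: since we adopt the convention $0\log 0 = 0$, the terms with $p(x) = 0$ contribute nothing, so I rewrite $-D_{KL}(p\|q) = \sum_{x : p(x) > 0} p(x)\log\!\big(q(x)/p(x)\big)$. I also dispose of the degenerate case immediately: if $q(x) = 0$ for some $x$ with $p(x) > 0$, then $D_{KL}(p\|q) = +\infty$ by definition and the inequality is trivial, so I may assume $q(x) > 0$ everywhere on the support of $p$, making each ratio $q(x)/p(x)$ a legitimate positive argument for the logarithm.

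Next I would apply the pointwise bound with $t = q(x)/p(x)$, giving $p(x)\log\!\big(q(x)/p(x)\big) \le p(x)\big(q(x)/p(x) - 1\big) = q(x) - p(x)$. Summing over the support yields
\[
-D_{KL}(p\|q) \;\le\; \sum_{x : p(x) > 0} \big(q(x) - p(x)\big) \;=\; \sum_{x : p(x) > 0} q(x) \;-\; 1 \;\le\; 1 - 1 \;=\; 0,
\]
where the final inequality uses $\sum_{x : p(x) > 0} q(x) \le \sum_{x} q(x) = 1$ because $q$ is a probability distribution, and $\sum_{x : p(x) > 0} p(x) = 1$. Negating then gives $D_{KL}(p\|q) \ge 0$.

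For the equality characterization I would track when both inequalities are simultaneously tight. Equality in the pointwise bound forces $q(x)/p(x) = 1$, i.e. $q(x) = p(x)$, for every $x$ in the support of $p$; equality in the final summation step forces $\sum_{x : p(x) > 0} q(x) = 1$, i.e. $q$ assigns no mass outside the support of $p$. Together these two conditions yield $p = q$ on all of $\mathcal{X}$, and conversely $p = q$ clearly gives $D_{KL}(p\|q) = 0$, completing the biconditional.

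The argument is routine; the only point demanding care is the bookkeeping over the support — correctly invoking the convention $0\log 0 = 0$, isolating the infinite-divergence case where $q$ vanishes on the support of $p$, and ensuring both inequalities are tight at once in the equality analysis. An equivalent route, which I would mention as an alternative, applies Jensen's inequality to the strictly convex function $-\log$ against the law of $q(X)/p(X)$ under $X \sim p$, with strict convexity supplying the same equality condition.
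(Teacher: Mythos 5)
Your proposal is correct and is essentially the paper's own argument in mirror image: the bound $\log t\le t-1$ applied to $t=q(x)/p(x)$ and weighted by $p(x)$ is exactly the paper's inequality $g(t)=t\log t-t+1\ge 0$ applied to $t=p(x)/q(x)$ and weighted by $q(x)$, and the equality analysis proceeds identically. Your version is slightly more careful about the degenerate cases (the convention $0\log 0=0$, the $+\infty$ case when $q$ vanishes on the support of $p$, and the mass of $q$ outside that support), which the paper's proof glosses over, but the underlying idea is the same.
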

\begin{proof}
First we observe that that $g:[0,\infty)\rightarrow \mathbb{R}$ given by $g(t)=t\log(t)-t+1$ has $g'(t)=\log(t)$, $g'(t)<0$ for $t\in (0,1)$ and $g'(t)>0$ for $t>1$, so $g$ has a global minimum of $0$ at $t=1$. Then:
\begin{align*}
D_{KL}(p||q)&=\sum_{x\in\mathcal{X}}p(x)\log\left(\frac{p(x)}{q(x)}\right)\\ 
&= \sum_{x\in\mathcal{X}}q(x)\frac{p(x)}{q(x)}\log\left(\frac{p(x)}{q(x)}\right)+1-1\\ 
& = \sum_{x\in\mathcal{X}}q(x)\frac{p(x)}{q(x)}\log\left(\frac{p(x)}{q(x)}\right)+\sum_{x\in\mathcal{X}} q(x)-\sum_{x\in\mathcal{X}} p(x)\\
&=\sum_{x\in\mathcal{X}} q(x)\left(\frac{p(x)}{q(x)}\log\left(\frac{p(x)}{q(x)}\right)-\frac{p(x)}{q(x)}+1\right)\\ 
& = \sum_{x\in\mathcal{X}} q(x) g\left(\frac{p(x)}{q(x)}\right)\\ 
&\geq \sum_{x\in\mathcal{X}}q(x)*0,
\end{align*}
with equality holding if and only if $q(x)=p(x),\forall x\in\mathcal{X}$.
\end{proof}

\begin{lemma}\label{lemma:chain_rule}
Chain Rule for KL Divergence between Joint Law of 2 Discrete Random Variables: 

For $p,q$ distributions on $\mathcal{X}\times \mathcal{Y}$, where $\mathcal{X}$ and $\mathcal{Y}$ are finite sets, denoting by $p_{\mathcal{X}},q_{\mathcal{X}}$ the $\mathcal{X}$ marginals of $p$ and $q$ respectively and by $p_{\mathcal{Y}|\mathcal{X}}(y|x)=\frac{p(x,y)}{p_{\mathcal{X}}(x)}$ and similarly for $q_{\mathcal{Y}|\mathcal{X}}(y|x)$:
\begin{align*}
D_{KL}(p||q)&=D_{KL}(p_{\mathcal{X}}||q_{\mathcal{X}})+\mathbb{E}_{x\sim p_{\mathcal{X}}}\left[D_{KL}(p_{\mathcal{Y}|\mathcal{X}}(\cdot|x)||q_{\mathcal{Y}|\mathcal{X}}(\cdot|x))\right].
\end{align*}
\end{lemma}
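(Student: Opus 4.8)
The plan is to prove the identity by unfolding the definition of $D_{KL}(p\|q)$ as a sum over the product set $\mathcal{X}\times\mathcal{Y}$, factoring each joint distribution into its marginal times its conditional, and then using the additivity of the logarithm to split the resulting expression into two pieces that I will identify with the two terms on the right-hand side.

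Concretely, I would start from
\[
D_{KL}(p\|q)=\sum_{x\in\mathcal{X}}\sum_{y\in\mathcal{Y}}p(x,y)\log\left(\frac{p(x,y)}{q(x,y)}\right),
\]
and substitute the factorizations $p(x,y)=p_{\mathcal{X}}(x)\,p_{\mathcal{Y}|\mathcal{X}}(y|x)$ and $q(x,y)=q_{\mathcal{X}}(x)\,q_{\mathcal{Y}|\mathcal{X}}(y|x)$. Writing the log of the ratio as $\log(p_{\mathcal{X}}(x)/q_{\mathcal{X}}(x))+\log(p_{\mathcal{Y}|\mathcal{X}}(y|x)/q_{\mathcal{Y}|\mathcal{X}}(y|x))$, the double sum breaks into two double sums. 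For the first, I carry out the inner sum over $y$ first, using $\sum_{y}p_{\mathcal{Y}|\mathcal{X}}(y|x)=1$, which collapses it to $\sum_{x}p_{\mathcal{X}}(x)\log(p_{\mathcal{X}}(x)/q_{\mathcal{X}}(x))=D_{KL}(p_{\mathcal{X}}\|q_{\mathcal{X}})$. For the second, I recognize the inner sum over $y$ as $D_{KL}(p_{\mathcal{Y}|\mathcal{X}}(\cdot|x)\|q_{\mathcal{Y}|\mathcal{X}}(\cdot|x))$, so the remaining outer sum $\sum_{x}p_{\mathcal{X}}(x)\,D_{KL}(p_{\mathcal{Y}|\mathcal{X}}(\cdot|x)\|q_{\mathcal{Y}|\mathcal{X}}(\cdot|x))$ is exactly the expectation $\mathbb{E}_{x\sim p_{\mathcal{X}}}[D_{KL}(p_{\mathcal{Y}|\mathcal{X}}(\cdot|x)\|q_{\mathcal{Y}|\mathcal{X}}(\cdot|x))]$. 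Adding the two pieces gives the claim.

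The only genuine subtlety—and what I would treat as the main obstacle—is bookkeeping around zeros and the absolute-continuity hypothesis built into the definition of $D_{KL}$ recalled just above Lemma~\ref{lemma:KLprops}. I would first note that if $D_{KL}(p\|q)=+\infty$, i.e.\ there is some $(x,y)$ with $q(x,y)=0$ but $p(x,y)>0$, then either $q_{\mathcal{X}}(x)=0$ (forcing the first right-hand term to be $+\infty$) or $q_{\mathcal{Y}|\mathcal{X}}(y|x)=0$ with $p_{\mathcal{Y}|\mathcal{X}}(y|x)>0$ (forcing the corresponding conditional divergence, and hence the expectation, to be $+\infty$), so both sides agree at $+\infty$. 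In the finite case, I would use the convention $0\log 0=0$ to argue that every term with $p(x,y)=0$ contributes zero to all three sums, and that whenever $p_{\mathcal{X}}(x)=0$ the entire block of terms indexed by that $x$ vanishes, so the undefined conditional $p_{\mathcal{Y}|\mathcal{X}}(\cdot|x)$ never actually enters (equivalently, such $x$ may be dropped from the outer sum defining the expectation). With these conventions in place the factorization of the logarithm and the interchange of summation order are valid term by term, and the algebra above goes through unchanged.

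Since the sets are finite and all sums are absolutely convergent once restricted to the support of $p$, no analytic justification beyond these zero-handling remarks is needed; the proof is essentially the factorization of the joint law together with Fubini for finite sums.
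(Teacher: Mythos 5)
Your proof is correct and follows essentially the same route as the paper's: expand the joint KL as a double sum, factor each joint law into marginal times conditional, split the logarithm, collapse the first piece via $\sum_y p_{\mathcal{Y}|\mathcal{X}}(y|x)=1$, and recognize the second as the expected conditional divergence. Your additional bookkeeping around zeros, $+\infty$, and the $0\log 0=0$ convention is more careful than the paper's own argument, which silently assumes these cases away.
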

\begin{proof}
By definition:
\begin{align*}
D_{KL}(p||q)&=\sum_{(x,y)\in \mathcal{X}\times\mathcal{Y}}p(x,y)\log\left(\frac{p(x,y)}{q(x,y)}\right)\\ 
& = \sum_{(x,y)\in \mathcal{X}\times\mathcal{Y}}p(x,y)\log\left(\frac{p_{\mathcal{Y}|\mathcal{X}}(y|x)p_{\mathcal{X}}(x)}{q_{\mathcal{Y}|\mathcal{X}}(y|x)q_{\mathcal{X}}(x)}\right)\\ 
& = \sum_{(x,y)\in \mathcal{X}\times\mathcal{Y}}p(x,y)\log\left(\frac{p_{\mathcal{X}}(x)}{q_{\mathcal{X}}(x)}\right)+\sum_{(x,y)\in \mathcal{X}\times\mathcal{Y}}p(x,y)\log\left(\frac{p_{\mathcal{Y}|\mathcal{X}}(y|x)}{q_{\mathcal{Y}|\mathcal{X}}(y|x)}\right)\\ 
& = \sum_{x\in \mathcal{X}}p_{\mathcal{X}}(x)\log\left(\frac{p_{\mathcal{X}}(x)}{q_{\mathcal{X}}(x)}\right)+\sum_{x\in \mathcal{X}}p_{\mathcal{X}}(x)\sum_{y\in\mathcal{Y}}p_{\mathcal{Y}|\mathcal{X}}(y|x)\log\left(\frac{p_{\mathcal{Y}|\mathcal{X}}(y|x)}{q_{\mathcal{Y}|\mathcal{X}}(y|x)}\right)\\ 
& = D_{KL}(p_{\mathcal{X}}||q_{\mathcal{X}})+\sum_{x\in\mathcal{X}}p_{\mathcal{X}}(x)D_{KL}(p_{\mathcal{Y}|\mathcal{X}}(\cdot|x)||q_{\mathcal{Y}|\mathcal{X}}(\cdot|x))\\ 
&=D_{KL}(p_{\mathcal{X}}||q_{\mathcal{X}})+\mathbb{E}_{x\sim p_{\mathcal{X}}}\left[D_{KL}(p_{\mathcal{Y}|\mathcal{X}}(\cdot|x)||q_{\mathcal{Y}|\mathcal{X}}(\cdot|x))\right].
\end{align*}
\end{proof}

\begin{corollary}\label{cor:Nchainrule}
Chain Rule for KL Divergence between Joint Law of N Discrete Random Variables:

For $N\in\mathbb{N}$ and $p,q$ distributions on $\mathcal{X}_0\times \mathcal{X}_1\times\dots\mathcal{X}_N$ where $\mathcal{X}_0,\dots, \mathcal{X}_N$ are finite sets, denoting for $k\in \lbrace 0,1,\dots,N\rbrace$ $p_{0:k}$ the $\mathcal{X}_0\times\dots\times\mathcal{X}_k$ marginal of $p$ and similarly for $q_{0:k}$, and by $p_{k+1|0:k}(x_{k+1}|x_0,\dots,x_k)=\frac{p_{k+1}(x_0,\dots,x_k,x_{k+1})}{p_k(x_0,\dots,x_k)}$ for $x_i\in\mathcal{X}_i,i=0,1,\dots,k$ and similarly for $q_{k+1|0:k}$, we have:
\begin{align*}
D_{KL}(p||q)& = D_{KL}(p_0||q_0)\\ 
&+\sum_{k=0}^{N-1}\mathbb{E}_{(x_0,x_1,\dots,x_{k})\sim p_{0:k}}\left[D_{KL}(p_{k+1|0:k}(\cdot|x_0,\dots,x_k)||q_{k+1|0:k}(\cdot|x_0,\dots,x_k))\right]
\end{align*}
\end{corollary}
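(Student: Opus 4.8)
The plan is to prove the $N$-variable chain rule by induction on $N$, bootstrapping from the two-variable chain rule already established in Lemma \ref{lemma:chain_rule}. The base case $N=1$ is precisely Lemma \ref{lemma:chain_rule} applied with $\mathcal{X}=\mathcal{X}_0$ and $\mathcal{Y}=\mathcal{X}_1$: the lone summand $\mathbb{E}_{x_0\sim p_0}[D_{KL}(p_{1|0}(\cdot|x_0)||q_{1|0}(\cdot|x_0))]$ is exactly the single conditional term, and $D_{KL}(p_0||q_0)$ is the marginal term, so the two displays coincide.

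For the inductive step, I would assume the claim for $N-1$ finite sets and consider $p,q$ on $\mathcal{X}_0\times\dots\times\mathcal{X}_N$. First I would peel off the final coordinate by invoking Lemma \ref{lemma:chain_rule} with the grouped variable $\mathcal{X}=\mathcal{X}_0\times\dots\times\mathcal{X}_{N-1}$ in the role of ``$\mathcal{X}$'' and $\mathcal{X}_N$ in the role of ``$\mathcal{Y}$''. This yields
\[
D_{KL}(p||q) = D_{KL}(p_{0:N-1}||q_{0:N-1}) + \mathbb{E}_{(x_0,\dots,x_{N-1})\sim p_{0:N-1}}\left[D_{KL}\left(p_{N|0:N-1}(\cdot|x_0,\dots,x_{N-1})||q_{N|0:N-1}(\cdot|x_0,\dots,x_{N-1})\right)\right],
\]
whose second term is exactly the $k=N-1$ summand of the target formula. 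Next I would apply the inductive hypothesis to the marginals $p_{0:N-1},q_{0:N-1}$ on $\mathcal{X}_0\times\dots\times\mathcal{X}_{N-1}$, expanding $D_{KL}(p_{0:N-1}||q_{0:N-1})$ into $D_{KL}(p_0||q_0)$ plus the sum over $k=0,\dots,N-2$ of the conditional terms. Summing the two displays reassembles the full sum from $k=0$ to $k=N-1$, completing the induction.

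The one point requiring care---and the main (minor) obstacle---is verifying that the conditional distributions are consistent between the full law and its marginal: the conditional $p_{k+1|0:k}$ appearing in the inductive hypothesis, computed from the marginal $p_{0:N-1}$, must coincide with the conditional of the same name computed from $p$. This holds because for $k+1\leq N-1$ the marginalization satisfies $(p_{0:N-1})_{0:k+1}=p_{0:k+1}$, so the ratio defining the conditional is unchanged; the expectation $\mathbb{E}_{(x_0,\dots,x_k)\sim p_{0:k}}$ is likewise identical whether $p_{0:k}$ is viewed as a marginal of $p$ or of $p_{0:N-1}$. With these marginal/conditional identifications in hand, the two expansions glue together term-by-term and no further computation is needed.

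I expect essentially no analytical difficulty here, since all the nonnegativity and two-variable structure is already furnished by Lemmas \ref{lemma:KLprops} and \ref{lemma:chain_rule}; the entire content of the corollary is the inductive bookkeeping, and the proof is mechanical once the marginal-consistency of the conditionals is noted.
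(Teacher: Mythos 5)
Your proof is correct and is essentially the same argument as the paper's, which simply states that the corollary ``follows from iteratively applying Lemma \ref{lemma:chain_rule}''; your induction, peeling off the last coordinate via the two-variable chain rule and invoking the hypothesis on the marginal, is exactly that iteration spelled out. The marginal-consistency check you flag is the right (and only) detail worth verifying, and it holds as you describe.
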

\begin{proof}
This follows from iteratively applying Lemma \ref{lemma:chain_rule}.
\end{proof}

\begin{corollary}\label{cor:marginalization_inequality_KL}
Inequality for Marginalization of Discrete Distributions:

For $p,q$ distributions on $\mathcal{X}\times \mathcal{Y}$, where $\mathcal{X}$ and $\mathcal{Y}$ are finite sets, denoting by $p_{\mathcal{X}},q_{\mathcal{X}}$ the $\mathcal{X}$ marginals of $p$ and $q$ respectively:
\begin{align*}
D_{KL}(p||q)&\geq D_{KL}(p_{\mathcal{X}}||q_{\mathcal{X}})
\end{align*}
\end{corollary}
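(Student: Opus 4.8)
The plan is to derive this marginalization inequality directly from the chain rule for KL divergence (Lemma~\ref{lemma:chain_rule}) together with the non-negativity of KL divergence (Lemma~\ref{lemma:KLprops}), so that no fresh manipulation of the defining sum is needed.

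First I would apply Lemma~\ref{lemma:chain_rule} to the joint distributions $p,q$ on $\mathcal{X}\times\mathcal{Y}$, obtaining the decomposition
\[
D_{KL}(p||q)=D_{KL}(p_{\mathcal{X}}||q_{\mathcal{X}})+\mathbb{E}_{x\sim p_{\mathcal{X}}}\bigl[D_{KL}(p_{\mathcal{Y}|\mathcal{X}}(\cdot|x)||q_{\mathcal{Y}|\mathcal{X}}(\cdot|x))\bigr].
\]
The first summand is precisely the quantity we wish to lower-bound by, so it suffices to argue that the second, conditional summand is nonnegative.

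Next I would observe that for each fixed $x\in\mathcal{X}$ the conditional laws $p_{\mathcal{Y}|\mathcal{X}}(\cdot|x)$ and $q_{\mathcal{Y}|\mathcal{X}}(\cdot|x)$ are themselves probability distributions on the finite set $\mathcal{Y}$, so Lemma~\ref{lemma:KLprops} gives $D_{KL}(p_{\mathcal{Y}|\mathcal{X}}(\cdot|x)||q_{\mathcal{Y}|\mathcal{X}}(\cdot|x))\geq 0$. Since the outer expectation is a nonnegative-weighted average (under $p_{\mathcal{X}}$) of these nonnegative terms, the entire conditional summand is $\geq 0$; dropping it yields $D_{KL}(p||q)\geq D_{KL}(p_{\mathcal{X}}||q_{\mathcal{X}})$, which is the claim.

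Because both ingredients are already in hand, there is essentially no obstacle here. The only point meriting a word of care is the well-definedness of $p_{\mathcal{Y}|\mathcal{X}}(\cdot|x)$ when $p_{\mathcal{X}}(x)=0$; but such $x$ receive zero weight in the expectation and may simply be omitted from the sum (consistent with the $0\log 0 = 0$ convention fixed in the definition of $D_{KL}$), so they do not affect the argument.
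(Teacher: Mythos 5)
Your proposal is correct and matches the paper's own proof exactly: both apply the chain rule of Lemma~\ref{lemma:chain_rule} and then drop the conditional term using the non-negativity of KL divergence from Lemma~\ref{lemma:KLprops}. Your extra remark about the $p_{\mathcal{X}}(x)=0$ case is a harmless refinement the paper omits.
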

\begin{proof}
This follows from Lemma \ref{lemma:chain_rule} via noting that the term inside the expectation is non-negative for each $y$ via Lemma \ref{lemma:KLprops}.
\end{proof}
\subsubsection{Discrete Time Markov Chains}
\begin{definition}
Discrete Time Markov Chains: A discrete time Markov chain on finite state space $\mathcal{S}$ is a sequence of random variables $\lbrace X_k\rbrace_{k\in\mathbb{N}}$ such that for any $k\in\mathbb{N}$ and $x_0,x_1,\cdots,x_{k-2},y,x\in\mathcal{S}$:
\begin{align}\label{eq:markov_property}
\mathbb{P}(X_k=x|X_{k-1}=y,X_{k-2}=x_{k-2},\dots,X_1=x_1,X_0=x_0)=\mathbb{P}(X_k=x|X_{k-1}=y).
\end{align}
The distribution of a path of length $k$ $(X_0,X_1,\dots,X_k)\in \mathcal{S}^{k+1}$ of a Markov chain $\lbrace X_k\rbrace_{k\in\mathbb{N}}$ at any time is entirely determined by its one step transition probabilities, which we encode into its transition matrix:
\begin{align}\label{eq:DTMCtranisitonmatrixdef}
Q_k(x,y)=\mathbb{P}(X_{k+1}=x|X_{k}=y),\quad x,y\in\mathcal{V},k\in\mathbb{N}.
\end{align}
In the case where $\mathbb{P}(X_k=x|X_{k-1}=y)=\mathbb{P}(X_1=x|X_{0}=y)$ for all $k\in\mathbb{N}$, i.e. when the transition matrix does not depend on the time $k$, we say the chain is time-homogeneous.
\end{definition}
\begin{proposition}\label{prop:DTMC_KL}
KL Divergence Between Paths of Discrete Time Markov Chains:

Let $\mathcal{R}$, $\mathcal{P}$ be probability distributions on $\mathcal{S}^{N+1}$ corresponding to the distribution of paths of length $N$ of two discrete time Markov chains $Y$ and $X$ on $\mathcal{S}$ with transition matrices $R$ and $Q$ respectively. Also suppose that $Y_0\sim \mu$ and $X_0\sim \nu$ for some $\mu,\nu\in \Delta^{|\mathcal{S}|}$. Then:
\begin{align*}
D_{KL}(\mathcal{R}||\mathcal{P})&=D_{KL}(\mu||\nu)+\sum_{k=0}^{N-1}\mathbb{E}_{x_k\sim r_k}\left[\sum_{y\in S}R_k(y,x_k)\log\left(\frac{R_k(y,x_k)}{Q_k(y,x_k)}\right)\right],
\end{align*}
where $r_k\in\Delta^{|S|}$ is given by:
\begin{align*}
r_k(x)=\mathbb{P}(Y_k=x).
\end{align*}
\end{proposition}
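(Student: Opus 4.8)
The plan is to view the two path laws $\mathcal{R}$ and $\mathcal{P}$ as joint distributions on the product space $\mathcal{S}^{N+1}=\mathcal{S}\times\cdots\times\mathcal{S}$ and to apply the chain rule for KL divergence across the $N+1$ factors, i.e.\ Corollary~\ref{cor:Nchainrule} with $\mathcal{X}_k=\mathcal{S}$ for every $k\in[0:N]$. This directly produces
\begin{align*}
D_{KL}(\mathcal{R}||\mathcal{P})=D_{KL}(\mathcal{R}_0||\mathcal{P}_0)+\sum_{k=0}^{N-1}\mathbb{E}_{(x_0,\dots,x_k)\sim\mathcal{R}_{0:k}}\Big[D_{KL}\big(\mathcal{R}_{k+1|0:k}(\cdot|x_0,\dots,x_k)||\mathcal{P}_{k+1|0:k}(\cdot|x_0,\dots,x_k)\big)\Big].
\end{align*}
The first step is then to read off the $k=0$ term: since $\mathcal{R}_0$ and $\mathcal{P}_0$ are by definition the marginal laws of $Y_0$ and $X_0$, namely $\mu$ and $\nu$, this term equals $D_{KL}(\mu||\nu)$.

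Next I would simplify each conditional KL divergence using the Markov property~\eqref{eq:markov_property}. For the chain $Y$ with transition matrix $R$, conditioning on the full history $(x_0,\dots,x_k)$ collapses to conditioning only on the most recent state $x_k$, so that $\mathcal{R}_{k+1|0:k}(y|x_0,\dots,x_k)=\mathbb{P}(Y_{k+1}=y|Y_k=x_k)=R_k(y,x_k)$, and identically $\mathcal{P}_{k+1|0:k}(y|x_0,\dots,x_k)=Q_k(y,x_k)$. Hence each inner divergence equals $\sum_{y\in\mathcal{S}}R_k(y,x_k)\log\big(R_k(y,x_k)/Q_k(y,x_k)\big)$, which depends on the conditioning path only through its final coordinate $x_k$.

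The final step is to collapse the outer expectation. Because the integrand is a function of $x_k$ alone, the expectation over the joint marginal $\mathcal{R}_{0:k}$ reduces, after summing out $x_0,\dots,x_{k-1}$, to an expectation against the one-dimensional marginal $r_k(x)=\mathbb{P}(Y_k=x)$; substituting this yields precisely the claimed identity. I do not expect a genuine obstacle in this argument, as it is essentially pure bookkeeping once the chain rule is invoked. The one point demanding care is the transition-matrix convention $Q_k(x,y)=\mathbb{P}(X_{k+1}=x|X_k=y)$ from~\eqref{eq:DTMCtranisitonmatrixdef}, under which the destination state is the \emph{first} argument; keeping this orientation straight is what guarantees the conditional law of the next state is $y\mapsto R_k(y,x_k)$ rather than its transpose, so that the $R_k/Q_k$ ratio appears in the correct order.
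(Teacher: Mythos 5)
Your proposal is correct and follows essentially the same route as the paper's proof: both invoke the $N$-fold chain rule (Corollary~\ref{cor:Nchainrule}), identify the $k=0$ term as $D_{KL}(\mu||\nu)$, reduce the conditional laws to one-step transition rows via the Markov property, and collapse the outer expectation to the marginal $r_k$ since the integrand depends only on $x_k$. No substantive difference from the paper's argument.
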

\begin{proof}
By Corollary \ref{cor:Nchainrule} (using the same notation as in the statement thereof):
\begin{align*}
&D_{KL}(\mathcal{R}||\mathcal{P}) = D_{KL}(\mathcal{R}_0||\mathcal{P}_0)\\ 
&+\sum_{k=0}^{N-1}\mathbb{E}_{(x_0,x_1,\dots,x_{k})\sim \mathcal{R}_{0:k}}\left[D_{KL}(\mathcal{R}_{k+1|0:k}(\cdot|x_0,x_1,\dots,x_k)||\mathcal{P}_{k+1|0:k}(\cdot|x_0,x_1,\dots,x_k))\right]\\ 
& = D_{KL}(\mu||\nu)\\ 
&+\sum_{k=0}^{N-1}\mathbb{E}_{(x_0,x_1,\dots,x_{k})\sim \mathcal{R}_{0:k}}\left[D_{KL}(\mathbb{P}(Y_{k+1}=\cdot|Y_k=x_k)||\mathbb{P}(X_{k+1}=\cdot|X_k=x_k))\right]\\ 
&\text{ by definition of $\mu,\nu$ and the Markov property \eqref{eq:markov_property}}\\ 
& = D_{KL}(\mu||\nu)+\sum_{k=0}^{N-1}\mathbb{E}_{x_{k}\sim r_{k}}\left[D_{KL}(\mathbb{P}(Y_{k+1}=\cdot|Y_k=x_k)||\mathbb{P}(X_{k+1}=\cdot|X_k=x_k))\right]\\ 
&\text{ by definition of $r_k$ and $\mathcal{R}_{0:k}$}\\ 
&=D_{KL}(\mu||\nu)+\sum_{k=0}^{N-1}\mathbb{E}_{x_k\sim r_k}\left[\sum_{y\in S}R_k(y,x_k)\log\left(\frac{R_k(y,x_k)}{Q_k(y,x_k)}\right)\right]\\ 
&\text{ by definition of the transition matrix \eqref{eq:rate_matrix_definition}.}
\end{align*}
\end{proof}
\subsubsection{Application to the ELBO}\label{subsubsec:roleofELBO}
The loss corresponding to an ELBO $\mathcal{E}^\theta(\mathbf{x}_0)$ (i.e. a quantity satisfying $\log p_\theta(\mathbf{x}_0)\geq \mathcal{E}^\theta(\mathbf{x}_0),\forall \mathbf{x}_0\in\mathcal{V}^L$ and $p_\theta$ the generated data distribution) is always given by 
\begin{align*}
\mathcal{L}(\theta)=-\mathbb{E}_{\mathbf{x}_0\sim p_{data}}\left[\mathcal{E}^\theta(\mathbf{x}_0)\right],
\end{align*}
so that 
\begin{align*}
D_{KL}(p_{data}||p_\theta)& = \sum_{\mathbf{x}\in S^L}p_{data}(\mathbf{x})\log\left(\frac{p_{data}(\mathbf{x})}{p_\theta(\mathbf{x})}\right)\\ 
& = \sum_{\mathbf{x}\in S^L}p_{data}(\mathbf{x})\log p_{data}(\mathbf{x}) - \sum_{\mathbf{x}\in S^L}p_{data}(\mathbf{x})\log (p_\theta(\mathbf{x}))\\
&=-H(p_{data})-\mathbb{E}_{\mathbf{x}_0\sim p_{data}}\left[\log (p_\theta(\mathbf{x}_0))\right]\\ 
&\leq -H(p_{data})-\mathbb{E}_{\mathbf{x}_0\sim p_{data}}\left[\mathcal{E}^\theta(\mathbf{x}_0)\right]\\ 
& = -H(p_{data})+\mathcal{L}(\theta).
\end{align*}

Here $H(p)$ denotes the Shannon entropy of $p$. Note that, crucially, $p_\theta$ must be the distribution on $\mathcal{V}^L$ which one samples from at inference time, since this is what one wishes to make equal to $p_{data}$ via minimizing $\mathcal{L}(\theta)$ to $H(p_{data})$.

In the following proposition, we show Proposition \ref{prop:DTMC_KL} and the basic properties of KL divergence from Subsection \ref{subsubsec:KLdiv} can be used to derive an ELBO, and hence training loss, for any sampling procedure which can be described via a discrete time Markov chain.

\begin{proposition}\label{prop:ELBOviaDTMC}

Application to ELBO:

Suppose $p$ is a distribution on $\mathcal{S}$ the given by $p(x)=\mathbb{P}(X_N=x)$ for some $N\in\mathbb{N}$ and $X$ a Markov chain on $\mathcal{S}$ with transition matrix $Q$. Then for $x_0\in \mathcal{S}$ and $Y^{x_0}$ any Markov chain with rate matrix $R(\cdot,\cdot;x_0)$ satisfying both 
\begin{enumerate}
\item $Y^{x_0}_0$ is equal in distribution to $X_0$
\item $\mathbb{P}(Y^{x_0}_N=x_0)=1$,
\end{enumerate}
we have:
\begin{align*}
\log(p(x_0))&\geq -\sum_{k=0}^{N-1}\mathbb{E}_{x_k\sim r_k(\cdot;x_0)}\left[\sum_{y\in S} R_k(y,x_k;x_0)\log\left(\frac{R_k(y,x_k;x_0)}{Q_k(y,x_k)}\right)\right],
\end{align*}
where $r_k(\cdot;x_0)\in\Delta^{|S|}$ is given by:
\begin{align*}
r_k(x;x_0)=\mathbb{P}(Y^{x_0}_k=x).
\end{align*}
\end{proposition}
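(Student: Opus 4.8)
The plan is to identify the quantity on the right-hand side as a path-wise KL divergence between the reference chain $Y^{x_0}$ and the generative chain $X$, and then bound that divergence below by its terminal-time marginal contribution, which evaluates to $-\log p(x_0)$. Let $\mathcal{P}$ and $\mathcal{R}$ denote the laws on $\mathcal{S}^{N+1}$ of the length-$N$ paths of $X$ and of $Y^{x_0}$, respectively. The first step is to apply Proposition \ref{prop:DTMC_KL} to these two chains. Hypothesis (1) states $Y^{x_0}_0 \overset{d}{=} X_0$, so the two initial laws coincide and the leading term $D_{KL}(\mu \| \nu)$ in that proposition vanishes, leaving
\[
D_{KL}(\mathcal{R} \| \mathcal{P}) = \sum_{k=0}^{N-1} \mathbb{E}_{x_k \sim r_k(\cdot; x_0)}\left[\sum_{y \in S} R_k(y, x_k; x_0) \log\left(\frac{R_k(y, x_k; x_0)}{Q_k(y, x_k)}\right)\right],
\]
which is precisely the negative of the claimed lower bound.

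The second step is to pass to the marginals at the final time $N$. Viewing $\mathcal{S}^{N+1} = \mathcal{S}^N \times \mathcal{S}$ with the second factor indexing the coordinate at time $N$, Corollary \ref{cor:marginalization_inequality_KL} yields $D_{KL}(\mathcal{R} \| \mathcal{P}) \geq D_{KL}(\mathcal{R}_N \| \mathcal{P}_N)$, where $\mathcal{R}_N$ and $\mathcal{P}_N$ are the laws of $Y^{x_0}_N$ and $X_N$. By construction $\mathcal{P}_N = p$, while hypothesis (2) forces $\mathcal{R}_N = \delta(x_0)$, the point mass at $x_0$. A one-line computation then gives $D_{KL}(\delta(x_0) \| p) = \log(1/p(x_0)) = -\log p(x_0)$.

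Chaining the two displays produces $-\log p(x_0) \leq D_{KL}(\mathcal{R} \| \mathcal{P})$, and rearranging is exactly the asserted inequality. Given Proposition \ref{prop:DTMC_KL} and Corollary \ref{cor:marginalization_inequality_KL}, there is no genuinely hard step; the entire content lies in recognizing the ELBO as the terminal-marginal shadow of the pathwise KL. The only points requiring care are bookkeeping: confirming that the two chains are indexed on a common time horizon so Proposition \ref{prop:DTMC_KL} applies verbatim, and --- most importantly --- checking that the marginalization in Corollary \ref{cor:marginalization_inequality_KL} is oriented so the inequality keeps $D_{KL}(\mathcal{R} \| \mathcal{P})$ on the larger side. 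The degenerate case $p(x_0) = 0$ remains consistent with the bound under the usual extended-real conventions, since then $\mathcal{R}_N$ fails to be absolutely continuous with respect to $\mathcal{P}_N$ and both sides equal $-\infty$.
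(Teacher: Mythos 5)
Your proposal is correct and follows essentially the same route as the paper's own proof: it combines the identity $\log p(x_0)=-D_{KL}(\delta(x_0)\|p)$ with the marginalization inequality of Corollary \ref{cor:marginalization_inequality_KL} applied to the path laws, and then evaluates the path-wise KL via Proposition \ref{prop:DTMC_KL}, with the initial term vanishing because the initial distributions agree. The only difference is order of presentation (you expand the path-wise KL first and marginalize second, the paper does the reverse), which is immaterial.
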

\begin{proof}
First we observe that:
\begin{align*}
\log(p(x_0))& = -D_{KL}(\delta(x_0)||p)=-D_{KL}(\mathbb{P}(Y^{x_0}_N=\cdot)||\mathbb{P}(X_N=\cdot))
\end{align*}
by definition. Then, applying Corollary \ref{cor:marginalization_inequality_KL} to $\mathcal{R}(\cdot;x_0),\mathcal{P}$ the distributions on $\mathcal{S}^{N+1}$ corresponding to paths of length $N$ of $Y^{x_0}$ and $X$ respectively, using $\mathcal{X}=\mathcal{S}$ and $\mathcal{Y}=\mathcal{S}^N$:
\begin{align*}
-D_{KL}(\mathbb{P}(Y^{x_0}_N=\cdot)||\mathbb{P}(X_N=\cdot))&\geq -D_{KL}(\mathcal{R}(\cdot;x_0)||\mathcal{P}).
\end{align*}
Finally, by Proposition \ref{prop:DTMC_KL}:
\begin{align*}
-D_{KL}(\mathcal{R}(\cdot;x_0)||\mathcal{P})&=-D_{KL}(\mathbb{P}(Y^{x_0}_0=\cdot)||\mathbb{P}(X_0=\cdot))\\ 
&-\sum_{k=0}^{N-1}\mathbb{E}_{x_k\sim r_k(\cdot;x_0)}\left[\sum_{y\in S}R_k(y,x_k;x_0)\log\left(\frac{R_k(y,x_k;x_0)}{Q_k(y,x_k)}\right)\right]\\ 
&=-\sum_{k=0}^{N-1}\mathbb{E}_{x_k\sim r_k(\cdot;x_0)}\left[\sum_{y\in S}R_k(y,x_k;x_0)\log\left(\frac{R_k(y,x_k;x_0)}{Q_k(y,x_k)}\right)\right],
\end{align*}
where in the last step we used Lemma \ref{lemma:KLprops} and that $\mathbb{P}(Y^{x_0}_0=\cdot)=\mathbb{P}(X_0=\cdot)$ by assumption.
\end{proof}

\subsubsection{Proof of Proposition \ref{prop:general_ELBO}: A Simple, Discrete Time Perspective}\label{subsec:proofofelbo}
Here we provide a novel, self-contained proof of Proposition \ref{prop:general_ELBO}. In particular, this encapsulates the proof of the standard DLM ELBO \eqref{eq:AOARMELBODTMCform} by setting $\text{Cat}\left(i;G_\phi(\mathbf{z},\mathbf{x})\right)=\frac{1}{N_M(\mathbf{x})}$ for all $\mathbf{z}$ and $i$ such that $x^i=\mathbf{m}$. This proof methodology helps elucidate the freedom of choice of reference dynamics, and does not require any of the prerequisite knowledge on continuous time Markov chains as other existing proofs in the discrete diffusion literature.
\ELBO*
\begin{proof}
By \eqref{eq:planned_transition_probs}, for $\mathbf{x}\in\mathcal{V}^L$, $p^{G_\phi}_\theta(\mathbf{x})=\mathbb{P}(X^{G_\phi,\theta}_L=\mathbf{x})$ where $X^{G_\phi,\theta}$ is the time homogeneous discrete time Markov chain on $\mathcal{V}^L$ with transition matrix given for $\mathbf{x},\mathbf{y}\in\mathcal{V}^L$ by:
\begin{align*}
Q^{\theta,\phi}(\mathbf{y},\mathbf{x})=
\begin{cases}
\text{Cat}\left(y^i;D^i_\theta(\mathbf{x})\right)F_{\theta,\phi}(\mathbf{x},y^i,i),&\quad d_{\text{HAM}}(\mathbf{x},\mathbf{y})=1,x^i\neq y^i,x^i=\mathbf{m}\\
0,&\text{otherwise}
\end{cases}
\end{align*}
and $\mathbb{P}(X^{G_\phi,\theta}_0=\mathbf{x})=\text{Cat}(\mathbf{x};\delta((\mathbf{m},\dots,\mathbf{m})))$.

So to obtain an ELBO for $p^{G_\phi}_\theta$ using Proposition \ref{prop:ELBOviaDTMC}, we select any family of transition matrices $R(\cdot,\cdot;\mathbf{x}_0)$ parameterized by $\mathbf{x}_0\in\mathcal{V}^L$ determining a family Markov chains $Y^{\mathbf{x}_0}$ such that 
\begin{align}\label{eq:required_condition_tildeX}
\mathbb{P}(Y^{\mathbf{x}_0}_L=\mathbf{x}_0|Y^{\mathbf{x}_0}_0=(\mathbf{m},\dots,\mathbf{m}))=1.
\end{align}
There are infinitely many such choices for the reference dynamics $Y^{\mathbf{x}_0}$, but in order to make the paths of the reference dynamics apriori as close to those of $X^{G_\phi,\theta}$ as possible, we opt to keep the planner $G_\phi$ in the transition probabilities and simply replace $D_\theta(\mathbf{x})$ by $\delta(\mathbf{x}_0)$ in the rate matrix $Q^{\theta,\phi}$. Recalling the definition of $F_{\theta,\phi}$ from \eqref{eq:Gtilde}, this yields $R(\cdot,\cdot;\mathbf{x}_0)$ to be the time homogeneous rate matrix $R^{G_\phi}(\cdot,\cdot;\mathbf{x}_0)$ given by, for $\mathbf{x},\mathbf{y}\in \mathcal{V}^L$:
\begin{align*}
R^{G_\phi}(\mathbf{y},\mathbf{x};\mathbf{x}_0)=\begin{cases}\text{Cat}(i;G_\phi(\mathbf{x}_0,\mathbf{x}))\text{Cat}(y^i;\delta(x^i_0)),&\quad d_{\text{HAM}}(\mathbf{x},\mathbf{y})=1,x^i\neq y^i,x^i=\mathbf{m}\\ 
0,&\text{otherwise}
\end{cases}.
\end{align*}
Observe that this is the same transition matrix from \eqref{eq:vanillatransitionmatrix} but with $1/N_M(\mathbf{x})$ replaced by $G_\phi^i(\mathbf{x}_0,\mathbf{x})$. Also observe that indeed \eqref{eq:required_condition_tildeX} is satisfied, since at each step we simply choose a coordinate $i$ among masked positions of $Y^{\mathbf{x}_0}$ with probability $\text{Cat}(i;G_\phi(\mathbf{x}_0,\mathbf{x}))$ and flip it from $\mathbf{m}$ to $x^i_0.$

Then, inserting these choices into Proposition \ref{prop:ELBOviaDTMC} and using 
\begin{align*}
\log\left(\frac{\text{Cat}(i;G_\phi(\mathbf{x}_0,\mathbf{x}))}{\text{Cat}\left(y^i;D^i_\theta(\mathbf{x})\right)F_{\theta,\phi}(\mathbf{x},y^i,i)}\right)&=-\log\left(\text{Cat}\left(y^i;D^i_\theta(\mathbf{x})\right)\right)-\log\left(\frac{F_{\theta,\phi}(\mathbf{x},y^i,i)}{\text{Cat}(i;G_\phi(\mathbf{x}_0,\mathbf{x}))}\right)\\ 
\end{align*}
the proof of Proposition \ref{prop:general_ELBO} is complete. 
\end{proof}

\subsubsection{orm of the Optimal Planner for a Fixed Denoiser: Proof of Proposition \ref{prop:optimalplannerform}}\label{sec:minimizerproof}
Recall that the loss associated to the ELBO from Proposition \ref{prop:ELBOviaDTMC} is:
\begin{align}\label{eq:lossassociatedwithELBO}
\mathcal{L}(\theta,\phi)
= - \mathbb{E}_{\mathbf{x}_0 \sim p_{\text{data}}}\!\left[
   \mathcal{E}^{\theta,\phi}(\mathbf{x}_0)
\right].
\end{align}
A natural question is: for a fixed (imperfect) denoiser, what is the optimal form of $G_\phi$ which minimizes this objective? We will show here:

\begin{mdframed}[style=MyFrame2]
\begin{restatable}{proposition}{optimalplannerform}
\label{prop:optimalplannerform}
\rev{Consider the loss $
\mathcal{L}(\phi)
= - \mathbb{E}_{\mathbf{x}_0 \sim p_{\text{data}}}\!\left[
   \mathcal{E}^{\theta,\phi}(\mathbf{x}_0)
\right]$ where $\mathcal{E}^{\theta,\phi}(\mathbf{x}_0)$ is as in Proposition \ref{prop:ELBOviaDTMC} and $D_\theta$ is fixed. Then $\mathcal{L}(\phi)$ is uniquely minimized over $G_\phi$ when, for $\mathbf{x}_0,\mathbf{x}_k\in\mathcal{V}^L$ with $\mathbf{x}_0$ containing no masked tokens and $\mathbf{x}_k$ equal to $\mathbf{x}_0$ in all unmasked positions:
\begin{align}
G^i_\phi(\mathbf{x}_0,\mathbf{x}_k)\propto q^i_{\theta,\phi}(x_0^i|\mathbf{x}_k),\label{eq:formofminimizers}
\end{align}
for $q^i_{\theta,\phi}$ the transition probabilities of the data generating discrete time Markov chain's dynamics from \eqref{eq:planned_transition_probs}.}
\end{restatable}
\end{mdframed}
That is, our loss finds a planner which is mutually consistent with the denoiser in that it picks at each step a coordinate $i$ to unmask with probability proportional to the probability of denoising coordinate $i$ to $x_0^i$ at the current generation step under the planned path. In short: the optimal planner tends to assign high mass to trajectories whose sequence of single-coordinate updates yields high likelihood of producing the observed $x_0$, which during training is supervised by the data. Observe that \eqref{eq:formofminimizers} is a fixed-point equation relating values of $G^i_\phi(\mathbf{x}_0,\mathbf{x}_k)$ to itself and the imperfect denoiser through $F_{\theta,\phi}(\mathbf{x}_k,x_0^i,i)$ of \eqref{eq:Gtilde}, so one can not simply \textit{choose} to use this optimal planner and indeed needs to train towards optimality in practice. 
\begin{proof}
To see this, we first observe that for fixed $x_k$, no constraint need be enforced in the relationship between the distributions $G_\phi(\mathbf{z},\mathbf{x}_k),G_\phi(\bar{\mathbf{z}},\bar{\mathbf{x}}_k)\in \Delta^L$ when $(\bar{\mathbf{z}},\bar{\mathbf{x}}_k)\neq (\mathbf{z},\mathbf{x})\in \mathcal{V}^{2L}.$ This means that minimizing \eqref{eq:lossassociatedwithELBO} is equivalent to minimizing the integrand:
\begin{align}
&\sum_{i=1,x^i_k=m}^L \text{Cat}(i;G_\phi(\mathbf{x}_0,\mathbf{x}_k))\log\left(\frac{\text{Cat}(i;G_\phi(\mathbf{x}_0,\mathbf{x}_k))}{\text{Cat}(x_0^i;D^i_\theta(\mathbf{x}_k))F_{\theta,\phi}(\mathbf{x}_k,x_0^i,i)}\right)\nonumber\\ 
&=D_{\text{KL}}(G_\phi(\mathbf{x}_0,\mathbf{x}_k)||r_{\theta,\phi}(\mathbf{x}_0,\mathbf{x}_k))+\log(C_{\theta,\phi}(\mathbf{x}_0,\mathbf{x}_k))\label{eq:lossintegrand}
\end{align}
for fixed $x_0,x_k$, and $k$, where $r_{\theta,\phi}(\mathbf{x}_0,\mathbf{x}_k)\in \Delta^L$ is given by:
\begin{align*}
\text{Cat}(i;r_{\theta,\phi}(\mathbf{x}_0,\mathbf{x}_k))\propto \text{Cat}(x_0^i;D^i_\theta(\mathbf{x}_k))F_{\theta,\phi}(\mathbf{x}_k,x_0^i,i)=q^i_{\theta,\phi}(x^i_0|\mathbf{x}_k),
\end{align*}
and where $C_{\theta,\phi}$ is its normalizing constant.

Next we observe that, although $C_{\theta,\phi}$ appears to depend implicitly on $G_{\theta,\phi}(\mathbf{x}_0,\mathbf{x}_k)$ through $F_{\theta,\phi}(\mathbf{x}_k,x_0^i,i)$ (recall \eqref{eq:Gtilde}) this does not affect the minimization problem. To see this, we observe that 
\begin{align*}
F_{\theta,\phi}(\mathbf{x}_k,x_0^i,i)&=\prod_{j\neq i}^L\text{Cat}(x_0^j;D^j_\theta(\mathbf{x}_k))\text{Cat}(i;G_\phi(\mathbf{x}_0,\mathbf{x}_k))\\ 
&+\mathbb{E}_{\mathbf{z}\sim D_\theta(\mathbf{x})}\left[\mathbbm{1}_{\mathbf{z^{-i}\neq \mathbf{x}^{-i}_0}}\text{Cat}\left(i;G_\phi(\mathbf{z}^{-i,x_0^i},\mathbf{x}_k)\right)\right]
\end{align*}
where for $\mathbf{x}\in\mathcal{V}^L$, $\mathbf{x}^{-i}\in\mathcal{V}^{L-1}$ is $\mathbf{x}$ but with its $i$'th component removed.

Then 
\begin{align*}
C_{\theta,\phi}(\mathbf{x}_0,\mathbf{x}_k)&=\sum_{i=1}^L \text{Cat}(x_0^i;D^i_\theta(\mathbf{x}_k))F_{\theta,\phi}(\mathbf{x}_k,x_0^i,i)\\ 
&=\prod_{j=1}^L \text{Cat}(x_0^j;D^j_\theta(\mathbf{x}_k))\left(\sum_{i=1}^L \text{Cat}(i;G_\phi(\mathbf{x}_0,\mathbf{x}_k))\right)\\ 
&+ \sum_{i=1}^L \text{Cat}(x_0^i;D^i_\theta(\mathbf{x}_k)) \mathbb{E}_{\mathbf{z}\sim D_\theta(\mathbf{x})}\left[\mathbbm{1}_{\mathbf{z^{-i}\neq \mathbf{x}^{-i}_0}}\text{Cat}\left(i;G_\phi(\mathbf{z}^{-i,x_0^i},\mathbf{x}_k)\right)\right]\\ 
&=\prod_{j=1}^L \text{Cat}(x_0^j;D^j_\theta(\mathbf{x}_k))\\ 
&+\sum_{i=1}^L \text{Cat}(x_0^i;D^i_\theta(\mathbf{x}_k)) \mathbb{E}_{\mathbf{z}\sim D_\theta(\mathbf{x})}\left[\mathbbm{1}_{\mathbf{z^{-i}\neq \mathbf{x}^{-i}_0}}\text{Cat}\left(i;G_\phi(\mathbf{z}^{-i,x_0^i},\mathbf{x}_k)\right)\right].
\end{align*}
That is, $C_{\theta,\phi}(\mathbf{x}_0,\mathbf{x}_k)$ only depends on $G_{\phi}(\mathbf{z},\mathbf{x}_k)$ for $\mathbf{z}\neq \mathbf{x}_0$. Hence, minimizing \eqref{eq:lossintegrand} over $G_\phi(\mathbf{x}_0,\mathbf{x}_k)$ is equivalent to minimizing the KL divergence between $G_\phi(\mathbf{x}_0,\mathbf{x}_k)$ and $r_{\theta,\phi}(\mathbf{x}_0,\mathbf{x}_k)$. By Lemma \ref{lemma:KLprops}, this occurs precisely when \eqref{eq:formofminimizers} holds.
\end{proof}

\subsubsection{Derivation of General Planned Transition Probabilities (Eq. \ref{eq:planned_transition_probs})}
\label{subsubsec:derivetransitions}

Recall that the sampling methodology discussed in \S\ref{sec:method} is as per Alg. \ref{alg:plannedsamplingpseudocode}.

\begin{algorithm}[h]
\small
\caption{Gillespie Sampler with a Planner}
\label{alg:plannedsamplingpseudocode}
\begin{algorithmic}[1]
\State \textbf{Initialize:} $t \gets 0, \mathbf{x}_0 \gets (m, \dots, m)$, planner $G_\phi$, denoiser $D_\theta$
\For{$k = 0 : L-1$}
    \State {\colorbox{gray!20}{\textbf{Plan}}} Sample $\mathbf{z} \sim D_\theta(\mathbf{x}_k)$
    \State Sample dimension $i \sim G_\phi(\mathbf{z}, \mathbf{x}_k)$
    \State {\colorbox{gray!20}{\textbf{Denoise}}}
    \State $\mathbf{x}_{k+1} \gets \mathbf{x}_k$
    \State $x_{k+1}^i \gets z^i$
\EndFor
\State \textbf{return} $\mathbf{x}_L$
\State
\end{algorithmic}
\end{algorithm}

Let $p_\theta^{G_\phi}\in \Delta^{d^L}$ denote the distribution on $\mathcal{V}^L$ of a sample obtained via running Alg. \ref{alg:plannedsamplingpseudocode}
, and let $p_{\theta,k+1}^{G_\phi}(\cdot|\mathbf{x}_k)\in \Delta^{d^L}$ denote the distribution of $\mathbf{x}_{k+1}$ given $\mathbf{x}_k$. With abuse of notation, we will also let $p_{\theta,k+1}^{G_\phi}(\cdot,\cdot|\mathbf{x}_k)\in \Delta^{d^{L^2}}$ denote the joint distribution of $\mathbf{x}_{k+1}$ and the $k+1$'st sample $\mathbf{z}$. Note that $\mathbf{x}_{k+1}$ may only differ from $\mathbf{x}_k$ in a single coordinate $i$ such that $x_k^i=\mathbf{m}$. So, letting for $\mathbf{x}\in\mathcal{V}^L$, $i\in [1:L]$, $y\in\mathcal{V}$, $\mathbf{x}^{-i,y}\in \mathcal{V}^L$ be equal to $\left[ x^0, x^1, \dots,x^{i-1},y, x^{i+1}, \dots x^L\right]$:
\begin{align*}
p_{\theta,k+1}^{G_\phi}(\mathbf{x}^{-i,y}_k|\mathbf{x}_k)&=\sum_{\mathbf{z}\in\mathcal{V}^L}p_{\theta,k+1}^{G_\phi}(\mathbf{x}^{-i,y}_k,\mathbf{z}|\mathbf{x}_k)\nonumber\\ 
& = \sum_{\mathbf{z}\in\mathcal{V}^L:z^i=y}\prod_{j=1}^L\text{Cat}\left(z^j;D^j_\theta(\mathbf{x}_k)\right)\text{Cat}\left(i;G_\phi(\mathbf{z},\mathbf{x}_k)\right)\nonumber\\ 
& =\text{Cat}\left(y;D^i_\theta(\mathbf{x}_k)\right)\sum_{\mathbf{z}\in\mathcal{V}^L}\prod_{j=1}^L\text{Cat}\left(z^j;D^j_\theta(\mathbf{x}_k)\right)\text{Cat}\left(i;G_\phi(\mathbf{z}^{-i,y},\mathbf{x}_k)\right)\nonumber\\ 
& = \text{Cat}\left(y;D^i_\theta(\mathbf{x}_k)\right)F_{\theta,\phi}(\mathbf{x}_k,y,i)
\end{align*}
where $F_{\theta,\phi}$ is as in \eqref{eq:Gtilde}. $p_{\theta,k+1}^{G_\phi}(\mathbf{x}^{-i,y}_k|\mathbf{x}_k)$ is precisely what we denote as $q^{i}_{\theta,\phi}(y;\mathbf{x}_k)$ in \eqref{eq:planned_transition_probs}.

\subsubsection{Proof of Proposition \ref{prop:greedynotanELBO} (Greedy Ancestral Violates the Vanilla ELBO)}\label{subsec:greedynotanELBOproof}
Continuing with the notation from the previous subsection, letting $p_{\theta,\Sigma}^{G_\phi}(\mathbf{x},\sigma)$ denote the probability of generating the sample $\mathbf{x}$ along the path $\sigma\in \Sigma^L$, we have:
\begin{align*}
p_{\theta,\Sigma}^{G_\phi}(\mathbf{x},\sigma)&=\prod_{k=1}^L p_{\theta,k}^{G_\phi}(\mathbf{x}^{\sigma(<k+1)}|\mathbf{x}^{\sigma(<k)})\\ 
& = \prod_{k=1}^L\text{Cat}\left(x^{\sigma(k)};D^{\sigma(k)}_\theta(\mathbf{x}^{\sigma(<k)})\right)F_{\theta,\phi}(\mathbf{x}^{\sigma(<k)},x^{\sigma(k)},\sigma(k)),
\end{align*}
where here we use the same notation as in \eqref{eq:AOARMELBO}. Thus, we arrive at:
\begin{align}\label{eq:planned_final_distribution}
p^{G_\phi}_\theta(\mathbf{x})&=\sum_{\sigma\in\Sigma^L}p_{\theta,\Sigma}^{G_\phi}(\mathbf{x},\sigma)\nonumber\\ 
& = \sum_{\sigma\in\Sigma^L}\prod_{k=1}^L\text{Cat}\left(x^{\sigma(k)};D^{\sigma(k)}_\theta(\mathbf{x}^{\sigma(<k)})\right)F_{\theta,\phi}(\mathbf{x}^{\sigma(<k)},x^{\sigma(k)},\sigma(k)).
\end{align}
Using \eqref{eq:planned_final_distribution} and specializing $G_\phi$ to the case of greedy ancestral sampling, we readily obtain a proof of Proposition \ref{prop:greedynotanELBO}.
\greedynotanELBO*
\begin{proof}
Since we just need a counterexample to the ELBO property, we may restrict to the case of $L=2$ and $\mathcal{V}=\lbrace 1,2,m\rbrace$. To construct an example denoiser in this setting, we only need to define $6$ terms:
\begin{align*}
&c_1=\text{Cat}\left(1,D^1_\theta(\mathbf{m},\mathbf{m})\right),\quad &c_2=&\text{Cat}\left(1,D^2_\theta(\mathbf{m},\mathbf{m})\right),\quad &c_3=&\text{Cat}\left(1,D^1_\theta(\mathbf{m},1)\right),\\ 
&c_4=\text{Cat}\left(1,D^1_\theta(\mathbf{m},2)\right),\quad &c_5=&\text{Cat}\left(1,D^2_\theta(1,\mathbf{m})\right),\quad &c_6=&\text{Cat}\left(1,D^2_\theta(2,\mathbf{m})\right).
\end{align*}
Then:
\begin{align*}
&\text{Cat}\left(2,D^1_\theta(\mathbf{m},\mathbf{m})\right)&=&1-c_1,\quad \text{Cat}\left(2,D^2_\theta(\mathbf{m},\mathbf{m})\right)&=&1-c_2,\quad 
\text{Cat}\left(2,D^1_\theta(\mathbf{m},1)\right)&=&1-c_3,\\
&\text{Cat}\left(2,D^1_\theta(\mathbf{m},2)\right)&=&1-c_4,\quad\text{Cat}\left(2,D^2_\theta(1,\mathbf{m})\right)&=&1-c_5,\quad \text{Cat}\left(2,D^2_\theta(2,\mathbf{m})\right)&=&1-c_6.
\end{align*}
Note that imperfect denoisers need not be inconsistent, meaning that there is no reason to enforce any relationship between $c_1,\dots,c_6\in(0,1)$.

Let's take for our example $\mathbf{x}=(1,1)$.

Then, from \eqref{eq:AOARMELBO}:
\begin{align*}
\mathcal{E}^{\theta,\text{unif}}(\mathbf{x})=\frac{1}{L!}\sum_{\sigma\in\Sigma^L}\sum_{i=1}^L\log\left(\text{Cat}(x^{\sigma(i)};D_\theta^i(\mathbf{x}^{\sigma(<i)}\right)& = \frac{1}{2}\log(c_1c_2c_3c_5).
\end{align*}

To find $p^{\text{greedy}}_\theta(\mathbf{x})$, we use \eqref{eq:planned_final_distribution}:
\begin{align*}
&p^{\text{greedy}}_\theta(\mathbf{x})=\sum_{\sigma\in\Sigma^L}\prod_{i=1}^L\text{Cat}\left(x^{\sigma(i)};D^{\sigma(i)}_\theta(\mathbf{x}^{\sigma(<i)})\right)F_{\theta,\phi}(\mathbf{x}^{\sigma(< i)},x^{\sigma(i)},\sigma(i))\\ 
&=c_1c_5F_{\theta,\phi}((\mathbf{m},\mathbf{m}),1,1)F_{\theta,\phi}((1,\mathbf{m}),1,2)+c_2c_3F_{\theta,\phi}((\mathbf{m},\mathbf{m}),1,2)F_{\theta,\phi}((\mathbf{m},1),1,1)\\ 
& =: c_1c_5 d_1 +c_2c_3 d_2. 
\end{align*}

$d_1,d_2$ will be found as functions of the $c$'s, and we will find $c$'s such that 
\begin{align*}
(c_1c_5 d_1 +c_2c_3 d_2)^2&<c_1c_2c_3c_5
\end{align*}
Taking $\log$ of both sides and dividing by 2, the inequality will be shown.

Inserting the definition of $F_{\theta,\phi}$ from \eqref{eq:Gtilde} and the specific choice of $G_\phi$ from \eqref{eq:greedyancestralG}, we have
\begin{align*}
F_{\theta,\phi}((\mathbf{m},\mathbf{m}),1,1)&=\mathbb{E}_{z\sim D^2_{\theta}(\mathbf{m},\mathbf{m})}\left[\text{Cat}\left(1,G_\phi((1,z),(\mathbf{m},\mathbf{m}))\right)\right]\\ 
& = c_2 \text{Cat}\left(1,G_\phi((1,1),(\mathbf{m},\mathbf{m}))\right)+(1-c_2)\text{Cat}\left(1,G_\phi((1,2),(\mathbf{m},\mathbf{m}))\right)\\
& = c_2\mathbbm{1}_{c_1>c_2}+(1-c_2)\mathbbm{1}_{c_1>1-c_2}
\end{align*}
and
\begin{align*}
F_{\theta,\phi}((1,\mathbf{m}),1,2)&=\mathbb{E}_{z\sim D^1_{\theta}(1,\mathbf{m})}\left[\text{Cat}\left(2,G_\phi((z,1),(1,\mathbf{m}))\right)\right]\\ 
& = c_5 \text{Cat}\left(2,G_\phi((1,1),(1,\mathbf{m}))\right)+(1-c_5)\text{Cat}\left(2,G_\phi((2,1),(1,\mathbf{m}))\right)\\
& = c_5+(1-c_5)=1,
\end{align*}
so $d_1=c_2\mathbbm{1}_{c_1>c_2}+(1-c_2)\mathbbm{1}_{c_1>1-c_2}$. Here $\mathbbm{1}$ denotes the indicator function.

Similarly,
\begin{align*}
F_{\theta,\phi}((\mathbf{m},\mathbf{m}),1,2)&=\mathbb{E}_{z\sim D^1_{\theta}(\mathbf{m},\mathbf{m})}\left[\text{Cat}\left(2,G_\phi((z,1),(\mathbf{m},\mathbf{m}))\right)\right]\\ 
& = c_1 \text{Cat}\left(2,G_\phi((1,1),(\mathbf{m},\mathbf{m}))\right)+(1-c_1)\text{Cat}\left(2,G_\phi((2,1),(\mathbf{m},\mathbf{m}))\right)\\
& = c_1\mathbbm{1}_{c_1<c_2}+(1-c_1)\mathbbm{1}_{1-c_1<c_2}
\end{align*}
and $F_{\theta,\phi}((\mathbf{m},1),1,1)=1$, so $d_2=c_1\mathbbm{1}_{c_1>c_2}+(1-c_1)\mathbbm{1}_{1-c_1>c_2}$.

Taking any $c_1,c_2$ such that $c_2>c_1$ and $1>c_1+c_2$, we get $d_1=0$ and $d_2=c_1$. Then 
\begin{align*}
(c_1c_5 d_1 +c_2c_3 d_2)^2=c_1^2c^2_2c_3^2&<c_1c_2c_3c_5\\ 
\Leftrightarrow c_1 c_2 c_3&<c_5
\end{align*}
There are many choices here that work. For instance, $c_1=c_3=1/4,c_2=c_5=1/2$, as 
\begin{align*}
c_1c_2c_4=1/32<1/2=c_5.
\end{align*}
\end{proof}

Note that this means there are data distributions and denoisers for which 
\begin{align*}
\mathcal{L}^{\text{unif}}(\theta)=-\mathbb{E}_{\mathbf{x}_0\sim p_{\text{data}}}\left[\mathcal{E}^{\theta,\text{unif}}(\mathbf{x}_0\right]<-\mathbb{E}_{\mathbf{x}_0\sim p_{\text{data}}}\left[\log p_\theta^{\text{greedy}}(\mathbf{x}_0)\right],
\end{align*}
(recall here the discussion in Subsection \ref{subsubsec:roleofELBO}), so
\begin{align*}
D_{KL}(p_{data}||p^{\text{greedy}}_\theta)&=-H(p_{data})-\mathbf{E}_{\mathbf{x}_0\sim p_{\text{data}}}\left[\log p_\theta^{\text{greedy}}(\mathbf{x}_0)\right]\\ 
&>-H(p_{data})+\mathcal{L}^{\text{mask}}(\theta).
\end{align*}
This means that training to make $\mathcal{L}^{\text{mask}}(\theta)$ small cannot provide any guarantee that $p^{\text{greedy}}_\theta$ is close to $p_{data}$.

\subsection{Instantiations}\label{app:instantiations}

Our general P-ELBO recovers familiar training objectives when we plug in specific planners.

\looseness=-1
\xhdr{Uniform planner}  
If $G_\phi$ selects uniformly among masked tokens---that is, 
\(\text{Cat}(i;G_\phi(\mathbf{z},\mathbf{x}))=1/N_M(\mathbf{x})\) for masked $i$ and $0$ otherwise--then planner-based sampling reduces to vanilla ancestral sampling.  
In this case $G_\phi$ does not depend on $\mathbf{z}$, which makes $\mathcal{E}_2^{\theta,\phi}(\mathbf{x}_0)=0$.  
Substituting into Prop.~\ref{prop:general_ELBO}, we exactly recover the standard DLM ELBO in \eqref{eq:AOARMELBO}.

\looseness=-1
\xhdr{Greedy planner}  
If $G_\phi$ always selects the most confident position according to the denoiser, as in \eqref{eq:greedyancestralG}, then the sampling path becomes deterministic. The associated ELBO is then as follows:
\begin{restatable}{corollary}{greedyELBO}\label{cor:greedyELBO}
Let $Y_0=(\mathbf{m},\dots,\mathbf{m})$, and define recursively for $k=1,\dots,L$:
\begin{align*}
j_k = \argmax_{i:\,Y_{k-1}^i=\mathbf{m}}
      \text{Cat}(x_0^i;D_\theta^i(Y_{k-1})),\quad Y_k^i = 
\begin{cases}
x_0^{j_k}, & i=j_k,\\
Y_{k-1}^i, & \text{otherwise}.
\end{cases}
\end{align*}
For $p_\theta^{\text{greedy}}$ the distribution of $\mathbf{x}_L$ under greedy ancestral sampling,  
\begin{align*}
\log(p_\theta^{\text{greedy}}(\mathbf{x}_0))
\;\;\geq\;\; 
\mathcal{E}^{\theta,\text{greedy}}(\mathbf{x}_0)
= L \,\mathbb{E}_{k\sim \text{Unif}([0:L-1])}\!
   \left[\sum_{i:\,Y_k^i=\mathbf{m}}
     \log\text{Cat}(x_0^i;D_\theta^i(Y_k))\right].
\end{align*}
\end{restatable}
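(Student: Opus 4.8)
The plan is to specialize Proposition~\ref{prop:general_ELBO} to the greedy planner of \eqref{eq:greedyancestralG}; the whole argument hinges on the observation that the \emph{reference} dynamics use the planner evaluated at the clean datum, $G_\phi(\mathbf{x}_0,\cdot)$, which for the greedy rule is a point mass. Concretely, $G_\phi(\mathbf{x}_0,\mathbf{x})=\delta(\argmax_{i:\,x^i=\mathbf{m}}\text{Cat}(x_0^i;D_\theta^i(\mathbf{x})))$, so the reference chain unmasks a single prescribed coordinate at each step and its trajectory is deterministic. First I would verify, by induction on $k$ using the reference transition matrix (that of \eqref{eq:vanillatransitionmatrix} with $1/N_M(\mathbf{x})$ replaced by the planner weight, exactly as in Proposition~\ref{prop:general_ELBO}) started at $Y_0=(\mathbf{m},\dots,\mathbf{m})$, that this deterministic trajectory is exactly the recursively defined $Y_k$, so that $r_k^{G_\phi}(\cdot;\mathbf{x}_0)=\delta(Y_k)$; matching the update rule shows the index $j_k$ of the statement is precisely the greedy argmax taken at $Y_{k-1}$.

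Next I would substitute this into the two pieces of the P-ELBO. Because $r_k^{G_\phi}$ is a point mass, the inner expectation over $\mathbf{x}_k$ collapses to evaluation at $Y_k$, and the planner weights $\text{Cat}(i;G_\phi(\mathbf{x}_0,Y_k))$ become the indicator $\mathbbm{1}[i=j_{k+1}]$. The sum over masked positions in $\mathcal{E}_1^{\theta,\phi}$ therefore collapses to the single optimal index, leaving $\mathcal{E}_1^{\theta,\text{greedy}}(\mathbf{x}_0)=L\,\mathbb{E}_k[\log\text{Cat}(x_0^{j_{k+1}};D_\theta^{j_{k+1}}(Y_k))]$, i.e.\ log-probability accumulated only along the denoiser-optimal order rather than a uniformly random one. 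This is the term displayed in the corollary once the per-step contribution is written with its (degenerate) planner weight.

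The hard part will be the correction term $\mathcal{E}_2^{\theta,\phi}$. Under the same collapse its summand reduces to $-\log F_{\theta,\phi}(Y_k,x_0^{j_{k+1}},j_{k+1})$, where by \eqref{eq:Gtilde} the effective-planner factor $F_{\theta,\phi}$ is the probability, over $\mathbf{z}\sim D_\theta(Y_k)$, that coordinate $j_{k+1}$ carrying the true token $x_0^{j_{k+1}}$ remains the greedy argmax against the sampled scores at the other masked coordinates. Since $F_{\theta,\phi}\in(0,1]$ this term is nonpositive, so the rigorously valid bound inherited from Proposition~\ref{prop:general_ELBO} is $\log p_\theta^{\text{greedy}}(\mathbf{x}_0)\ge \mathcal{E}_1^{\theta,\text{greedy}}+\mathcal{E}_2^{\theta,\text{greedy}}$, which by \eqref{eq:planned_transition_probs} equals $L\,\mathbb{E}_k[\log q^{j_{k+1}}_{\theta,\phi}(x_0^{j_{k+1}}\mid Y_k)]$. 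Reaching the clean cross-entropy form of the statement thus requires controlling $F_{\theta,\phi}$ — an expectation of an argmax indicator over the denoiser's joint output that has no elementary closed form — so I would either retain this factor exactly or bound it below to isolate the deterministic cross-entropy along the optimal path. I expect the delicate point to be precisely this treatment of $\mathcal{E}_2$ and the sign of its contribution, not the deterministic-path reduction, which is routine.
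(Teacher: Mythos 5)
Your reduction to Proposition~\ref{prop:general_ELBO} is exactly the paper's route: the reference chain driven by $G_\phi(\mathbf{x}_0,\cdot)$ is deterministic, $r_k^{G_\phi}(\cdot;\mathbf{x}_0)=\delta(Y_k)$, the planner weights become indicators of $j_{k+1}$, and $\mathcal{E}_1$ collapses to the single-coordinate log-likelihood along the greedy path while $\mathcal{E}_2$ collapses to $\sum_k \log F_{\theta,\phi}(Y_k,x_0^{j_{k+1}},j_{k+1})$. All of that is correct and matches the paper's proof in \S\ref{app:instantiations}.

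The gap is that you stop exactly where the one nontrivial step occurs. You correctly note that $\log p_\theta^{\text{greedy}}(\mathbf{x}_0)\ge \mathcal{E}_1+\mathcal{E}_2$ with $\mathcal{E}_2\le 0$, and then say you would ``either retain this factor exactly or bound it below'' — but the corollary's right-hand side is \emph{not} $\mathcal{E}_1$ alone (which your phrasing ``isolate the deterministic cross-entropy along the optimal path'' suggests you are aiming for, and which would anyway require $\mathcal{E}_2\ge 0$, i.e.\ the wrong sign). The stated bound sums $\log\text{Cat}(x_0^i;D_\theta^i(Y_k))$ over \emph{all} masked positions $i$ at each step, and the extra terms beyond $i=j_{k+1}$ come precisely from a lower bound on $F_{\theta,\phi}$ that you do not supply. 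The paper's key inequality is
\begin{align*}
F_{\theta,\phi}(Y_k,x_0^{j_k},j_k)
=\sum_{\substack{\mathbf{z}:\,\text{Cat}(z^i;D_\theta^i(Y_k))<\text{Cat}(x_0^{j_k};D_\theta^{j_k}(Y_k))\\ \forall i\text{ masked},\,i\neq j_k}}\ \prod_{i=1}^L\text{Cat}(z^i;D_\theta^i(Y_k))
\;\ge\; \prod_{\substack{i:\,Y_k^i=\mathbf{m}\\ i\neq j_k}}\text{Cat}(x_0^i;D_\theta^i(Y_k)),
\end{align*}
which holds because, by the very definition of $j_k$ as the argmax of $\text{Cat}(x_0^i;D_\theta^i(Y_k))$ over masked $i$, the particular outcome $z^i=x_0^i$ at every masked $i\neq j_k$ lies inside the summation event (the unconstrained coordinates marginalize to $1$). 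Taking logs and adding this to $\mathcal{E}_1$ merges the $j_k$ term with the product over the remaining masked coordinates and yields exactly $\sum_{k}\sum_{i:\,Y_k^i=\mathbf{m}}\log\text{Cat}(x_0^i;D_\theta^i(Y_k))$. Without this inequality the argument does not reach the stated form; with it, your outline becomes the paper's proof. (Separately, note the corollary as printed omits the $\log$ inside the sum; the derivation produces log-probabilities.)
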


Compared to the uniform case in \eqref{eq:AOARMELBO}, the greedy ELBO only accumulates logits along the \emph{greedy path} defined by the denoiser. This highlights the mismatch: the standard DLM objective trains on uniformly random paths, but greedy inference relies on a single deterministic path.

\looseness=-1
\xhdr{Soft greedy planner} We use the soft greedy planner:
\begin{align}\label{eq:softmaxG}
\text{Cat}\left(j;G^\tau_\phi(\mathbf{z},\mathbf{x})\right)&:= \exp\left(\frac{1}{\tau}\log\left(\text{Cat}(z^j;D_\theta^j(\mathbf{x})\right)\right)/C_\tau(\mathbf{z},\mathbf{x})\\
C^\tau(\mathbf{z},\mathbf{x})&:=\sum_{i=1,x^i= \mathbf{m}}^L\exp\left(\frac{1}{\tau}\log\left(\text{Cat}(z^i;D_\theta^i(\mathbf{x})\right)\right).\nonumber 
\end{align}
as a regularized approximation to the greedy planner \ref{eq:greedyancestralG} in order to motivate, after performing the series of modifications discussed in \S \ref{subsec:efficientimplementation}, the PAPL training algorithm \ref{alg:papl_training}. The ELBO associated to this choice of planner is:
\begin{corollary}\label{cor:softmaxelbo}
For $p_{\theta}^{\tau}$ the distribution of $\mathbf{x}_L$ resulting from the planned sampling Algorithm of \S \ref{sec:method} with $G_\phi=G_\phi^\tau$ as in \eqref{eq:softmaxG}, we have:
{\footnotesize
\begin{align*}
\log(p_{\theta}^{\tau}(\mathbf{x}_0))&\geq \mathcal{E}^{\theta,\phi,\tau}_1(\mathbf{x}_0)+\mathcal{E}^{\theta,\phi,\tau}_2(\mathbf{x}_0),\\
\mathcal{E}^{\theta,\phi,\tau}_1(\mathbf{x}_0)&=L \underset{k\sim \text{Unif}([0:L-1])}{\mathbb{E}}\left[\underset{{\mathbf{x}_k\sim r^{\tau}_k(\cdot;\mathbf{x}_0)}}{\mathbb{E}}\biggl[\sum_{i=1,x^i_k=m}^L \text{Cat}(i;G_\phi^\tau(\mathbf{x}_0,\mathbf{x}_k))\log\left(\text{Cat}(x_0^i;D^i_\theta(\mathbf{x}_k))\right)\biggr]\right]\\
\mathcal{E}^{\theta,\phi,\tau}_2(\mathbf{x}_0)&=L \underset{k\sim \text{Unif}([0:L-1])}{\mathbb{E}}\biggl[\underset{{\mathbf{x}_k\sim r^{\tau}_k(\cdot;\mathbf{x}_0)}}{\mathbb{E}}\biggl[\underset{{\mathbf{z}\sim D_\theta(\mathbf{x}_k)}}{\mathbb{E}}\biggl[\sum_{i=1,x^i_k=m}^L \text{Cat}(i;G_\phi^\tau(\mathbf{x}_0,\mathbf{x}_k))\times\\
&\qquad\qquad\qquad\qquad\qquad\qquad\qquad\qquad\qquad\qquad\times\log\left(\frac{C^\tau(\mathbf{x}_0,\mathbf{x}_k)}{C^\tau(\mathbf{z}^{-i,x_0^i},\mathbf{x}_k)}\right)\biggr]\biggr]\biggr],\\ 
\end{align*}}
where here we recall the notation $\mathbf{z}^{-i,x_0^i}$ means the i'th coordinate of $\mathbf{z}$ is replaced by the i'th coordinate of $\mathbf{x}_0$, and 
$r^{\tau}_k(\mathbf{x};\mathbf{x}_0)=\mathbb{P}(Y^{\mathbf{x}_0}_k=\mathbf{x})$ for $Y^{\mathbf{x}_0}$  the discrete time Markov chain with rate matrix \eqref{eq:discrete_time_conditional}
\begin{align}\label{eq:discrete_time_conditional}
R^{\tau}(\mathbf{y},\mathbf{x};\mathbf{x}_0)=\begin{cases}\text{Cat}(i;G^\tau_\phi(\mathbf{x}_0,\mathbf{x}))\text{Cat}(y^i;\delta(x^i_0)),&\quad d_{\text{HAM}}(\mathbf{x},\mathbf{y})=1,x^i\neq y^i,x^i=\mathbf{m}\\ 
0,&\text{otherwise}
\end{cases}
\end{align}
and $Y^{\mathbf{x}_0}_0=(\mathbf{m},\dots,\mathbf{m})$.
\end{corollary}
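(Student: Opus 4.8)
The plan is to specialize the general bound in Proposition~\ref{prop:general_ELBO} to the planner $G_\phi=G_\phi^\tau$ of \eqref{eq:softmaxG} and then loosen only the ``effective-planner'' term $\mathcal{E}_2^{\theta,\phi}$ into a sampleable form via Jensen's inequality. Substituting $G_\phi^\tau$ into Proposition~\ref{prop:general_ELBO} reproduces $\mathcal{E}_1^{\theta,\phi,\tau}$ verbatim and fixes the reference dynamics for free: the rate matrix $R^{G_\phi}$ of Proposition~\ref{prop:general_ELBO} becomes exactly the $R^\tau$ of \eqref{eq:discrete_time_conditional}, so $r_k^\tau$ is precisely the time-$k$ law of $Y^{\mathbf{x}_0}$ under these rates, with no additional argument needed. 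Thus essentially all the work lies in rewriting the integrand of $\mathcal{E}_2^{\theta,\phi}$, namely $-\text{Cat}(i;G_\phi^\tau(\mathbf{x}_0,\mathbf{x}_k))\log\!\big(\text{Cat}(i;G_\phi^\tau(\mathbf{x}_0,\mathbf{x}_k))/F_{\theta,\phi}(\mathbf{x}_k,x_0^i,i)\big)$.

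First I would expand this ratio using \eqref{eq:softmaxG} and the definition of $F_{\theta,\phi}$ from \eqref{eq:Gtilde}. The crucial observation is that $\mathbf{z}^{-i,x_0^i}$ has its $i$-th coordinate equal to $x_0^i$, so the softmax \emph{numerator} of $\text{Cat}(i;G_\phi^\tau(\mathbf{z}^{-i,x_0^i},\mathbf{x}_k))$ equals $\exp\!\big(\tfrac1\tau\log\text{Cat}(x_0^i;D_\theta^i(\mathbf{x}_k))\big)$, which is exactly the numerator of $\text{Cat}(i;G_\phi^\tau(\mathbf{x}_0,\mathbf{x}_k))$ (that planner also reads off $x_0^i$ at coordinate $i$). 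Since the index set summed over in $C^\tau(\cdot,\mathbf{x}_k)$ depends only on $\mathbf{x}_k$ and not on the first argument, this common numerator factors out of the $\mathbf{z}$-expectation in $F_{\theta,\phi}$ and cancels in the ratio, leaving
\begin{align*}
\frac{\text{Cat}(i;G_\phi^\tau(\mathbf{x}_0,\mathbf{x}_k))}{F_{\theta,\phi}(\mathbf{x}_k,x_0^i,i)}
=\frac{1}{C^\tau(\mathbf{x}_0,\mathbf{x}_k)\,\mathbb{E}_{\mathbf{z}\sim D_\theta(\mathbf{x}_k)}\!\big[1/C^\tau(\mathbf{z}^{-i,x_0^i},\mathbf{x}_k)\big]}.
\end{align*}

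The decisive step is then Jensen's inequality. As $\log$ is concave, $\log\mathbb{E}_{\mathbf{z}}[1/C^\tau(\mathbf{z}^{-i,x_0^i},\mathbf{x}_k)]\ge \mathbb{E}_{\mathbf{z}}[\log(1/C^\tau(\mathbf{z}^{-i,x_0^i},\mathbf{x}_k))]$, and adding the $\mathbf{z}$-independent term $\log C^\tau(\mathbf{x}_0,\mathbf{x}_k)$ yields
\begin{align*}
-\log\!\left(\frac{\text{Cat}(i;G_\phi^\tau(\mathbf{x}_0,\mathbf{x}_k))}{F_{\theta,\phi}(\mathbf{x}_k,x_0^i,i)}\right)
\ge \mathbb{E}_{\mathbf{z}\sim D_\theta(\mathbf{x}_k)}\!\left[\log\frac{C^\tau(\mathbf{x}_0,\mathbf{x}_k)}{C^\tau(\mathbf{z}^{-i,x_0^i},\mathbf{x}_k)}\right].
\end{align*}
The multiplying weight $\text{Cat}(i;G_\phi^\tau(\mathbf{x}_0,\mathbf{x}_k))$ is non-negative, and the outer sum over masked $i$, the prefactor $L>0$, and the expectations over $k$ and $\mathbf{x}_k\sim r_k^\tau$ are all monotone, so this pointwise bound lifts to $\mathcal{E}_2^{\theta,\phi}\ge \mathcal{E}_2^{\theta,\phi,\tau}$. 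Chaining with Proposition~\ref{prop:general_ELBO} gives $\log p_\theta^\tau(\mathbf{x}_0)\ge \mathcal{E}_1^{\theta,\phi}+\mathcal{E}_2^{\theta,\phi}\ge \mathcal{E}_1^{\theta,\phi,\tau}+\mathcal{E}_2^{\theta,\phi,\tau}$, which is the claim of Corollary~\ref{cor:softmaxelbo}.

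I expect the main obstacle to be purely bookkeeping: tracking that both $G_\phi^\tau(\mathbf{x}_0,\cdot)$ and $G_\phi^\tau(\mathbf{z}^{-i,x_0^i},\cdot)$ evaluate coordinate $i$ at the same value $x_0^i$ so their numerators genuinely cancel, that the normalizer's summation index set is governed by $\mathbf{x}_k$ alone (hence unchanged under $\mathbf{z}\mapsto\mathbf{z}^{-i,x_0^i}$), and that Jensen is applied in the direction giving a valid lower bound. The loosening via Jensen is intentional: it trades the intractable $\log\mathbb{E}_{\mathbf{z}}[1/C^\tau]$ for an expectation of a log-ratio estimable by sampling $\mathbf{z}\sim D_\theta(\mathbf{x}_k)$, while still producing a legitimate ELBO.
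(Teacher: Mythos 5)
Your proposal is correct and follows essentially the same route as the paper: keep $\mathcal{E}_1^{\theta,\phi,\tau}$ as the direct specialization of Proposition~\ref{prop:general_ELBO}, and lower-bound $\mathcal{E}_2^{\theta,\phi}$ by applying Jensen's inequality to the $\log$ of the $\mathbf{z}$-expectation defining $F_{\theta,\phi}$, using the fact that $\text{Cat}(i;G_\phi^\tau(\mathbf{z}^{-i,x_0^i},\mathbf{x}_k))$ and $\text{Cat}(i;G_\phi^\tau(\mathbf{x}_0,\mathbf{x}_k))$ share the same softmax numerator so only the normalizers $C^\tau$ survive. The only (immaterial) difference is ordering: you cancel the common numerator before invoking Jensen, whereas the paper splits the log-ratio, applies Jensen, and cancels at the end.
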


We remark that while this is simply used as an approximation to greedy ancestral sampling for the purposes of this manuscript, soft greedy sampling is also used in practice in, e.g. \cite{wang2025remaskingdiscretediffusionmodels}'s ``Confidence Based Schedule,'' so this result is of independent interest as a corrected ELBO to these sampling schemes.

\rev{\xhdr{Other unmasking schemes} We remark there are other unmasking schemes in the literature for which one obtains an ELBO via our Proposition \ref{prop:general_ELBO}. For example, to obtain an ELBO for the ``top probability margin'' method of \cite{kim2025trainworstplanbest}, one inserts the choice
\begin{align*}
G_\phi(\mathbf{z},\mathbf{x})=\delta\left(\argmax_{i:x^i=\mathbf{m}}|\text{Cat}(y;D^i_\theta(\mathbf{x}))-\text{Cat}(\bar{y};D^i_\theta(\mathbf{x})) |\right),
\end{align*}
where $y=\argmax_{j\in\mathcal{V}}\text{Cat}(j;D^i_\theta(\mathbf{x}))$ and $\bar{y}=\argmax_{j\neq y\in\mathcal{V}}\text{Cat}(j;D^i_\theta(\mathbf{x}))$. As the focus of this work is obtaining a viable objective for use with greedy ancestral sampling, we do not provide expanded details on how to train for this planner user our ELBO.}

\looseness=-1
\xhdr{Extensions to remasking \rev{and denoising multiple positions simultaneously}}  
So far we assumed that once unmasked, a token remains fixed. In practice, planners such as RDM~\citep{RDM}  P2~\citep{peng2025pathplanningmaskeddiffusion} allow remasking and resampling, \rev{in addition to denoising multiple tokens simultaneously}. \rev{There are also methods which attempt to denoise multiple tokens simultaneously, but do not allow remasking, such as top-k block denoising \cite{nie2025largelanguagediffusionmodels} and confidence thresholding \cite{wu2025fastdllmtrainingfreeaccelerationdiffusion}.}
Our proof technique extends naturally to these cases, yielding planner-aware ELBOs of the same form as Prop.~\ref{prop:general_ELBO}.  
For completeness, in \S\ref{subsec:generalizing_to_P2} we provide a generalization to P2-style planners and show its specialization to P2-TopK, \rev{in addition to discussion how the generalized version of the ELBO could be used for finding training stragies for these other sampling methods.}

\subsubsection{Proof of Corollary \ref{cor:greedyELBO} (ELBO for Greedy Planner)}

Specializing the ELBO from Proposition \ref{prop:general_ELBO} to the case of greedy-ancestral sampling, we set $G_\phi$ to be as in \eqref{eq:greedyancestralG}.

\begin{proof}
We first observe that inserting the choice of $G_\phi$ from \eqref{eq:greedyancestralG} into \eqref{eq:vanillatransitionmatrix} in the place of $1/N_M(\mathbf{x})$, $Y^{\mathbf{x}_0}$ becomes deterministic, with dynamics $Y_0=(\mathbf{m},\dots,\mathbf{m})$, and 
\begin{align*}
Y^i_k&=\begin{cases}
x_0^{j_{k-1}},&i=j_{k-1}\\ 
Y^i_{k-1},&\text{otherwise}
\end{cases},\quad k=1,\dots,L,\\ 
j_k&=\text{argmax}_{i\in[1:L],Y_k^i=\mathbf{m}}\text{Cat}(x_0^i;D^i_\theta(Y_k)),\quad k=0,\dots,L.
\end{align*}
For $\mathcal{E}^{\theta,\phi}_1(\mathbf{x}_0)$, we have by definition $\text{Cat}(i;G_\phi(\mathbf{x}_0,\mathbf{x}_k))=\text{Cat}(i;\delta(j_k))$, so:
\begin{align*}
\mathcal{E}^{\theta,\phi}_1(\mathbf{x}_0)&=\sum_{k=0}^{L-1}\biggl[ \log\left(\text{Cat}(x_0^{j_k};D_\theta^{j_k}(Y_k)\right)\biggr]
\end{align*}
Similarly, for the term $\mathcal{E}^{\theta,\phi}_2(\mathbf{x}_0)$, we have, recalling the definition of $F_{\theta,\phi}$ from \eqref{eq:Gtilde}:
\begin{align*}
\mathcal{E}^{\theta,\phi}_2(\mathbf{x}_0)&=\sum_{k=0}^{L-1}\log\left(F_{\theta,\phi}(Y_k,x_0^{j_k},j_k)\right)\\ 
& = \sum_{k=0}^{L-1}\log\left(\sum_{\mathbf{z}\in \mathcal{V}^L: \text{Cat}(z^i;D_{\theta}^i(Y_k))<\text{Cat}(x_0^{j_k};D_{\theta}^{j_k}(Y_k)),\forall i\in[1,L],Y_k^i=\mathbf{m},i\neq j_k} \prod_{i=1}^L\text{Cat}(z^i;D_\theta^i(Y_k))\right)\\ 
&\geq \sum_{k=0}^{L-1} \log\left(\prod_{i\in[1,L],Y_k^i=\mathbf{m},i\neq j_k}\text{Cat}(x_0^i;D_\theta^i(Y_k))\right) \text{ by definition of $j_k$}\\
& = \sum_{k=0}^{L-1}\sum_{i=1,Y_k^i=\mathbf{m},i\neq j_k}^L\log\left(\text{Cat}(x_0^i;D_\theta^i(Y_k))\right).
\end{align*}
Summing this expression of $\mathcal{E}^{\theta,\phi}_1$ with this lower bound on $\mathcal{E}^{\theta,\phi}_2$ we have the result of Corollary \ref{cor:greedyELBO}.
\end{proof}

\subsubsection{Proof of Corollary \ref{cor:softmaxelbo} (ELBO for Softmax Planner)}
We now specialize the ELBO found in Proposition \ref{prop:general_ELBO} to a smooth approximation of the greedy ancestral planner from \eqref{eq:greedyancestralG} - namely, we take $G_\phi=G_\phi^\tau$ as in \eqref{eq:softmaxG}. 
 
\begin{proof}
$\mathcal{E}^{\theta,\phi,\tau}_1$ is simply inserting $G_\phi=G_\phi^\tau$ into $\mathcal{E}^{\theta,\phi}$ from Proposition \ref{prop:general_ELBO}.

Now we make a lower bound on $\mathcal{E}^{\theta,\phi}_2(\mathbf{x}_0)$. With this choice of $G_\phi$:
\begin{align*}
\mathcal{E}^{\theta,\phi}_2(\mathbf{x}_0)&=-\sum_{k=0}^{L-1}\left[\mathbb{E}_{\mathbf{x}_k\sim r^{\tau}_k(\cdot;\mathbf{x}_0)}\biggl[\sum_{i=1,x^i_k=m}^L \text{Cat}(i;G^\tau_\phi(\mathbf{x}_0,\mathbf{x}_k))\log\left(\frac{\text{Cat}(i;G^\tau_\phi(\mathbf{x}_0,\mathbf{x}_k))}{F^\tau_{\theta,\phi}(\mathbf{x}_k,x_0^i,i)}\right)\biggr]\right]
\end{align*}
where $F^\tau_{\theta,\phi}$ is as in \eqref{eq:Gtilde} with $G_\phi=G_\phi^\tau$. So, by Jensen's inequality:
\begin{align*}
&\mathcal{E}^{\theta,\phi}_2(\mathbf{x}_0)\\
&=-\sum_{k=0}^{L-1}\left[\mathbb{E}_{\mathbf{x}_k\sim r^{\tau}_k(\cdot;\mathbf{x}_0)}\biggl[\sum_{i=1,x^i_k=m}^L \text{Cat}(i;G^\tau_\phi(\mathbf{x}_0,\mathbf{x}_k))\log\left(\text{Cat}(i;G^\tau_\phi(\mathbf{x}_0,\mathbf{x}_k))\right)\biggr]\right]\\ 
&+\sum_{k=0}^{L-1}\left[\mathbb{E}_{\mathbf{x}_k\sim r^{\tau}_k(\cdot;\mathbf{x}_0)}\biggl[\sum_{i=1,x^i_k=m}^L \text{Cat}(i;G^\tau_\phi(\mathbf{x}_0,\mathbf{x}_k))\log\left(F^\tau_{\theta,\phi}(\mathbf{x}_k,x_0^i,i)\right)\biggr]\right]\\ 
&\geq -\sum_{k=0}^{L-1}\left[\mathbb{E}_{\mathbf{x}_k\sim r^{\tau}_k(\cdot;\mathbf{x}_0)}\biggl[\sum_{i=1,x^i_k=m}^L \text{Cat}(i;G^\tau_\phi(\mathbf{x}_0,\mathbf{x}_k))\log\left(\text{Cat}(i;G^\tau_\phi(\mathbf{x}_0,\mathbf{x}_k))\right)\biggr]\right]\\ 
&+\sum_{k=0}^{L-1}\left[\mathbb{E}_{\mathbf{x}_k\sim r^{\tau}_k(\cdot;\mathbf{x}_0)}\left[\sum_{i=1,x^i_k=m}^L \text{Cat}(i;G^\tau_\phi(\mathbf{x}_0,\mathbf{x}_k))\mathbb{E}_{\mathbf{z}\sim D_\theta(\mathbf{x}_k)}\left[\log\left(\text{Cat}\left(i;G^\tau_\phi(\mathbf{z}^{-i,x^i_0},\mathbf{x}_k\right)\right)\right]\right]\right]\\ 
&=\sum_{k=0}^{L-1}\biggl[\mathbb{E}_{\mathbf{x}_k\sim r^{\tau}_k(\cdot;\mathbf{x}_0)}\biggl[\mathbb{E}_{\mathbf{z}\sim D_\theta(\mathbf{x}_k)}\biggl[\sum_{i=1,x^i_k=m}^L \text{Cat}(i;G^\tau_\phi(\mathbf{x}_0,\mathbf{x}_k))\times\\ 
&\qquad\qquad\qquad\qquad\qquad\qquad\qquad\qquad\qquad\times\log\left(\frac{\text{Cat}\left(i;G^\tau_\phi(\mathbf{z}^{-i,x^i_0},\mathbf{x}_k\right)}{\text{Cat}(i;G^\tau_\phi(\mathbf{x}_0,\mathbf{x}_k))}\right)\biggr]\biggr]\biggr]\\ 
&=\sum_{k=0}^{L-1}\left[\mathbb{E}_{\mathbf{x}_k\sim r^{\tau}_k(\cdot;\mathbf{x}_0)}\left[\mathbb{E}_{\mathbf{z}\sim D_\theta(\mathbf{x}_k)}\left[\sum_{i=1,x^i_k=m}^L \text{Cat}(i;G_\phi^\tau(\mathbf{x}_0,\mathbf{x}_k))\log\left(\frac{C^\tau(\mathbf{x}_0,\mathbf{x}_k)}{C^\tau(\mathbf{z}^{-i,x_0^i},\mathbf{x}_k)}\right)\right]\right]\right],
\end{align*}
where in the last step we use that for any $\mathbf{z}$, $\text{Cat}\left(i;G^\tau_\phi(\mathbf{z}^{-i,x^i_0},\mathbf{x}_k\right)$ and $\text{Cat}(i;G^\tau_\phi(\mathbf{x}_0,\mathbf{x}_k))$ have the same numerator in \eqref{eq:softmaxG}, just different normalizing constants. This lower bound is denoted as $\mathcal{E}^{\theta,\phi,\tau}_2$ in Corollary \ref{cor:softmaxelbo}.
\end{proof}
\subsubsection{\rev{Connection Between Corollary \ref{cor:softmaxelbo} and the PAPL Loss \eqref{eq:papl-loss}}}\label{subsec:connectionwithsoftmaxcorollary}
Here we show how one formally arrives at the PAPL loss from the detach gradient and stabilization steps taken in \S \ref{subsec:efficientimplementation}. We begin with the loss corresponding to the ELBO from Corollary \ref{cor:softmaxelbo}. This is given by:
\begin{align*}
\mathcal{L}(\theta,\phi)&=-\mathbb{E}_{\mathbf{x}_0\sim \mathbf{p}_{\text{data}}}\left[\mathcal{E}^{\theta,\phi,\tau}_1(\mathbf{x}_0)+\mathcal{E}^{\theta,\phi,\tau}_2(\mathbf{x}_0)\right],
\end{align*}
where $\mathcal{E}^{\theta,\phi,\tau}_1,\mathcal{E}^{\theta,\phi,\tau}_2$ are as in Corollary \ref{cor:softmaxelbo}.

Next, we detach logits from the softmax weights $G^\tau_\phi$ given by \eqref{eq:softmaxG}. Observing that $\mathcal{E}^{\theta,\phi,\tau}_2$ depends only on these weights (through $C^\tau$) and not on the logits from the denoiser, we have minimizing $\mathcal{L}(\theta,\phi)$ is equivalent to minimizing:
\begin{align*}
\mathcal{L}(\theta)&=-\mathbb{E}_{\mathbf{x}_0\sim \mathbf{p}_{\text{data}}}\left[\mathcal{E}^{\theta,\phi,\tau}_1(\mathbf{x}_0)\right]\\ 
& = -\sum_{k=0}^{L-1}\underset{{\mathbf{x}_0\sim \mathbf{p}_{\text{data}}}}{\mathbb{E}}\left[\underset{{\mathbf{x}_k\sim r^{\tau}_k(\cdot;\mathbf{x}_0)}}{\mathbb{E}}\biggl[\sum_{i=1,x^i_k=m}^L \text{Cat}(i;G_\phi^\tau(\mathbf{x}_0,\mathbf{x}_k))\log\left(\text{Cat}(x_0^i;D^i_\theta(\mathbf{x}_k))\right)\biggr]\right].
\end{align*}
After this, we replace sampling $\mathbf{x}_k\sim r^{\tau}_k(\cdot;\mathbf{x}_0)$ with sampling $\mathbf{x}_k\sim r_k(\cdot;\mathbf{x}_0)$ with $r_k$ as in \eqref{eq:AOARMELBODTMCform}. Indeed, this was the reason for using the softmax approximation of Corollary \ref{cor:softmaxelbo} rather than the greedy ELBO of Corollary \ref{cor:greedyELBO} in the first place- which the deterministic paths from $E^{\theta,\text{greedy}}$ may be very far from the uniformly random paths of $r_k$, at least we have $r^\tau_k\rightarrow r_k$ as $\tau\rightarrow\infty.$ The loss becomes:
\begin{align*}
\mathcal{L}(\theta)=-\sum_{k=0}^{L-1}\underset{{\mathbf{x}_0\sim \mathbf{p}_{\text{data}}}}{\mathbb{E}}\left[\underset{{\mathbf{x}_k\sim r_k(\cdot;\mathbf{x}_0)}}{\mathbb{E}}\biggl[\sum_{i=1,x^i_k=m}^L w^{i,\tau}\log\left(\text{Cat}(x_0^i;D^i_\theta(\mathbf{x}_k))\right)\biggr]\right],
\end{align*}
where $w^{i,\tau}=\text{Cat}(i;G_\phi^\tau(\mathbf{x}_0,\mathbf{x}_k))\propto  \exp\left(\frac{1}{\tau}\log\left(\text{Cat}(z^j;D_\theta^j(\mathbf{x})\right)\right)$. Finally, we observe that this is identical to the vanilla loss
\begin{align*}
\mathcal{L}^{\text{unif}}(\theta)&=-\mathbb{E}_{\mathbf{x}_0\sim\mathbf{p}_{\text{data}}}\left[\mathcal{E}^{\theta,\text{unif}}(\mathbf{x}_0)\right]
\end{align*}
associated to the vanilla ELBO \eqref{eq:AOARMELBO}, except that $\frac{1}{L-k}$ has been replaced by $w^{i,\tau}$ as the weight in the sum. Thus, interpolating with a constant which decreases linearly with the number of samples yields:
\begin{align*}
\mathcal{L}_{\text{PAPL}}(\theta)&=-\mathbb{E}_{\mathbf{x}_0\sim\mathbf{p}_{\text{data}}}\left[\mathcal{E}^{\theta,\text{unif}}(\mathbf{x}_0)\right]\\
&-\sum_{k=0}^{L-1}\underset{{\mathbf{x}_0\sim \mathbf{p}_{\text{data}}}}{\mathbb{E}}\left[\underset{{\mathbf{x}_k\sim r_k(\cdot;\mathbf{x}_0)}}{\mathbb{E}}\biggl[\sum_{i=1,x^i_k=m}^L \frac{\alpha}{L-k}w^{i,\tau}\log\left(\text{Cat}(x_0^i;D^i_\theta(\mathbf{x}_k))\right)\biggr]\right]\\ 
& = -\sum_{k=0}^{L-1}\underset{{\mathbf{x}_0\sim \mathbf{p}_{\text{data}}}}{\mathbb{E}}\left[\underset{{\mathbf{x}_k\sim r_k(\cdot;\mathbf{x}_0)}}{\mathbb{E}}\biggl[\sum_{i=1,x^i_k=m}^L \frac{1}{L-k}(1+\alpha w^{i,\tau})\log\left(\text{Cat}(x_0^i;D^i_\theta(\mathbf{x}_k))\right)\biggr]\right].
\end{align*}
Suppressing the distributions of the random variables $\mathbf{x}_0,k,\mathbf{x}_k$ in the notation, this is precisely \eqref{eq:papl-loss}.

\subsubsection{Comparing Relative Size of the Approximate Losses}
Here we will see how the vanilla DLM loss (recalling here the discussion in \ref{subsubsec:roleofELBO} and \eqref{eq:AOARMELBODTMCform}):
\begin{align*}
&\mathcal{L}^{\text{unif}}(\theta)=-\mathbb{E}_{\mathbf{x}_0\sim p_{data}}\left[\mathcal{E}^{\theta,\text{unif}}(\mathbf{x}_0)\right]\nonumber\\ 
& = -L\mathbb{E}_{\mathbf{x}_0\sim p_{data}}\left[\mathbb{E}_{k\sim \text{Unif}([0:L-1])}\left[\mathbb{E}_{\mathbf{x}_k\sim r_k(\cdot;\mathbf{x}_0)}\left[\sum_{i=1,x^i_k\neq \mathbf{m}}^L\frac{1}{L-k}\log\left(\text{Cat}\left(x_0^i;D_\theta^i(\mathbf{x}_k)\right)\right)\right]\right]\right]
\end{align*}
compares with the surrogate corrected loss:
\begin{align*}
\mathcal{L}^{\tau}(\theta)&=-L\mathbb{E}_{\mathbf{x}_0\sim p_{data}}\biggl[\mathbb{E}_{k\sim \text{Unif}([0:L-1])}\biggl[\mathbb{E}_{\mathbf{x}_k\sim r_k(\cdot;\mathbf{x}_0)}\biggl[\sum_{i=1,x^i_k=m}^L \text{Cat}(i;G_\phi^\tau(\mathbf{x}_0,\mathbf{x}_k))\times\\ 
&\qquad\qquad\qquad\qquad\qquad\qquad\qquad\qquad\qquad\qquad\qquad\qquad\times\log\left(\text{Cat}(x_0^i;D^i_\theta(\mathbf{x}_k))\right)\biggr]\biggr]\biggr],
\end{align*}
which is the PAPL loss before interpolation with the vanilla MDM loss as per the previous subsection.

Here recall $r_k(\cdot;\mathbf{x}_0)$ from \eqref{eq:AOARMELBODTMCform} and $C^\tau,G^\tau$ from \eqref{eq:softmaxG}.

\begin{proposition}\label{prop:relativesizedifferenttau}
For any $\tau_1>\tau_2>0$,
$\mathcal{L}^{\text{unif}}(\theta)\geq \mathcal{L}^{\tau_1}(\theta)\geq \mathcal{L}^{\tau_2}(\theta)$.
\end{proposition}

\begin{proof}
As the expected values are over the same distributions, it suffices to prove the result for the integrands. Let $\mathbf{x}_0,\mathbf{x}_k\in\mathcal{V}^L$ and $\mathcal{M}=\lbrace i\in \lbrace 1,\dots,L\rbrace: x^i=\mathbf{m}\rbrace$. Note $|\mathcal{M}|=L-k$ by definition. Define:
\begin{align*}
\ell^i&=\log\left(\text{Cat}\left(x_0^i;D_\theta^i(\mathbf{x_k})\right)\right),i\in \mathcal{M}
\end{align*}
so that 
\begin{align*}
\text{Cat}(i;G_\phi^\tau(\mathbf{x}_0,\mathbf{x}_k))&=\exp(\ell^i/\tau)/C^\tau(\ell):=w^i_\tau(\ell)\\
C^\tau(\ell)&= \sum_{i\in \mathcal{M}}\exp(\ell^i/\tau).
\end{align*}
Noting the minus sign in front of the losses, we simply need to establish that 
\begin{align*}
\sum_{i\in\mathcal{M}}\frac{1}{L-k}\ell^i\leq \sum_{i\in\mathcal{M}}w^i_{\tau_1}(\ell)\ell^i\leq \sum_{i\in\mathcal{M}}w^i_{\tau_2}(\ell)\ell^i.
\end{align*}
Observing that $\lim_{\tau\rightarrow\infty}w^i_\tau(\ell)=\frac{1}{L-k},\forall i\in\mathcal{M}$ and $\ell$, we simply show that 
\begin{align*}
\frac{d}{d\tau}\sum_{i\in\mathcal{M}}w^i_\tau(\ell)\ell^i<0,\forall \tau>0.
\end{align*}
Letting $F(\tau)=\sum_{i\in\mathcal{M}}w^i_\tau(\ell)\ell^i$, We have 
\begin{align*}
\frac{d}{d\tau}w^i_\tau(\ell)=\frac{w^i_\tau(\ell)}{\tau^2}\left[\sum_{j\in\mathcal{M}}w^j_\tau(\ell)\ell^j-\ell^i\right]=\frac{w^i_\tau(\ell)}{\tau^2}\left[F(\tau)-\ell^i\right],
\end{align*}
so 
\begin{align*}
\frac{d}{d\tau}F(\tau)=\sum_{i\in\mathcal{M}}\frac{w^i_\tau(\ell)}{\tau^2}\left[F(\tau)-\ell^i\right]\ell^i=\frac{1}{\tau^2}\left[(F(\tau))^2-\sum_{i\in\mathcal{M}}w^i_\tau(\ell)(\ell^i)^2\right].\end{align*}
By Jensen's inequality, $(F(\tau))^2\leq \sum_{i\in\mathcal{M}}w^i_\tau(\ell)(\ell^i)^2$, so we are done.
\end{proof}

\subsection{Alternative Proof of Proposition \ref{prop:general_ELBO}: Continuous Time Markov Chains Perspective}\label{app:ctmc_proof}
Here, for reference, we show how Proposition \ref{prop:general_ELBO} can be derived from the continuous time Markov chains perspective taken in the discrete diffusion literature \citep{campbell2022continuoustimeframeworkdiscrete,DFM,Lou2023DiscreteDM,Sun2022}. 

\subsubsection{Time-Inhomogeneous Continuous Time Markov Chains (CTMC)}\label{subsection:CTMC}
A (time-inhomogeneous) continuous-time Markov chain $\lbrace X_t\rbrace_{t\geq 0}$ on a finite set $\mathcal{X}$ is a stochastic process satisfying the Markov property, which can be formally summarized as $\mathbb{P}(X_t=y|X_{s_1}=x_1,\ldots,X_{s_k}=x_k,X_s=x)=\mathbb{P}(X_t=y|X_s=x),\forall y,x_1,\ldots,x_k,x\in \mathcal{X},0\leq s_1<s_2<\ldots<s_k<s<t\leq 1$. One can construct such a process by specifying a ``rate matrix" $Q_{t}\in \R^{|\mathcal{X}|\times|\mathcal{X}|}$ with $Q_t(y,x)>0$ and $Q_t(x,x)=-\sum_{y\neq x}Q_t(y,x)$ for all $x\neq y\in \mathcal{X}$ and $t\geq 0$. Along with an initial distribution $\mu\in\Delta^{|X|}$, $Q$ determines the 1-dimensional time marginals $\mathbb{P}(X_t=\cdot)\in\Delta^{|X|}$ via the Kolmogorov equation:
\begin{align}\label{eq:kolmogoroveq}
\frac{d}{dt}\mathbb{P}(X_t=\cdot)&=Q_t\mathbb{P}(X_t=\cdot),\qquad t\geq 0\\ 
\mathbb{P}(X_0=x)&=\mu(x),\qquad x\in \mathcal{X}.\nonumber
\end{align}
When the above holds, we will say $Q$ ``generates'' $X$. Note that one can see necessarily that if $Q$ generates $X$, 
\begin{align}\label{eq:rate_matrix_definition}
Q_t(y,x)\coloneqq \lim_{s\downarrow t}\frac{d}{ds}\mathbb{P}(X_s=y|X_t=x),\quad x\neq y\in \mathcal{X}.
\end{align} 
Knowing the entries of $Q$ also provides a means of generating samples from $X_t$ at any given time, since paths of $\lbrace X_t\rbrace_{t\geq 0}$ can be realized via a sequence of jump times $\lbrace \tau_{n}\rbrace_{n\in\mathbb{N}}$, with $\tau_i=\inf\lbrace t>\tau_{i-1}:X_t\neq X_{\tau_{i-1}}\rbrace$ and the effective discrete-time jump process $\lbrace X_{\tau_i}\rbrace_{i\in\mathbb{N}}$. Then 
\begin{align}\label{eq:transitionprobabilities}
\mathbb{P}(X_{\tau_{i+1}}=y|X_{\tau_{i+1}}=x,\tau_i=t)=-\frac{Q_t(y,x)}{Q_t(x,x)},
\end{align}
and 
\begin{align}\label{eq:jumprates}
\log(\mathbb{P}(\tau_{i+1}>t|X_{\tau_{i}}=x,\tau_i=s))=\int_s^t Q_{p}(x,x)dp.
\end{align}
For more background on time-inhomogenous continuous-time Markov chains, see e.g. Chapter 2 of \cite{yin_continuous-time_2013} or the appendix of \cite{ren2024}.

\subsubsection{DLMs in the CTMC Framework}
In the original CTMC framework for DLMs \citep{Lou2023DiscreteDM,shi2024simplified,mdlm}, one begins with a coordinate-wise forward corruption process:
\begin{align}\label{eq:}
p_t(x_t^{i}|x_0^{i}) = \text{Cat}(x^i_t; \alpha_t \delta(x^i_0) + (1 - \alpha_t) \delta(\mathbf m))
\end{align}
for $\alpha:[0,1]\rightarrow [0,1]$ a differentiable, monotone-decreasing function with $\alpha_0=1$ and $\alpha_1=1$. Using equation \eqref{eq:kolmogoroveq}, one sees that noising each coordinate independently according to corresponds to a CTMC $\overset{\rightarrow}{X}_t$ with state space $\mathcal{V}^L$, intial data $\mathbf{x}_0$ and rate matrix given  by, for $\mathbf{x},\mathbf{y}\in\mathcal{V}^L$
\begin{align*}
\overset{\rightarrow}{Q}_t^{\mathbf{x}_0}(\mathbf{y},\mathbf{x}) = 
\begin{cases}
\sigma(t),&\quad d_{\text{HAM}}(\mathbf{x},\mathbf{y})=1,x^i\neq y^i,x^i=\mathbf{m}\\
-\sigma(t) N_M(\mathbf{x}),&\quad \mathbf{x}=\mathbf{y}\\ 
0,& \quad \text{otherwise}
\end{cases}
\end{align*}
where $\sigma(t)=-\frac{d}{dt}\log(\alpha_t)$.

One then uses a classic time-reversal formula (see, e.g. \cite{Sun2022} Proposition 3.2.) to obtain a rate matrix generating $\overset{\leftarrow}{X}^{\mathbf{x}_0}_t$ so that $\mathbb{P}\left(\overset{\leftarrow}{X}^{\mathbf{x}_0}_t=\mathbf{x}\right)=\mathbb{P}\left(\overset{\rightarrow}{X}_t=\mathbf{x}
|\overset{\rightarrow}{X}_t=\mathbf{x}_0\right),\forall \mathbf{x}\in\mathcal{V}^L$. This rate matrix is given by, for $\mathbf{x}, \mathbf{y} \in \mathcal{V}^L$:
\begin{align}\label{eq:conditionalbackwardsmatrix}
\overset{\leftarrow}{Q}_t^{\mathbf{x}_0}(\mathbf{y},\mathbf{x}) = 
\begin{cases}\beta_t\text{Cat}(y^i;\delta(x^i_0)),&\quad d_{\text{HAM}}(\mathbf{x},\mathbf{y})=1,x^i\neq y^i,x^i=\mathbf{m}\\
-\beta_t N_M(\mathbf{x}),&\mathbf{x}=\mathbf{y}\\
0,&\text{otherwise}
\end{cases}
\end{align}
where 
\begin{align}
\beta_t&:=-\frac{\frac{d \alpha_{1-t}}{dt}}{1-\alpha_{1-t}} \label{eq:beta}
\end{align}

Letting $\tau_k$ for $k\in\mathbb{N}$ be the time of $\overset{\leftarrow}{X}^{\mathbf{x}_0}_t$'s $k$'th jump, we have by \eqref{eq:jumprates}:
\begin{align*}
\log \mathbb{P}(\tau_{k+1}>t|\tau_k=s,\overset{\leftarrow}{X}^{\mathbf{x}_0}_{\tau_{k}}=\mathbf{x})=-N_M(\mathbf{x})\int_s^t \beta_\tau d\tau
\end{align*}
and by \eqref{eq:transitionprobabilities}, for $\mathbf{x}\neq \mathbf{y}$:
\begin{align}
&\mathbb{P}(\overset{\leftarrow}{X}^{\mathbf{x}_0}_{\tau_{k+1}}=y|\overset{\leftarrow}{X}^{\mathbf{x}_0}_{\tau_{k}}=x,\tau_{k+1}=t)\nonumber\\ 
&=\begin{cases}1/N_M(\mathbf{x}),&\quad d_{\text{HAM}}(\mathbf{x},\mathbf{y})=1,x^i\neq y^i,x^i=\mathbf{m},y^i=x_0^i\\ 
0,&\text{otherwise}
\end{cases}.\label{eq:conditional_jump_probs}
\end{align}
That is, one waits for an exponential clock to ring with the given speed, then regardless of how long it took, chooses uniformly at random between masked positions of $\mathbf{x}$ to get some index $i$, and unmasks that token to $x^i_0$.

One then seeks to denoise from $(\mathbf{m},\dots,\mathbf{m})$ to $\mathbf{x}_0\sim p_{data}$ using the CTMC $\overset{\leftarrow}{X}^{\theta,\text{unif}}_t$ with state space $\mathcal{V}^L$ which one obtains via replacing $\delta(x^i_0)$ in \eqref{eq:conditionalbackwardsmatrix} with a neural denoiser $D^i_\theta(\mathbf{x})$. A rate matrix generating $\overset{\leftarrow}{X}^{\theta,\text{unif}}_t$  is given by, for $\mathbf{x}, \mathbf{y} \in \mathcal{V}^L$:
\begin{align}\label{eq:DLMbackwardsmatrixapprox}
Q^{\theta,\text{mask}}(\mathbf{y},\mathbf{x})=
\begin{cases}
\beta_t\text{Cat}(y^i;D_\theta^{i}(\bx)),&\quad d_{\text{HAM}}(\mathbf{x},\mathbf{y})=1,x^i\neq y^i,x^i=\mathbf{m}\\
-\beta_t N_M(\mathbf{x}),&\quad \mathbf{x}=\mathbf{y}\\
0,&\text{otherwise}
\end{cases}.
\end{align}

This means, letting $\tau^\theta_k$ for $k\in\mathbb{N}$ be the time of $\overset{\leftarrow}{X}^{\theta,\text{mask}}_t$'s $k$'th jump, we have by \eqref{eq:jumprates}:
\begin{align}\label{eq:vanillaDLMtransitionrates}
\log \mathbb{P}(\tau^\theta_{k+1}>t|\tau^\theta_k=s,\overset{\leftarrow}{X}^{\theta,\text{mask}}_{\tau_{k}}=\mathbf{x})=-N_M(\mathbf{x})\int_s^t \beta_\tau d\tau
\end{align}
and by \eqref{eq:transitionprobabilities}, for $\mathbf{x}\neq \mathbf{y}$:
\begin{align}
&\mathbb{P}(\overset{\leftarrow}{X}^{\theta,\text{mask}}_{\tau_{k+1}}=\mathbf{y}|\overset{\leftarrow}{X}^{\theta,\text{mask}}_{\tau_k}=\mathbf{x},\tau_{k+1}=t)\nonumber\\ 
&=\begin{cases}\text{Cat}(y^i;D_\theta^{i}(\mathbf{x}))/N_M(\mathbf{x}),&\quad d_{\text{HAM}}(\mathbf{x},\mathbf{y})=1,x^i\neq y^i,x^i=\mathbf{m},y^i\neq \mathbf{m}\\ 
0,&\text{otherwise}
\end{cases}.\label{eq:paramaterized_jump_probs}
\end{align}
That is, one waits for an exponential clock with the same given speed as for $\overset{\leftarrow}{X}^{\mathbf{x}_0}$ to ring , then regardless of how long it took, chooses uniformly at random between masked positions of $\mathbf{x}$ to get some index $i$, and unmasks that token to $y^i$ with probability $\text{Cat}(y^i;D_\theta^{i}(\bx))$.

This is summarized succinctly via the corresponding Gillespie sampling scheme for a standard masked diffusion model, which, defining 
\begin{align}\label{eq:mathcalM}
\mathcal{M}(\mathbf{x})\coloneqq\lbrace j\in \lbrace 1,\dots,L\rbrace:x^j=\mathbf{m}\rbrace,\quad \mathbf{x}\in\mathcal{V}^L
\end{align}
is given by Alg. \ref{alg:DLMsampling}.
\begin{algorithm}[h]
\small
\caption{Gillespie Sampler for Masked Diffusion Models}
\label{alg:DLMsampling}
\begin{algorithmic}[1]
\State \textbf{Initialize:} $\mathbf{x}_0 \gets (\mathbf{m}, \mathbf{m}, \dots, \mathbf{m})$, denoiser $D_\theta$
\For{$k = 0 : L-1$}
    \State {\colorbox{gray!20}{\textbf{Choose Random Coordinate for Unmasking:}}} 
    \State Sample dimension $i \sim \operatorname{Unif}\big(\mathcal{M}(\mathbf{x}_k)\big)$
    \State {\colorbox{gray!20}{\textbf{Denoise:}}}
    \State Sample $z^{i} \sim D_\theta^{i}(\mathbf{x}_k)$
    \State $\mathbf{x}_{k+1} \gets \mathbf{x}_k$
    \State $x_{k+1}^{i} \gets z^{i}$
\EndFor
\State \textbf{return} $x_L$
\State 
\end{algorithmic}
\end{algorithm}
\subsubsection{Setup and ELBO in the CTMC Framework}\label{sec:CTMCversionofELBO}
Now we show how to derive a corrected ELBO for the dynamics described by Alg. \ref{alg:plannedsamplingpseudocode} as one would using the CTMC framework. First, we observe that for any $\tilde{Y}^{\mathbf{x}_0}$ a CTMC on time interval $[0,1]$ and state space $\mathcal{S}$ such that $\mathbb{P}(\tilde{Y}^{\mathbf{x}_0}_1=\mathbf{x}_0)=1$ and $p$ a distribution on $\mathcal{S}$ given by $p(\mathbf{x})=\mathbb{P}(X_1=\mathbf{x})$ for another CTMC on time interval $[0,1]$ and state space $\mathcal{S}$:
\begin{align}\label{eq:dataprocessinginequality}
\log (p(\mathbf{x}_0))=-D_{KL}(\delta(\mathbf{x}_0)||p)\geq -D_{KL}(\mathbb{R}^{\mathbf{x}_0}||\mathbb{Q}),
\end{align}
where we let $\mathbb{R}^{\mathbf{x}_0}$ denote the distribution of $\tilde{Y}^{\mathbf{x}_0}$ (on the Skorokhod space $D([0,1];\mathcal{S})$ of all c\'adl\'ag paths from $[0,1]$ to $\mathcal{V}^L$) and $\mathbb{Q}$ the same but for $X$, and to get the bound, we use the data-processing inequality (an infinite-dimensional generalization of Corollary \ref{cor:marginalization_inequality_KL}- see, e.g. \cite{budhiraja_analysis_2019} Lemma 2.4 (f)).

That is, in order to make the terminal distribution $p$ of $X$ close to $\delta(\mathbf{x}_0)$, one can simply require that its entire path is close to that of $\tilde{Y}^{\mathbf{x}_0}$.

The benefit of using the $KL$ divergence between the paths is that via Girsanov's Theorem for Markov Jump processes (see e.g. Theorem III.5.34 in \cite{jacod2013} for a general result or \cite{ren2024} Theorems 3.3/3.5 for the specific Markov Chain setting) it yields a simple expression in terms of the rate matrices generating the dynamics of $\tilde{Y}^{\mathbf{x}_0}$ and $X$.

This result states that, for a CTMC $Y$ with rate matrix $R_t$ and $Y_0\sim \mu$ and a CTMC $X$ with rate matrix $Q_t$ and $X_0\sim \nu$ on the same state space $\mathcal{S}$, denoting by $\mathbb{R}$ the distribution of $Y$ on $D([0,1];\mathcal{S})$ and $\mathbb{Q}$ similarly but for $X$. the equality:
\begin{align}\label{eq:girsanovs}
D_{KL}(\mathbb{R}||\mathbb{Q})&=D_{KL}(\mu||\nu)\\ 
&+\int_0^1\mathbb{E}_{x_t\sim r_t}\left[R_t(x_t,x_t)-Q_t(x_t,x_t)+\sum_{y\in\mathcal{S},y\neq x_t}R_t(y,x_t)\log\left(\frac{R_t(y,x_t)}{Q_t(y,x_t})\right)\right]dt\nonumber\\ 
r_t(x)&:=\mathbb{P}(Y_t=x),x\in\mathcal{S}.\nonumber
\end{align}
holds a under mild assumptions on $R$ and $Q$ (see remark 3.4 in \cite{ren2024}). Note that \eqref{eq:girsanovs} is simply a continuous time extension of Proposition \ref{prop:DTMC_KL}. Recalling \eqref{eq:jumprates}, the term $R_t(x_t,x_t)-Q_t(x_t,x_t)$ measures the difference in jump times between $X$ and $Y$, while the second term is essentially a KL divergence between the transition rates.

We proceed by identifying a CTMC $X^{\theta,\phi}$ with state space $\mathcal{V}^L$ such that $\mathbb{P}\left(X^{\theta,\phi}_1=\mathbf{x}\right)=p^{G_\phi}_\theta(\mathbf{x})$, so that we may apply \eqref{eq:dataprocessinginequality} and \eqref{eq:girsanovs} to $p^{G_\phi}_\theta$ obtain an ELBO. Denoting by $Q^{\theta,\phi}_t$ the rate matrix for $X^{\theta,\phi}$, via \eqref{eq:planned_transition_probs} and \eqref{eq:transitionprobabilities}, we must have for any $t\in [0,1]$ and $\mathbf{x}\neq \mathbf{y}\in \mathcal{V}^L$:
\begin{align*}
-\frac{Q_t^{\theta,\phi}(\mathbf{y},\mathbf{x})}{Q_t^{\theta,\phi}(\mathbf{x},\mathbf{x})}&=\begin{cases}
\text{Cat}\left(y^i;D^i_\theta(\mathbf{x})\right)F_{\theta,\phi}(\mathbf{x},y^i,i),&\quad  d_{\text{HAM}}(\mathbf{x},\mathbf{y})=1,x^i\neq y^i,x^i=\mathbf{m}\\ 
0,&\quad \text{otherwise}
\end{cases}.
\end{align*}
As the transition probabilities do not depend on the transition rates, we simply need that the transition rates are so that by time $1$ all tokens will become unmasked. We thus simply maintain those from vanilla DLMs, found in \eqref{eq:vanillaDLMtransitionrates}, and so, recalling \eqref{eq:jumprates}, we set for $\mathbf{x}\in\mathcal{V}^L$:
\begin{align*}
Q^{\theta,\phi}(\mathbf{x},\mathbf{x})&=-\beta_tN_M(\mathbf{x}),
\end{align*}
where we recall $\beta_t$ from \eqref{eq:beta}.

We then have our full definition of $Q^{\theta,\phi}$. For $\mathbf{x},\mathbf{y}\in\mathcal{V}^L$:
\begin{align}\label{eq:approxplannedmatrix}
Q^{\theta,\phi}(\mathbf{y},\mathbf{x})=
\begin{cases}
\beta_tN_M(\mathbf{x})\text{Cat}\left(y^i;D^i_\theta(\mathbf{x})\right)F_{\theta,\phi}(\mathbf{x},y^i,i),&\quad d_{\text{HAM}}(\mathbf{x},\mathbf{y})=1,x^i\neq y^i,x^i=\mathbf{m}\\
-\beta_t N_M(\mathbf{x}),&\quad \mathbf{x}=\mathbf{y}\\
0,&\text{otherwise}
\end{cases}.
\end{align}

Letting $R_t(\cdot,\cdot;\mathbf{x}_0)$ be the rate matrix for our reference chain $Y^{\mathbf{x}_0}$ to be used for inserting into \eqref{eq:dataprocessinginequality}, since we don't want to worry about enforcing the jump times of $X^{\theta,\phi}$ and $Y^{\mathbf{x}_0}$ in the form of the ELBO (as these have no bearing on the sample generated by $\overset{\leftarrow}{X}^{\theta,\phi}$ and hence should not be trained for), we also set for $\mathbf{x}\in\mathcal{V}^L$:
\begin{align*}
R_t(\mathbf{x},\mathbf{x};\mathbf{x}_0)&=-\beta_tN_M(\mathbf{x}).
\end{align*}

Now, to choose the off diagonal entries of $R_t(\cdot,\cdot;\mathbf{x}_0)$, we seek to modify jump locations to be different than in the vanilla setting, where the coordinate to flip is chosen uniformly at random (see \eqref{eq:conditional_jump_probs}). 

Instead, we opt to choose $R(\cdot,\cdot;\mathbf{x}_0)=R^{G_\phi}(\cdot,\cdot;\mathbf{x}_0)$ to select coordinates to denoise according to the planner. In this sense, we will be learning both the forward and reverse process simultaneously when using this ELBO. 

Recalling \eqref{eq:transitionprobabilities}, we thus want for $\mathbf{x}\neq\mathbf{y}\in\mathcal{V}^L$:
\begin{align*}
-\frac{R^{G_\phi}(\mathbf{y},\mathbf{x};\mathbf{x}_0)}{R^{G_\phi}(\mathbf{x},\mathbf{x};\mathbf{x}_0)}=\begin{cases}\text{Cat}(i;G_\phi(\mathbf{x}_0,\mathbf{x}))\text{Cat}(y^i;\delta(x_0^i)),&\quad d_{\text{HAM}}(\mathbf{x},\mathbf{y})=1,x^i\neq y^i,x^i=\mathbf{m}\\ 
0,&\text{otherwise}
\end{cases}.
\end{align*}

Now the dynamics of $Y^{\mathbf{x}_0}=Y^{G_\phi,\mathbf{x}_0}$ and its rate matrix $R(\cdot,\cdot;\mathbf{x}_0)=R^{G_\phi}(\cdot,\cdot;\mathbf{x}_0)$ have been determined. 

We have for $\mathbf{x},\mathbf{y}\in\mathcal{V}^L$:
\begin{align}\label{eq:conditionalplannedmatrix}
R^{G_\phi}(\mathbf{y},\mathbf{x};\mathbf{x}_0)&=\begin{cases}\beta_tN_M(\mathbf{x})\text{Cat}(i;G_\phi(\mathbf{x}_0,\mathbf{x})),&\quad d_{\text{HAM}}(\mathbf{x},\mathbf{y})=1,x^i\neq y^i,x^i=\mathbf{m},y^i=x_0^i\\ 
-\beta_tN_M(\mathbf{x})&,\quad \mathbf{x}=\mathbf{y}\\
0,&\text{otherwise}
\end{cases}.
\end{align}
Note that, taking $Y^{G_\phi,\mathbf{x}_0}_0=X^{\theta,\phi}_0=(\mathbf{m},\dots,\mathbf{m})$, indeed $Y^{G_\phi,\mathbf{x}_0}_1=\mathbf{x}_0$, so that \eqref{eq:dataprocessinginequality} applies.

We arrive at the following proposition:
\begin{proposition}\label{prop:general_ELBO_ctmc_form}
For any planner $G_\phi$, we have the following ELBO:
\begin{align*}
\log (p^{G_\phi}_\theta(\mathbf{x}_0))&\geq E^{\theta,\phi}_1(\mathbf{x}_0)+E^{\theta,\phi}_2(\mathbf{x}_0),
\end{align*}
where, letting $r^{G_\phi}_t(\cdot;\mathbf{x}_0)$ be the distribution of $Y^{G_\phi,\mathbf{x}_0}$ with rate matrix \eqref{eq:conditionalplannedmatrix} and $Y^{G_\phi,\mathbf{x}_0}_0=(\mathbf{m},\dots,\mathbf{m})$:
{\small
\begin{align*}
E^{\theta,\phi}_1(\mathbf{x}_0)&=\int_0^1 \beta_t\underset{{\mathbf{x}_t\sim r^{G_\phi}_t(\cdot;\mathbf{x}_0)}}{\mathbb{E}}\biggl[N_M(\mathbf{x}_t)\sum_{i=1,x^i_t=m}^L \text{Cat}(i;G_\phi(\mathbf{x}_0,\mathbf{x}_t))\log\left(\text{Cat}(x_0^i;D^i_\theta(\mathbf{x}_t))\right)\biggr]dt\\ 
E^{\theta,\phi}_2(\mathbf{x}_0)&=-\int_0^1 \beta_t\underset{{\mathbf{x}_t\sim r^{G_\phi}_t(\cdot;\mathbf{x}_0)}}{\mathbb{E}}\biggl[N_M(\mathbf{x}_t)\sum_{i=1,x^i_t=m}^L \text{Cat}(i;G_\phi(\mathbf{x}_0,\mathbf{x}_t))\log\left(\frac{\text{Cat}(i;G_\phi(\mathbf{x}_0,\mathbf{x}_t))}{F_{\theta,\phi}(\mathbf{x}_t,x_0^i,i)}\right)\biggr]dt
\end{align*}}
\end{proposition}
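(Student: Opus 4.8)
The plan is to mirror the self-contained discrete-time argument of \S\ref{subsec:proofofelbo}, but now in continuous time, by combining the path-space data-processing inequality \eqref{eq:dataprocessinginequality} with Girsanov's formula \eqref{eq:girsanovs} for the KL divergence between the laws of two Markov jump processes. In fact all the objects needed have already been assembled in the preceding construction: we have the generative chain $X^{\theta,\phi}$ whose rate matrix $Q^{\theta,\phi}$ in \eqref{eq:approxplannedmatrix} satisfies $\mathbb{P}(X^{\theta,\phi}_1=\mathbf{x})=p^{G_\phi}_\theta(\mathbf{x})$, and the family of reference chains $Y^{G_\phi,\mathbf{x}_0}$ with rate matrix $R^{G_\phi}$ in \eqref{eq:conditionalplannedmatrix}, started at $[\mathbf{m}]^L$ and engineered so that $Y^{G_\phi,\mathbf{x}_0}_1=\mathbf{x}_0$ almost surely. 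The proof is then largely the act of substituting these two rate matrices into \eqref{eq:girsanovs} and simplifying.

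First I would invoke \eqref{eq:dataprocessinginequality} to write $\log p^{G_\phi}_\theta(\mathbf{x}_0)=-D_{KL}(\delta(\mathbf{x}_0)||p^{G_\phi}_\theta)\geq -D_{KL}(\mathbb{R}^{\mathbf{x}_0}||\mathbb{Q})$, where $\mathbb{R}^{\mathbf{x}_0}$ and $\mathbb{Q}$ denote the path laws of $Y^{G_\phi,\mathbf{x}_0}$ and $X^{\theta,\phi}$ on $D([0,1];\mathcal{V}^L)$; the endpoint condition $Y^{G_\phi,\mathbf{x}_0}_1=\mathbf{x}_0$ is exactly what makes the terminal marginal of $\mathbb{R}^{\mathbf{x}_0}$ equal $\delta(\mathbf{x}_0)$. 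Next I would apply \eqref{eq:girsanovs}. Two simplifications occur at once: since both chains are initialized at $[\mathbf{m}]^L$, the term $D_{KL}(\mu||\nu)$ vanishes; and since we deliberately chose the diagonal entries to agree, $R^{G_\phi}_t(\mathbf{x},\mathbf{x};\mathbf{x}_0)=Q^{\theta,\phi}_t(\mathbf{x},\mathbf{x})=-\beta_t N_M(\mathbf{x})$, the jump-rate discrepancy $R_t(\mathbf{x}_t,\mathbf{x}_t)-Q_t(\mathbf{x}_t,\mathbf{x}_t)$ drops out, leaving only the transition-rate KL term.

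The core computation is the off-diagonal sum. The only transitions with $R^{G_\phi}(\mathbf{y},\mathbf{x};\mathbf{x}_0)>0$ are those unmasking a single masked coordinate $i$ to $x_0^i$, so the sum over $\mathbf{y}\neq\mathbf{x}_t$ collapses to a sum over masked positions $i$. For each such transition the shared factor $\beta_t N_M(\mathbf{x}_t)$ cancels in the ratio $R^{G_\phi}/Q^{\theta,\phi}$, yielding $\text{Cat}(i;G_\phi(\mathbf{x}_0,\mathbf{x}_t))/(\text{Cat}(x_0^i;D^i_\theta(\mathbf{x}_t))F_{\theta,\phi}(\mathbf{x}_t,x_0^i,i))$, with $F_{\theta,\phi}$ as in \eqref{eq:Gtilde}. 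Splitting the logarithm of this ratio into a $\log\text{Cat}(x_0^i;D^i_\theta(\mathbf{x}_t))$ piece and a $\log(\text{Cat}(i;G_\phi(\mathbf{x}_0,\mathbf{x}_t))/F_{\theta,\phi}(\mathbf{x}_t,x_0^i,i))$ piece, and carrying the overall minus sign from $-D_{KL}$ through, the factor $\beta_t N_M(\mathbf{x}_t)$ reappears as the weight and the two pieces become precisely $E^{\theta,\phi}_1(\mathbf{x}_0)$ and $E^{\theta,\phi}_2(\mathbf{x}_0)$, completing the bound.

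The main obstacle is not the algebra but justifying the invocation of \eqref{eq:girsanovs}, i.e.\ verifying the mild regularity and absolute-continuity hypotheses under which Girsanov's formula for Markov jump processes holds (cf.\ the references cited around that equation). Concretely, I must ensure $\mathbb{R}^{\mathbf{x}_0}\ll\mathbb{Q}$: whenever $R^{G_\phi}(\mathbf{y},\mathbf{x};\mathbf{x}_0)>0$ one needs $Q^{\theta,\phi}(\mathbf{y},\mathbf{x})>0$, which holds provided $\text{Cat}(x_0^i;D^i_\theta(\mathbf{x}))>0$ and $F_{\theta,\phi}(\mathbf{x},x_0^i,i)>0$ for every masked coordinate $i$ along admissible trajectories---a genuine support assumption on the denoiser that should be noted. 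I would also confirm that the rate-KL integrand is integrable on $[0,1]$ despite the singularity of $\beta_t$ as $t\to1$ (either via a boundedness/limiting argument or by the standard truncation used in the cited time-reversal results), and record that $Y^{G_\phi,\mathbf{x}_0}_1=\mathbf{x}_0$ almost surely because every jump of the reference chain unmasks a coordinate to its clean value, so after the $N_M([\mathbf{m}]^L)=L$ jumps (which occur before time $1$ with probability one by the choice of $\beta_t$) the sequence equals $\mathbf{x}_0$.
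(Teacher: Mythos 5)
Your proposal is correct and follows essentially the same route as the paper's proof: the path-space data-processing inequality \eqref{eq:dataprocessinginequality}, Girsanov's formula \eqref{eq:girsanovs}, cancellation of the initial-law KL and of the matched diagonal entries, and collapse of the off-diagonal sum to single-coordinate unmaskings followed by splitting the logarithm into $E^{\theta,\phi}_1$ and $E^{\theta,\phi}_2$. The extra attention you give to the absolute-continuity and integrability hypotheses of Girsanov's theorem (and to why $Y^{G_\phi,\mathbf{x}_0}_1=\mathbf{x}_0$ a.s.) is a refinement the paper leaves implicit, not a different argument.
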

\begin{proof}
Let $\mathbb{R}^{\mathbf{x}_0}$ denote the distribution of paths of $Y^{G_\phi,\mathbf{x}_0}$ and $\mathbb{Q}^{\theta,\phi}$ those of $X^{\theta,\phi}$ from the preceeding discussion. Then, by \eqref{eq:dataprocessinginequality}, $\log (p^{G_\phi}_\theta(\mathbf{x}_0))\geq -D_{KL}(\mathbb{R}^{\mathbf{x}_0}||\mathbb{Q}^{\theta,\phi})$.
From \eqref{eq:girsanovs}, we have, using $Y^{G_\phi,\mathbf{x}_0}$ and $X^{\theta,\phi}$ have the same initial data so that the first KL term is 0:
\begin{align*}
&-D_{KL}(\mathbb{R}^{\mathbf{x}_0}||\mathbb{Q}^{\theta,\phi})\\ 
& = -\int_0^1 \mathbb{E}_{\mathbf{x}_t\sim r^{G_\phi}_t(\cdot;\mathbf{x}_0)}\biggl[-Q^{\theta,\phi}_t(\mathbf{x}_t,\mathbf{x}_t)+R^{G_\phi}(\mathbf{x}_t,\mathbf{x}_t;\mathbf{x}_0) \\ 
&\qquad\qquad\qquad\qquad\qquad\qquad\qquad\qquad\qquad+\sum_{\mathbf{y}\neq \mathbf{x}_t}R^{G_\phi}(\mathbf{y},\mathbf{x}_t;\mathbf{x}_0)\log\left(\frac{R^{G_\phi}(\mathbf{y},\mathbf{x}_t;\mathbf{x}_0)}{Q^{\theta,\phi}_t(\mathbf{y},\mathbf{x}_t)}\right)\biggr]dt\\
&=-\int_0^1 \beta_t\mathbb{E}_{\mathbf{x}_t\sim r^{G_\phi}_t(\cdot;\mathbf{x}_0)}\biggl[N_M(\mathbf{x}_t)\sum_{i=1,x^i_t=m}^L \text{Cat}(i;G_\phi(\mathbf{x}_0,\mathbf{x}_t))\times\\ 
&\qquad\qquad\qquad\qquad\qquad\qquad\qquad\qquad\qquad\times\log\left(\frac{\text{Cat}(i;G_\phi(\mathbf{x}_0,\mathbf{x}_t))}{\text{Cat}(x_0^i;D^i_\theta(\mathbf{x}_t))F_{\theta,\phi}(\mathbf{x}_t,x_0^i,i)}\right)\biggr]dt\\ 
& =\mathcal{E}^{\theta,\phi}_1(\mathbf{x}_0)+\mathcal{E}^{\theta,\phi}_2(\mathbf{x}_0).
\end{align*}
\end{proof}

This form is seen to be equivalent to that in Proposition \ref{prop:general_ELBO} in the next subsection.

\subsubsection{Time independent formulation of the ELBO}
Now we observe that, as expected, the ELBO is independent of the time schedule $\alpha_t$ (and hence $\beta_t$). We start by observing that 
\begin{align*}
\mathbb{P}(N_M(Y^{G_\phi,\mathbf{x}_0}_t)=k)&={L\choose k}\left(\exp\left(-\int_0^t \beta_sds\right)\right)^k\left(1-\exp\left(-\int_0^t \beta_sds\right)\right)^{L-k}.
\end{align*}
One can see this via law of competing exponentials or solving \eqref{eq:kolmogoroveq} for the pure death chain representing the number of mask states. 

Then, recalling that the transition probabilities for $Y^{G_\phi,\mathbf{x}_0}_t$ are independent of time and using \eqref{eq:transitionprobabilities}, we have $Y^{G_\phi,\mathbf{x}_0}_t|N_M(Y^{G_\phi,\mathbf{x}_0}_t)=L-k$ is equal in distribution to $\bar{Y}^{G_\phi,\mathbf{x}_0}_k$, where  $\bar{Y}^{G_\phi,\mathbf{x}_0}$ is the effective discrete time Markov chain with rate matrix \eqref{eq:discrete_time_conditional}.

Denoting by $\bar{p}^{G_\phi}_k(\cdot;\mathbf{x}_0)$ the distribution of $\bar{Y}^{G_\phi,\mathbf{x}_0}_k$, we have, for example:
\begin{align*}
E^{\theta,\phi}_1(\mathbf{x}_0)&=\sum_{k=1}^L{L\choose k}\int_0^1 \beta_t\left(\exp\left(-\int_0^t \beta_sds\right)\right)^k\left(1-\exp\left(-\int_0^t \beta_sds\right)\right)^{L-k}dt\\ 
&\times \quad k\mathbb{E}_{\mathbf{x}_k\sim \bar{p}^{G_\phi}_{L-k}(\cdot;\mathbf{x}_0)}\biggl[\sum_{i=1,x^i_k=m}^L \text{Cat}(i;G_\phi(\mathbf{x}_0,\mathbf{x}_k))\log\left(\text{Cat}(x_0^i;D^i_\theta(\mathbf{x}_k))\right)\biggr].
\end{align*}

Using $\exp(-\int_0^1 \beta_s ds)=\exp(-\int_0^1 \frac{d}{ds}\log(\alpha_{1-s})ds)=\lim_{s\downarrow 0}\exp(\log(\alpha(1-s))-\log(\alpha(0)))=\lim_{s\downarrow 0} \alpha(1-s)=0$ since $\alpha(1)=0,\alpha(0)=1$, we have
\begin{align*}
\int_0^1 \beta_t \left(\exp\left(-\int_0^t \beta_sds\right)\right)^k\left(1-\exp\left(-\int_0^t \beta_sds\right)\right)^{L-k}dt&=\int_0^1 u^{k-1}(1-u)^{L-k}du\\ 
&=B(k,L-k+1)\\ 
&=\frac{1}{k {L\choose k}}
\end{align*}
where $B$ is the beta function. 

Thus, changing $k$ to $L-k$ in the sum, we have: 
\begin{align*}
E^{\theta,\phi}_1(\mathbf{x}_0)&=\sum_{k=0}^{L-1}\mathbb{E}_{\mathbf{x}_k\sim \bar{p}^{G_\phi}_k(\cdot;\mathbf{x}_0)}\biggl[\sum_{i=1,x^i_t=m}^L \text{Cat}(i;G_\phi(\mathbf{x}_0,\mathbf{x}_k))\log\left(\text{Cat}(x_0^i;D^i_\theta(\mathbf{x}_k))\right)\biggr].
\end{align*}

Recalling that $\bar{p}^{G_\phi}_k$ is what is denoted as $p^{G_\phi}_k$ (we only added the bar here to distinguish it from the distribution of the continuous time chain), we see $E^{\theta,\phi}_1$ from Proposition \ref{prop:general_ELBO_ctmc_form} is equal to $\mathcal{E}^{\theta,\phi}_1$ from \ref{prop:general_ELBO}.

Applying the same manipulations to $\mathcal{E}^{\theta,\phi}_2(\mathbf{x}_0)$, we arrive at the following:
\begin{proposition}\label{prop:equivalenceofelbos}
For $\mathcal{E}_1^{\theta,\phi}$, $\mathcal{E}_2^{\theta,\phi}$ as in Proposition \ref{prop:general_ELBO} and $E_1^{\theta,\phi}$, $E_2^{\theta,\phi}$ as in Proposition \ref{prop:general_ELBO_ctmc_form}, we have:
\begin{align*}
\mathcal{E}_1^{\theta,\phi}(\mathbf{x}_0)=E_1^{\theta,\phi}(\mathbf{x}_0),\quad \mathcal{E}_2^{\theta,\phi}(\mathbf{x}_0)=E_2^{\theta,\phi}(\mathbf{x}_0),\quad \forall \mathbf{x}_0\in \mathcal{V}^L.
\end{align*}
\end{proposition}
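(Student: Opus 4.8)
The plan is to observe that the reduction already carried out for $E_1^{\theta,\phi}$ never used the specific form of its summand, so that it transfers verbatim to $E_2^{\theta,\phi}$. Both continuous-time terms from Proposition~\ref{prop:general_ELBO_ctmc_form} share the common template
\[
\pm\int_0^1 \beta_t\,\mathbb{E}_{\mathbf{x}_t\sim r^{G_\phi}_t(\cdot;\mathbf{x}_0)}\bigl[N_M(\mathbf{x}_t)\,\Phi(\mathbf{x}_t)\bigr]\,dt,
\]
where for $E_1^{\theta,\phi}$ one has $\Phi(\mathbf{x})=\sum_{i:x^i=m}\text{Cat}(i;G_\phi(\mathbf{x}_0,\mathbf{x}))\log\left(\text{Cat}(x_0^i;D^i_\theta(\mathbf{x}))\right)$, and for $E_2^{\theta,\phi}$ one has $\Phi(\mathbf{x})=\sum_{i:x^i=m}\text{Cat}(i;G_\phi(\mathbf{x}_0,\mathbf{x}))\log\left(\frac{\text{Cat}(i;G_\phi(\mathbf{x}_0,\mathbf{x}))}{F_{\theta,\phi}(\mathbf{x},x_0^i,i)}\right)$ together with an overall minus sign. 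The decisive point is that this second $\Phi$, exactly like the first, is a function of the state $\mathbf{x}$ and the target $\mathbf{x}_0$ alone: both the planner weights $\text{Cat}(i;G_\phi(\mathbf{x}_0,\mathbf{x}))$ and the quantity $F_{\theta,\phi}(\mathbf{x},x_0^i,i)$ from \eqref{eq:Gtilde} are time-independent.

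First I would condition on the number of masked coordinates, writing
\[
\mathbb{E}_{\mathbf{x}_t}\bigl[N_M(\mathbf{x}_t)\,\Phi(\mathbf{x}_t)\bigr]=\sum_{k=1}^L k\,\mathbb{P}(N_M(\mathbf{x}_t)=k)\,\mathbb{E}\bigl[\Phi(\mathbf{x}_t)\mid N_M(\mathbf{x}_t)=k\bigr],
\]
where the $k=0$ contribution drops out because the sum defining $\Phi$ ranges over masked positions. I would then insert the binomial law for $N_M(Y^{G_\phi,\mathbf{x}_0}_t)$ and the identification of the conditional law of $\mathbf{x}_t$ given $N_M=L-k$ with the discrete-time chain $\bar Y^{G_\phi,\mathbf{x}_0}_k\sim \bar p^{G_\phi}_{L-k}(\cdot;\mathbf{x}_0)$, both established in the preceding discussion. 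Since $\Phi$ carries no explicit $t$-dependence, the only $t$-dependent factor surviving is the prefactor, leaving the integral
\[
\int_0^1 \beta_t\left(\exp\left(-\int_0^t \beta_s\,ds\right)\right)^{k}\left(1-\exp\left(-\int_0^t \beta_s\,ds\right)\right)^{L-k}dt,
\]
which, via the substitution $u=\exp(-\int_0^t\beta_s\,ds)$ together with $\alpha_0=1$, $\alpha_1=0$, evaluates to the beta integral $B(k,L-k+1)=1/\bigl(k\binom{L}{k}\bigr)$.

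Substituting this value cancels both $\binom{L}{k}$ and the explicit factor $k$, so that every reference to the time schedule $\beta_t$ disappears and the term collapses to $\sum_{k=1}^{L}\mathbb{E}_{\mathbf{x}_k\sim \bar p^{G_\phi}_{L-k}(\cdot;\mathbf{x}_0)}[\Phi(\mathbf{x}_k)]$. Reindexing $k\mapsto L-k$ and recalling $\bar p^{G_\phi}_k=r^{G_\phi}_k$ turns this into $\sum_{k=0}^{L-1}\mathbb{E}_{\mathbf{x}_k\sim r^{G_\phi}_{k}(\cdot;\mathbf{x}_0)}[\Phi(\mathbf{x}_k)]$, which is exactly the $L\,\mathbb{E}_{k\sim\text{Unif}([0:L-1])}[\cdots]$ appearing in Proposition~\ref{prop:general_ELBO}. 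Carrying this computation with the first $\Phi$ reproduces the identity $E_1^{\theta,\phi}=\mathcal{E}_1^{\theta,\phi}$ already spelled out, and carrying it with the second $\Phi$ (propagating the overall minus sign unchanged) yields $E_2^{\theta,\phi}=\mathcal{E}_2^{\theta,\phi}$, completing the proof.

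I expect the main subtlety---rather than a genuine obstacle---to be confirming that the second $\Phi$ hides no implicit time dependence, since it is assembled from $F_{\theta,\phi}$ and the planner weights rather than a single logit; once this is verified, the binomial-law and beta-integral machinery is entirely schedule-agnostic and applies to both terms without any modification, so no new estimates beyond those used for $E_1^{\theta,\phi}$ are required.
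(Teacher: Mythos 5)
Your proposal is correct and follows essentially the same route as the paper: condition on $N_M(\mathbf{x}_t)$, use the binomial law for the number of masks and the identification of the conditional law with the discrete-time chain, evaluate the schedule integral as the beta function $B(k,L-k+1)=1/\bigl(k\binom{L}{k}\bigr)$, and reindex. The only difference is presentational — you abstract the summand into a generic time-independent $\Phi$ and run the computation once, whereas the paper carries it out explicitly for $E_1^{\theta,\phi}$ and then invokes ``the same manipulations'' for $E_2^{\theta,\phi}$; your explicit check that $F_{\theta,\phi}$ and the planner weights carry no hidden time dependence is exactly the point that justifies that step.
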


That is, rather than simulate the CTMC up to some random time $t\sim \text{Unif}(0,1)$ to obtain a sample of $Y^{G_\phi,\mathbf{x}_0}_t$, one may instead either sample an entire trajectory of the discrete time chain $\bar{Y}^{G_\phi,\mathbf{x}_0}$ and accumulate losses, or sample a random number of jumps $k\sim\text{Unif}([0:L-1])$ and a trajectory of $\bar{Y}^{G_\phi,\mathbf{x}_0}_k$ up to time $k$. 

\subsection{Generalization to Planners with Remasking (P2-style)}\label{subsec:generalizing_to_P2}
P2 \citep{peng2025pathplanningmaskeddiffusion} allows for the remasking of clean tokens while still requiring that the number of unmasked tokens in $\mathbf{x}_k$ is $k$. We replace the planner $G_\phi:\mathcal{V}^L\times \mathcal{V}^L\rightarrow \Delta^L$ with a sequence of planners $G^k_{\phi,2}:\mathcal{V}^L\times \mathcal{V}^L\rightarrow \Delta^{{L\choose k}},k=1,\dots,L$, where $G^k_{\phi,2}$ outputs a distribution on subsets of size $k$ of $[1:L]$. We then do:
\begin{algorithm}[h]
\small
\caption{Gillespie Sampler with a P2-Planner}
\label{alg:p2plannedsampling}
\begin{algorithmic}[1]
\State \textbf{Initialize:} $t \gets 0, \mathbf{x}_0 \gets (m, \dots, m)$, P2 planner $\lbrace G^k_{\phi,2}\rbrace_{k=1}^L$, denoiser $D_\theta$
\For{$k = 0 : L-1$}
    \State {\colorbox{gray!20}{\textbf{Plan}}} Sample $\mathbf{z} \sim D_\theta(\mathbf{x}_k)$
    \State Sample dimensions $I^{k+1}=(i_1,\dots,i_{k+1}) \sim G^{k+1}_{\phi,\rev{2}}(\mathbf{z}, \mathbf{x}_k)$
    \State {\colorbox{gray!20}{\textbf{Denoise}}}
    \State $x_{k+1}^i \gets z^i$ for $i\in I^{k+1}$
    \State $x_{k+1}^i\gets \mathbf{m}$ for $i\not\in I^{k+1}$
\EndFor
\State \textbf{return} $\mathbf{x}_L$
\State
\end{algorithmic}
\end{algorithm}

In P2, we in practice use ``P2-Topk'', which corresponds to: 
\begin{align}\label{eq:p2topkG}
G^k_{\phi,2}(\mathbf{z},\mathbf{x})=\delta\left(\text{Top-k}_{i\in [1:L]}\left(\text{Cat}\left(i;\hat{G}^\eta_\phi(\mathbf{z},\mathbf{x})\right)\right)\right),
\end{align}
where for $i\in\lbrace 1,\dots,L\rbrace$ and $\eta\geq 0$
\begin{align}\label{eq:P2BERT}
\text{Cat}(i;\hat{G}^\eta_\phi(\mathbf{z},\mathbf{x}))\propto \eta\text{Cat}(x^i;\delta(\mathbf{m}))\text{Cat}(z^i;D^i_\theta(\mathbf{x}))+\left(1-\text{Cat}(x^i;\delta(\mathbf{m}))\right)\text{Cat}(z^i;B^{i}_\phi(\mathbf{z}))
\end{align}
in the case of P2-BERT, and 
\begin{align}\label{eq:P2Self}
\text{Cat}(i;\hat{G}^\eta_\phi(\mathbf{z},\mathbf{x}))\propto \eta\text{Cat}(x^i;\delta(\mathbf{m}))\text{Cat}(z^i;D^i_\theta(\mathbf{x}))+\left(1-\text{Cat}(x^i;\delta(\mathbf{m}))\right)\text{Cat}(z^i;\hat{D}^{i}_\theta(\mathbf{x}))
\end{align}
in the case of P2-self. Here the output of the denoiser in unmasked positions $i$ of $\mathbf{x}$ is no longer assumed to be $\delta(x^i)$ when we write it as $\hat{D}$, an $B_\phi$ denotes an external BERT model. $\eta$ is a ``stochasticity parameter'' which controls the frequency of remasking - increasing $\eta$ boosts $\hat{G}^\eta$ in masked positions, so that unmasked positions are more likely to fall outside of the Top-k and be remasked.

\subsubsection{Deriving the transition probabilities}
Here we derive the one step transition probabilities for Alg. \ref{alg:p2plannedsampling}. This is the analogue of \eqref{eq:planned_transition_probs} in the setting where we generalize to allow for remasking. 

Let $p_\theta^{G_{\phi,2}}\in \Delta^{d^L}$ denote the distribution on $\mathcal{V}^L$ of a sample obtained via running Alg. \ref{alg:p2plannedsampling}, and let $p_{\theta,k+1}^{G_{\phi,2}}(\cdot|\mathbf{x}_k)\in \Delta^{d^L}$ denote the distribution of $\mathbf{x}_{k+1}$ given $\mathbf{x}_k$. With abuse of notation, we will also let $p_{\theta,k+1}^{G_{\phi,2}}(\cdot,\cdot|\mathbf{x}_k)\in \Delta^{d^{L^2}}$ denote the joint distribution $\mathbf{x}_{k+1}$ and the $k+1$'st sample $\mathbf{z}$. Note that $N_M(\mathbf{x}_{k+1})=L-(k+1)$ with probability 1. So for $\mathbf{y}\in\mathcal{V}^L$ with $N_M(\mathbf{y})=L-(k+1)$, we let $\mathcal{C}(\mathbf{y})=\lbrace i\in [1:L]:y^i\neq\mathbf{m}\rbrace$, and have: 
\begin{align}
p_{\theta,k+1}^{G_{\phi,2}}(\mathbf{y}|\mathbf{x}_k)&=\sum_{\mathbf{z}\in\mathcal{V}}p_{\theta,k+1}^{G_{\phi,2}}(\mathbf{y},\mathbf{z}|\mathbf{x}_k)\nonumber\\ 
& = \sum_{\mathbf{z}\in\mathcal{V}:z^i=y^i,\forall i\in \mathcal{C}(\mathbf{y})}\prod_{j=1}^L\text{Cat}\left(z^j;D^j_\theta(\mathbf{x}_k)\right)\text{Cat}\left(\mathcal{C}(\mathbf{y});G^{k+1}_{\phi,2}(\mathbf{z},\mathbf{x}_k)\right)\nonumber\\ 
& =\prod_{i\in \mathcal{C}(\mathbf{y})}\text{Cat}\left(y^i;D^i_\theta(\mathbf{x}_k)\right)\sum_{\mathbf{z}\in\mathcal{V}^L}\prod_{j=1}^L\text{Cat}\left(z^j;D^j_\theta(\mathbf{x}_k)\right)\text{Cat}\left(\mathcal{C}(\mathbf{y});G^{k+1}_{\phi,2}(\mathbf{z}^{-\mathbf{y}},\mathbf{x}_k)\right)\nonumber\\ 
& = \prod_{i\in \mathcal{C}(\mathbf{y})}\text{Cat}\left(y^i;D^i_\theta(\mathbf{x}_k)\right)F_{\theta,\phi,2}^{k+1}(\mathbf{x}_k,\mathbf{y})\label{eq:planned_transition_probsP2}
\end{align}
where 
\begin{align}\label{eq:GtildeP2}
F_{\theta,\phi,2}^{k+1}(\mathbf{x}_k,\mathbf{y}):=\mathbb{E}_{\mathbf{z}\sim D_\theta(\mathbf{x})}\left[\text{Cat}\left(\mathcal{C}(\mathbf{y});G^{k+1}_{\phi,2}(\mathbf{z}^{-\mathbf{y}},\mathbf{x})\right)\right],
\end{align}
and for $\mathbf{z},\mathbf{y}\in\mathcal{V}^L$, we denote by $\mathbf{z}^{-\mathbf{y}}\in \mathcal{V}^L$ the sequence which is the same as $\mathbf{z}$ except with $z^i$ replaced by $y^i$ for all $i\in\mathcal{C}(\mathbf{y})$. Note by our assumption on $D_\theta$ that this is $0$ if  $y^i\neq x_k^i$ in a position where $y^i,x_k^i\neq \mathbf{m}.$ Also note that the proof was the exact same as in Subsection \ref{subsubsec:derivetransitions}.

\subsubsection{Markov Chain Setup and ELBO for P2 Planner}
Now we observe how to apply Proposition \ref{prop:ELBOviaDTMC} to get an ELBO for $p_\theta^{G_{\phi,2}}$. We will then specialize this to the case of P2-TopK, and discuss the difficulties in obtaining a ``regularized'' approximation similar to Corollary \ref{cor:softmaxelbo} which lends itself to a computationally viable approximation as in Subsection \ref{subsec:efficientimplementation}.

By \eqref{eq:planned_transition_probsP2}, we have $p_\theta^{G_{\phi,2}}(\mathbf{x})=\mathbb{P}(X^{G_{\phi,2},\theta}_L=\mathbf{x})$, where $X^{G_{\phi,2},\theta}$ is the time homogenous Markov chain on $\mathcal{V}^L$ with transition matrix given for $\mathbf{x},\mathbf{y}$ by:
\begin{align}\label{eq:P2transitionmatrix}
Q^{\theta,\phi,2}(\mathbf{y},\mathbf{x})=
\prod_{i\in \mathcal{C}(\mathbf{y})}\text{Cat}\left(y^i;D^i_\theta(\mathbf{x})\right)F_{\theta,\phi,2}^{L-N_M(\mathbf{y})}(\mathbf{x},\mathbf{y})
\end{align}
when $N_M(\mathbf{y})=N_M(\mathbf{x})-1$ and $y^i=x^i,\forall i\in \mathcal{C}(\mathbf{y})\cap \mathcal{C}(\mathbf{x})$ and $0$ otherwise.

Once again, to obtain an ELBO for $p^{G_{\phi,2}}_\theta$ using Proposition \ref{prop:ELBOviaDTMC}, we select any family of transition matrices $R(\cdot,\cdot;\mathbf{x}_0)$ parameterized by $\mathbf{x}_0\in\mathcal{V}^L$ determining a family Markov chains $Y^{\mathbf{x}_0}$ such that \eqref{eq:required_condition_tildeX} holds. 

We choose, as in Subsection \ref{subsec:proofofelbo}, $Y^{\mathbf{x}_0}=Y^{G_{\phi,2},\mathbf{x}_0}$ to be the Markov chain with rate matrix obtained from replacing $D^i_\theta(\mathbf{x})$ with $\delta(x^i_0)$ in \eqref{eq:P2transitionmatrix}. Recalling the definition of $F_{\theta,\phi,2}$ from \eqref{eq:GtildeP2}, this yields $R(\cdot,\cdot;\mathbf{x}_0)$ to be $R^{G_{\phi,2}}(\cdot,\cdot;\mathbf{x}_0)$ given by, for $\mathbf{x},\mathbf{y}\in\mathcal{V}^L$:
\begin{align}\label{eq:p2reference}
R^{G_{\phi,2}}(\mathbf{y},\mathbf{x};\mathbf{x}_0)=
\prod_{i\in \mathcal{C}(\mathbf{y})}\text{Cat}\left(y^i;\delta(x^i_0))\right)\text{Cat}\left(\mathcal{C}(\mathbf{y});G_{\phi,2}^{L-N_M(\mathbf{y})}(\mathbf{x}_0,\mathbf{x})\right)
\end{align}
when $N_M(\mathbf{y})=N_M(\mathbf{x})-1$ and 0 otherwise.

Note that indeed \eqref{eq:required_condition_tildeX} holds, since at the $k$'th step $Y^{\mathbf{x}_0}_k$ always has $L-k$ masks, with the positions of the masks determined by sampling from the planner at each step.

Applying now Proposition \ref{prop:ELBOviaDTMC} with this choice of $R^{G_{\phi,2}}(\mathbf{y},\mathbf{x})$, we obtain:
\begin{proposition}\label{prop:general_ELBOP2planner}
For any P2-style collection of planners $G^k_{\phi,2},k\in[1,L]$, let $p^{G_{\phi,2}}_\theta$ denote the distribution of $\mathbf{x}_L$ obtained via the iterative sampling scheme Alg. \ref{alg:p2plannedsampling}. Then we have the following ELBO:
\begin{align*}
&\log (p^{G_{\phi,2}}_\theta(\mathbf{x}_0))\geq \mathcal{E}^{\theta,\phi,2}_1(\mathbf{x}_0)+\mathcal{E}^{\theta,\phi,2}_2(\mathbf{x}_0),\\
&\mathcal{E}^{\theta,\phi,2}_1(\mathbf{x}_0)=\sum_{k=0}^{L-1}\biggl[\underset{\mathbf{x}_k\sim r^{G_{\phi,2}}_k(\cdot;\mathbf{x}_0)}{\mathbb{E}}\biggl[\sum_{\mathbf{y}\in\mathcal{X}_{L-k-1}(\mathbf{x_0})} \text{Cat}(\mathcal{C}(\mathbf{y});G^{k+1}_{\phi,2}(\mathbf{x}_0,\mathbf{x}_k))\times\\ 
&\qquad\qquad\qquad\qquad\qquad\qquad\qquad\qquad\qquad\qquad\qquad\times\sum_{i\in\mathcal{C}(\mathbf{y})}\log\left(\text{Cat}(x_0^i;D^i_\theta(\mathbf{x}_k))\right)\biggr]\biggr]\\ 
&\mathcal{E}^{\theta,\phi,2}_2(\mathbf{x}_0)=-\sum_{k=0}^{L-1}\biggl[\underset{\mathbf{x}_k\sim r^{G_{\phi,2}}_k(\cdot;\mathbf{x}_0)}{\mathbb{E}}\biggl[\sum_{\mathbf{y}\in\mathcal{X}_{L-k-1}(\mathbf{x_0})}  \text{Cat}(\mathcal{C}(\mathbf{y});G^{k+1}_{\phi,2}(\mathbf{x}_0,\mathbf{x}_k))\times\\ 
&\qquad\qquad\qquad\qquad\qquad\qquad\qquad\qquad\qquad\qquad\qquad\times\log\left(\frac{\text{Cat}(\mathcal{C}(\mathbf{y});G^{k+1}_{\phi,2}(\mathbf{x}_0,\mathbf{x}_k))}{F^{k+1}_{\theta,\phi,2}(\mathbf{x}_k,\mathbf{y})}\right)\biggr]\biggr]\\ 
\end{align*}
where $r^{G_{\phi,2}}_k(\mathbf{x};\mathbf{x}_0)=\mathbb{P}(Y^{\mathbf{x}_0}_k=\mathbf{x})$ for $Y^{\mathbf{x}_0}$  the discrete time Markov chain with rate matrix \eqref{eq:p2reference} and $Y^{\mathbf{x}_0}_0=(\mathbf{m},\dots,\mathbf{m}),$ and here we recall the notation $\mathcal{X}_k(\mathbf{x}_0)$ from \eqref{eq:AOARMELBO}.
\end{proposition}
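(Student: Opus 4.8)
The plan is to reproduce the self-contained discrete-time argument of Section~\ref{subsec:proofofelbo} essentially verbatim, now feeding the remasking chain into the general machinery of Proposition~\ref{prop:ELBOviaDTMC}. The derivation of the one-step transition probabilities \eqref{eq:planned_transition_probsP2} already shows that $p^{G_{\phi,2}}_\theta(\mathbf{x})=\mathbb{P}(X^{G_{\phi,2},\theta}_L=\mathbf{x})$, where $X^{G_{\phi,2},\theta}$ is the time-homogeneous Markov chain on $\mathcal{V}^L$ started at $(\mathbf{m},\dots,\mathbf{m})$ with transition matrix $Q^{\theta,\phi,2}$ from \eqref{eq:P2transitionmatrix}. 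Thus an ELBO for $\log p^{G_{\phi,2}}_\theta(\mathbf{x}_0)$ follows immediately from Proposition~\ref{prop:ELBOviaDTMC} once we exhibit a reference family $Y^{\mathbf{x}_0}$ whose paths end at $\mathbf{x}_0$ almost surely.

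For the reference dynamics I would make the same choice as in the non-remasking proof: keep the planner intact and replace the denoiser prediction $D^i_\theta(\mathbf{x})$ by the clean one-hot $\delta(x^i_0)$ in \eqref{eq:P2transitionmatrix}, yielding the rate matrix $R^{G_{\phi,2}}(\cdot,\cdot;\mathbf{x}_0)$ of \eqref{eq:p2reference}. The terminal condition \eqref{eq:required_condition_tildeX} is then checked directly: because every unmasked coordinate of $Y^{\mathbf{x}_0}_k$ is forced to take the value $x^i_0$, the chain at step $k$ is supported on $\mathcal{X}_{L-k}(\mathbf{x}_0)$, and at the final step the planner $G^{L}_{\phi,2}$ must select the full index set $[1:L]$, so $Y^{\mathbf{x}_0}_L=\mathbf{x}_0$ with probability one. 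Notably this holds even though coordinates may be remasked along the way, since in the reference chain remasking followed by re-revealing always restores the same clean value.

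With the two chains in hand, I would insert $Q^{\theta,\phi,2}$ and $R^{G_{\phi,2}}$ into the summand of Proposition~\ref{prop:ELBOviaDTMC} and factor the log-ratio. On the support of $R^{G_{\phi,2}}(\cdot,\mathbf{x}_k;\mathbf{x}_0)$ the outgoing state $\mathbf{y}$ ranges over $\mathcal{X}_{L-k-1}(\mathbf{x}_0)$ with weight $\text{Cat}(\mathcal{C}(\mathbf{y});G^{k+1}_{\phi,2}(\mathbf{x}_0,\mathbf{x}_k))$, and each factor $\text{Cat}(y^i;\delta(x^i_0))$ equals one. Writing the ratio as
\begin{align*}
&\log\!\left(\frac{\text{Cat}(\mathcal{C}(\mathbf{y});G^{k+1}_{\phi,2}(\mathbf{x}_0,\mathbf{x}_k))}{\prod_{i\in\mathcal{C}(\mathbf{y})}\text{Cat}(y^i;D^i_\theta(\mathbf{x}_k))\,F^{k+1}_{\theta,\phi,2}(\mathbf{x}_k,\mathbf{y})}\right)\\
&\quad=-\sum_{i\in\mathcal{C}(\mathbf{y})}\log\text{Cat}(x^i_0;D^i_\theta(\mathbf{x}_k))+\log\!\left(\frac{\text{Cat}(\mathcal{C}(\mathbf{y});G^{k+1}_{\phi,2}(\mathbf{x}_0,\mathbf{x}_k))}{F^{k+1}_{\theta,\phi,2}(\mathbf{x}_k,\mathbf{y})}\right)
\end{align*}
separates the negated path-KL into exactly the cross-entropy term $\mathcal{E}^{\theta,\phi,2}_1$ and the planner-discrepancy term $\mathcal{E}^{\theta,\phi,2}_2$ of the statement, with $r^{G_{\phi,2}}_k$ the law of $Y^{\mathbf{x}_0}_k$ as claimed.

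The main obstacle I anticipate is not the algebra but the bookkeeping and finiteness that remasking introduces. Because coordinates can be revealed, hidden, and revealed again, the chain is no longer monotone, so I must argue carefully that the reachable $\mathbf{y}$ at step $k$ are precisely $\mathcal{X}_{L-k-1}(\mathbf{x}_0)$ and that the support of $R^{G_{\phi,2}}$ is contained in that of $Q^{\theta,\phi,2}$, which is what makes the path-KL, and hence the ELBO, finite. The inclusion follows because on $r^{G_{\phi,2}}_k$ the current state $\mathbf{x}_k$ already agrees with $\mathbf{x}_0$ on its unmasked coordinates, so the constraint $y^i=x^i_k$ on $\mathcal{C}(\mathbf{y})\cap\mathcal{C}(\mathbf{x}_k)$ built into \eqref{eq:P2transitionmatrix} is automatically met; positivity of the denoiser outputs (e.g.\ softmax logits) together with positivity of $F^{k+1}_{\theta,\phi,2}$ on the planner's support then closes the argument. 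Once this is established the proof is otherwise identical to that of Proposition~\ref{prop:general_ELBO}.
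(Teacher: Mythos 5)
Your proposal is correct and follows essentially the same route as the paper: identify the time-homogeneous chain $X^{G_{\phi,2},\theta}$ with transition matrix \eqref{eq:P2transitionmatrix}, build the reference chain by replacing $D^i_\theta(\mathbf{x})$ with $\delta(x^i_0)$ while retaining the planner (giving \eqref{eq:p2reference}), check the terminal condition, and apply Proposition~\ref{prop:ELBOviaDTMC} with the same factorization of the log-ratio into the cross-entropy and planner-discrepancy terms. Your additional care about the terminal condition under remasking and about support containment/finiteness of the path-KL only makes explicit what the paper leaves implicit.
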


Specializing to P2 Top-k, where $G_{\phi,2}$ is as in \eqref{eq:p2topkG} so the jump positions of $Y^{\mathbf{x}_0}$ are deterministic and defined recursively via (suppressing the dependence on $\mathbf{x}_0$ in the notation):
\begin{align}\label{eq:p2topkrecursion}
I_k&=\text{Top-k}_{i\in[1:L]}\text{Cat}(i;\hat{G}^\eta_\phi(\mathbf{x}_0,Y_{k-1})),k=1,\dots,L,I_0=\emptyset\\
Y^i_k&=\begin{cases}
x_0^i,&i\in I_k\\ 
\mathbf{m},&i\not\in I_k,
\end{cases}\nonumber
\end{align}
We have $G^{k+1}_{\phi,2}=\delta(I_{k+1})$. So $\mathcal{E}_1^{\theta,\phi,2}$ becomes:
\begin{align*}
\mathcal{E}^{\theta,\phi,\text{top-k}}_1(\mathbf{x}_0)&=\sum_{k=0}^{L-1}\sum_{i\in I_{k+1}}\log\left(\text{Cat}(x_0^i;D^i_\theta(Y_k))\right).
\end{align*}
$\mathcal{E}^{\theta,\phi,2}(\mathbf{x}_0)$ similarly becomes:
\begin{align*}
\mathcal{E}^{\theta,\phi,\text{top-k}}(\mathbf{x}_0)=\sum_{k=0}^{L-1}\log\left(F^{k+1}_{\theta,\phi,2}(Y_k,Y_{k+1})\right).
\end{align*}
Recalling the definition of $F^{k+1}_{\theta,\phi,2}$, we have here that:
\begin{align*}
&F^{k+1}_{\theta,\phi,2}(Y_k,Y_{k+1})\\ 
&=\sum_{\mathbf{z}\in\mathcal{V}^L:\text{Cat}(i;\hat{G}^\eta_\phi(\mathbf{z}^{-Y_{k+1}},Y_k))<\text{Cat}(j;\hat{G}^\eta_\phi(\mathbf{z}^{-Y_{k+1}},Y_k))),\forall j\in I_{k+1},i\not\in I_{k+1}}\prod_{i=1}^L\text{Cat}(z^i;D^i_\theta(Y_k))\\ 
&\geq \prod_{i\not\in I_{k+1}}\text{Cat}(x_0^i;D_\theta^i(Y_k)),
\end{align*}
by definition of $I_{k+1}$. Thus we have:
\begin{align*}
&\mathcal{E}^{\theta,\phi,\text{top-k}}_1(\mathbf{x}_0)+\mathcal{E}^{\theta,\phi,\text{top-k}}(\mathbf{x}_0)\\ 
&\geq \sum_{k=0}^{L-1}\left(\sum_{i\not\in I_{k+1}}\log(\text{Cat}(x_0^i;D_\theta^i(Y_k)))+\sum_{i\in I_{k+1}}\log(\text{Cat}(x_0^i;D_\theta^i(Y_k))\right)\\ 
& = \sum_{k=0}^{L-1}\sum_{i=1}^L\log(\text{Cat}(x_0^i;D_\theta^i(Y_k))= \sum_{k=0}^{L-1}\sum_{i=1,Y_k^i=\mathbf{m}}^L\log(\text{Cat}(x_0^i;D_\theta^i(Y_k)),
\end{align*}
where in the last step we recall that we impose $D_\theta^i(Y_k)=\delta(Y_k^i)$ if $Y_k^i\neq \mathbf{m}$, and for any $i$ such that $Y_k^i\neq\mathbf{m},$ $Y^i_k=x^i_0.$ The resulting ELBO specializing Proposition \ref{prop:general_ELBOP2planner} to P2-Topk is just the same as that for greedy ancestral in Corollary \ref{cor:greedyELBO}, except that the paths are determined by the P2-style sequence of planners:
\begin{corollary}\label{cor:P2ELBO}
For $p_\theta^{\text{top-k}}$ the distribution of $\mathbf{x}_L$ from Alg. \ref{alg:p2plannedsampling} with $G_\phi$ as in \eqref{eq:p2topkG}, we have:
\begin{align*}
\log(p_\theta^{\text{top-k}}(\mathbf{x}_0))&\geq \mathcal{E}^{\theta,\text{top-k}}(\mathbf{x}_0),\\
\mathcal{E}^{\theta,\text{top-k}}(\mathbf{x}_0)& = L \mathbb{E}_{k\sim \text{Unif}([0:L-1])}\left[\sum_{i=1,Y^i_k=\mathbf{m}}^L\text{Cat}(x_0^i;D^i_\theta(Y^{\mathbf{x_0}}_k))\right],
\end{align*}
where $Y^{\mathbf{x_0}}$ satisfies the recursion \eqref{eq:p2topkrecursion}. 
\end{corollary}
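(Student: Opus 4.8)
The plan is to derive Corollary~\ref{cor:P2ELBO} as the deterministic specialization of the general P2-planner bound in Proposition~\ref{prop:general_ELBOP2planner}, mirroring exactly the argument used for greedy ancestral sampling in Corollary~\ref{cor:greedyELBO}. First I would substitute the P2-TopK planner \eqref{eq:p2topkG} into Proposition~\ref{prop:general_ELBOP2planner}. Because $G^{k+1}_{\phi,2}(\mathbf{x}_0,\mathbf{x})=\delta(I_{k+1})$ is a point mass on the Top-k index set, the reference chain $Y^{\mathbf{x}_0}$ with transition matrix \eqref{eq:p2reference} degenerates to the deterministic trajectory \eqref{eq:p2topkrecursion}: at step $k+1$ it keeps exactly the coordinates in $I_{k+1}$ unmasked (set to the values of $\mathbf{x}_0$) and masks the rest. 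This collapses the outer expectation $\mathbb{E}_{\mathbf{x}_k\sim r^{G_{\phi,2}}_k}$ to evaluation at $Y_k$, and collapses the inner sum over $\mathbf{y}\in\mathcal{X}_{L-k-1}(\mathbf{x}_0)$ to the single term with $\mathcal{C}(\mathbf{y})=I_{k+1}$.

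With this substitution, $\mathcal{E}^{\theta,\phi,2}_1(\mathbf{x}_0)$ immediately becomes $\sum_{k=0}^{L-1}\sum_{i\in I_{k+1}}\log\text{Cat}(x_0^i; D^i_\theta(Y_k))$, the accumulated logits of the selected coordinates along the path. For $\mathcal{E}^{\theta,\phi,2}_2(\mathbf{x}_0)$, the delta planner forces $\text{Cat}(\mathcal{C}(\mathbf{y}); G^{k+1}_{\phi,2}(\mathbf{x}_0,\mathbf{x}_k))=1$ on the surviving term, so the whole term reduces to $\sum_{k=0}^{L-1}\log F^{k+1}_{\theta,\phi,2}(Y_k,Y_{k+1})$. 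The crucial step is to lower bound this $F$ factor: from its definition \eqref{eq:GtildeP2}, $F^{k+1}_{\theta,\phi,2}(Y_k,Y_{k+1})$ is the $D_\theta$-expectation of the event that the planner, run on $\mathbf{z}^{-Y_{k+1}}$, ranks exactly $I_{k+1}$ in its Top-k. I would restrict the defining sum to the sub-event where $\mathbf{z}$ additionally agrees with $\mathbf{x}_0$ on every coordinate outside $I_{k+1}$; on this sub-event $\mathbf{z}^{-Y_{k+1}}$ coincides with $\mathbf{x}_0$ in all the relevant positions, so by the very definition of $I_{k+1}$ in \eqref{eq:p2topkrecursion} the Top-k selection is indeed $I_{k+1}$. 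The probability of this sub-event factorizes as $\prod_{i\notin I_{k+1}}\text{Cat}(x_0^i; D^i_\theta(Y_k))$, giving the desired lower bound on $\mathcal{E}^{\theta,\phi,2}_2$.

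Summing $\mathcal{E}^{\theta,\phi,2}_1$ with this bound merges the selected-coordinate logits with the complementary unselected-coordinate logits into $\sum_{k=0}^{L-1}\sum_{i=1}^L \log\text{Cat}(x_0^i; D^i_\theta(Y_k))$. Invoking the standing convention $D^i_\theta(Y_k)=\delta(Y_k^i)$ on already-unmasked positions, together with $Y_k^i=x_0^i$ there, every unmasked coordinate contributes $\log 1=0$, so only the masked positions survive, yielding $\sum_{k=0}^{L-1}\sum_{i:\,Y_k^i=\mathbf{m}}\log\text{Cat}(x_0^i; D^i_\theta(Y_k))$. Rewriting $\sum_{k=0}^{L-1}$ as $L\,\mathbb{E}_{k\sim\text{Unif}([0:L-1])}$ gives exactly $\mathcal{E}^{\theta,\text{top-k}}(\mathbf{x}_0)$, which is structurally identical to the greedy ELBO of Corollary~\ref{cor:greedyELBO} with the path now fixed by the P2 recursion. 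I expect the main obstacle to be the lower bound on $F^{k+1}_{\theta,\phi,2}$: one must verify that the remasking-enabled Top-k event genuinely contains the clean-agreement sub-event, which requires carefully matching the planner score $\hat{G}^\eta_\phi$ evaluated at $\mathbf{z}^{-Y_{k+1}}$ against the score that defines the recursion, while tracking which coordinates the P2 planner is permitted to remask.
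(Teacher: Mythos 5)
Your proposal is correct and follows essentially the same route as the paper: specialize Proposition~\ref{prop:general_ELBOP2planner} to the delta Top-k planner so the reference chain collapses to the deterministic recursion \eqref{eq:p2topkrecursion}, reduce $\mathcal{E}^{\theta,\phi,2}_2$ to $\sum_k \log F^{k+1}_{\theta,\phi,2}(Y_k,Y_{k+1})$, lower-bound each $F$ by the probability of the sub-event that $\mathbf{z}$ agrees with $\mathbf{x}_0$ outside $I_{k+1}$ (on which the Top-k selection is $I_{k+1}$ by definition), and merge with $\mathcal{E}^{\theta,\phi,2}_1$ using the convention $D^i_\theta(Y_k)=\delta(Y^i_k)$ on unmasked positions. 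The only cosmetic discrepancy is that your derivation retains the $\log$ inside the final sum, which matches the paper's own derivation (the displayed statement of the corollary omits it, evidently a typo inherited from Corollary~\ref{cor:greedyELBO}).
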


However, regularizing the $\delta$ in the definition of the P2-Topk planner \eqref{eq:p2topkG} to make it have full support, and hence be better approximated by a uniformly random style of sampling, quickly reveals that using such the ELBO becomes computationally prohibitive, even using e.g.\ the soft-max gumbel noise trick. Recall this is precisely what we did for the greedy ancestral planner using the softmax approximation done in Corollary \ref{cor:softmaxelbo}. This additional overhead arises because for each randomly sampled time step $k$, one would need not only to simulate $Y^{\mathbf{x_0}}_k$ to time $k$, which requires $k$ function evaluations of the planner, but also to compute and sum over all of the ${L\choose k-1}$ weights corresponding to the ${L\choose k-1}$ values of $\mathbf{y}$ which are possible locations for the next jump in the reference dynamics in lower bound Proposition \ref{prop:general_ELBOP2planner}. For this reason, even though we use P2-Topk sampling in our experiments, we assume the role of remasking is relatively minimal compared to the unmasking selection process, so that the series of approximations inspired by Corollary \ref{cor:softmaxelbo} outlined in \S \ref{subsec:efficientimplementation} and detailed in \S \ref{subsec:connectionwithsoftmaxcorollary} still results in a training objective more reflective of the sampling process than the vanilla loss \eqref{eq:AOARMELBO}.

Also note that, in P2 \citep{peng2025pathplanningmaskeddiffusion}, the ELBO result was proved assuming a continuous-time dependent, randomly sampled unmasking process, although the top-k sampling procedure Alg. \ref{alg:p2plannedsampling} with $G_{\phi,2}$ as in \eqref{eq:p2topkG} was used in practice. This creates a similar mismatch between training and sampling that is present in vanilla DLMs vs.\ greedy ancestral - see Proposition \ref{prop:greedynotanELBO}. The results shown here are for the practically used P2-Topk sampling procedure, hence there is little to no overlap in the analysis performed between the manuscripts.

\subsection{\rev{Other Sampling Methods with Multiple Denoising Positions and/or remasking}}\label{app:moreotherdenoisingmethods}
\rev{
We show here how one can view other sampling strategies in the literature as special cases of Alg. \ref{alg:p2plannedsampling} and hence train also for these strategies using the result of Proposition \ref{prop:general_ELBOP2planner}.}

\xhdr{RDM sampling \cite{RDM}} is captured by taking 
\begin{align*}
G^k_{\phi,2}(\mathbf{z},\mathbf{x})=\delta\left(\text{Top-k}_{i\in[ 1:L]}\left(\text{Cat}\left(z^i;\hat{D}^i_{\theta}(\mathbf{x})\right)\right)\right),
\end{align*}
\rev{where $\hat{D}_\theta$ is as in \eqref{eq:P2Self}. This strategy uses the denoisers logits on both masked and unmasked positions, and only keeps tokens which fall in the top$-k$ threshold.}

\rev{\xhdr{Top-j Block denoising \cite{nie2025largelanguagediffusionmodels}} is captured by taking 
\begin{align*}
G^k_{\phi,2}(\mathbf{z},\mathbf{x})=\delta\left(\text{Top-j}_{i\in B(\mathbf{x})}\left(\text{Cat}\left(z^i;D^i_{\theta}(\mathbf{x})\right)\right)\cup\lbrace i:x^i\neq \mathbf{m} \rbrace\right).
\end{align*}
Here previously unmasked tokens are kept, $B(\mathbf{x})$ is a block of masked positions of $\mathbf{x}$ of a predetermined fixed size, and $j$ is a fixed desired number of positions to be denoised at each step. Note that time here should instead only run to $\lfloor L/j\rfloor$ rather than $L-1$, since at this point all tokens will be clean.
}

\rev{\xhdr{Confidence Thresholding \cite{wu2025fastdllmtrainingfreeaccelerationdiffusion}}: In this strategy, the number of tokens to be clean at a given time step is adaptive to how many tokens exceed a certain confidence threshold. This means that one would need to allow  $k$ in Alg. \ref{alg:p2plannedsampling} to change adaptively based on $\mathbf{z}$ and $\mathbf{x}$. While we don't provide details here for the sake of brevity, one can make such a modification to the sampling algorithm and follow the general principle of writing down the transitions of the discrete time Markov chain for the generation dynamics in terms of a planner and comparing with paths which use a supervised planner using Proposition \ref{prop:ELBOviaDTMC} to obtain a rigorous ELBO in such regimes.}

\section{Experimental Details}
\label{app:exp-detail}
\subsection{Protein Sequence Generation}
\label{sec:protein_benchmark_eval}

\paragraph{Setup.}
We evaluate our approach against leading protein sequence generation models. The comparison includes three discrete diffusion models—DPLM~\citep{DPLM}, EvoDiff~\citep{Alamdari2024ProteinGW}, and ESM3~\citep{esm3}—along with the autoregressive baseline ProGen2~\citep{Nijkamp2022ProGen2ET}.  Each model generates 100 sequences for target lengths in ${200,300,\dots,800}$. DPLM follows its standard setting, using a sequence length tied to the number of sampling steps and temperature $0.9$, with rejection resampling disabled to ensure fairness. ESM3 is sampled with temperature $1.0$, cosine noise schedule, top-$p=1$, and 500 denoising steps. Special tokens are stripped to guarantee valid amino acid outputs.

\paragraph{Evaluation.}
Generated sequences are assessed using structure prediction as a proxy for functional plausibility. Specifically, we fold each sequence with ESMFold~\citep{esm2} and extract three structural metrics:

\begin{itemize}[leftmargin=*]
    \item \textbf{pLDDT} (predicted Local Distance Difference Test): an estimate of local per-residue confidence, defined as the expected accuracy of predicted interatomic distances. Formally, for residue $i$, 
    \[
    \text{pLDDT}(i) = 100 \times \mathbb{E}\!\left[1 - \frac{|d_{ij}^{\text{pred}} - d_{ij}^{\text{true}}|}{d_{ij}^{\text{true}}}\right]_{j \in \mathcal{N}(i)},
    \]
    where $d_{ij}$ denotes pairwise distances and $\mathcal{N}(i)$ indexes local neighbors. The reported score is the average over residues.

    \item \textbf{pTM} (predicted Template Modeling score): measures global structural similarity between predicted and true structures, adapted from the TM-score~\citep{Zhang2004}. Given length $L$ and alignment $u(i)$ between residues, 
    \[
    \text{pTM} = \max_{\text{alignments } u}\; \frac{1}{L} \sum_{i=1}^L \frac{1}{1 + \bigl(d_{i,u(i)} / d_0(L)\bigr)^2},
    \]
    where $d_{i,u(i)}$ is the distance deviation and $d_0(L)$ is a length-dependent scaling factor.

    \item \textbf{pAE} (predicted Alignment Error): estimates the expected positional error in aligning residue $i$ of the predicted structure to residue $j$ of the true structure. Formally, 
    \[
    \text{pAE}(i,j) = \mathbb{E}\!\left[\|x_i^{\text{pred}} - x_j^{\text{true}}\|_2\right],
    \]
    averaged across all residue pairs. Lower values indicate better global alignment.
\end{itemize}

Since high local confidence can mask poor global geometry (e.g., high pLDDT but high pAE), we combine these into a binary \textit{foldability} criterion: the fraction of sequences with pLDDT $> 80$, pTM $> 0.7$, and pAE $< 10$. This composite measure penalizes degenerate patterns (e.g., repetitive “ABABAB” sequences) that often achieve misleadingly high scores in isolation.

In addition to structural metrics, we compute two distributional statistics that capture whether the model avoids mode collapse:

\begin{itemize}[leftmargin=*]
    \item \textbf{Token entropy}: measures per-position variability of amino acid usage across generated sequences.  
    Let $\mathcal{A}$ denote the set of amino acids observed in the generated set, and $p(a)$ the empirical frequency of amino acid $a \in \mathcal{A}$.  
    The token entropy is
    \[
    H = - \sum_{a \in \mathcal{A}} p(a) \log p(a),
    \]
    with higher values indicating richer amino acid usage.

    \item \textbf{Sequence diversity}: quantifies variability across full sequences.  
    For a batch $\{\mathbf{x}^{(1)}, \dots, \mathbf{x}^{(B)}\}$ of equal length $L$, define pairwise identity as
    \[
    \text{Id}(\mathbf{x}^{(m)}, \mathbf{x}^{(n)}) = \frac{1}{L} \sum_{i=1}^L \mathbf{1}\!\left[x^{(m)}_i = x^{(n)}_i\right].
    \]
    The sequence diversity is then
    \[
    \text{Diversity} = 1 - \frac{2}{B(B-1)} \sum_{1 \leq m < n \leq B} \text{Id}(\mathbf{x}^{(m)}, \mathbf{x}^{(n)}),
    \]
    which ranges from 0 (identical sequences) to 1 (completely dissimilar).  
\end{itemize}

\paragraph{Training Details for the 150M DLM}
\label{sec:training-detail-DLM-protein}
We train a 150M-parameter masked diffusion model on protein data. Training follows the open-source DPLM implementation\footnote{\url{https://github.com/bytedance/dplm}}, using the same transformer backbone as DPLM-150M and ESM2-150M. The model is trained from scratch for 500k steps with an effective batch size of 320k tokens per iteration, achieved via multi-GPU, multi-node training with gradient accumulation on L40, H100, and A100. The dataset is UniRef50, which contains roughly 40M protein sequences clustered at 50\% sequence identity, ensuring non-redundant coverage. UniRef50 is a widely adopted resource for protein language modeling.

\subsection{Text Generation}
\label{app:textgen}
\subsubsection{Setup}
\paragraph{Dataset.} 
We use the \textsc{OpenWebText} (OWT) corpus\footnote{\url{http://Skylion007.github.io/OpenWebTextCorpus}}, a large-scale collection of English web pages curated to match the distribution of OpenAI’s WebText. The dataset is preprocessed with the GPT-2 byte-pair tokenizer and sequences are wrapped or truncated to a maximum length of $L=1024$ tokens. For evaluation, a held-out split is reserved to compute distributional metrics.  

\paragraph{Baselines.} 
We compare against a wide range of autoregressive and diffusion-based language models:  
\begin{itemize}[leftmargin=*]
    \item \textbf{AR (GPT-style):} standard autoregressive language model trained on OWT.  
    \item \textbf{MDLM~\citep{mdlm}:} masked diffusion language model with uniform random masking.  
    \item \textbf{MDLM + FB / DFM~\citep{DFM}:} MDLM augmented with forward–backward or discrete flow matching correctors.  
\end{itemize}
All checkpoints are reused from prior work for comparability.  

\paragraph{Training.} 
We follow the same training configurations as in MDLM~\citep{mdlm}. Unless otherwise noted, models are initialized with the same GPT-2 tokenizer and architecture. Training details are:  
\begin{itemize}[leftmargin=*]
    \item Optimizer: AdamW, with learning rate $3 \times 10^{-4}$ and linear warmup of $2.5$k steps.  
    \item Batch size: $32$ sequences per GPU, $16$ H100 GPUs in total.  
    \item Gradient clipping: $0.1$.  
    \item Training steps: $228k$ steps, with checkpoints saved every $19k$ steps.  
\end{itemize}

\paragraph{Sampling and Decoding.} 
We use P2 self-planning~\citep{peng2025pathplanningmaskeddiffusion} in sampling.  Unless otherwise specified, we use:  
\begin{itemize}[leftmargin=*]
    \item 64-bit floating-point, as prior work showed 32-bit sampling led to reduced diversity.  
    \item nucleus sampling with $p=0.9$.
    \item 5,000 sequences per model–sampler pair.  
\end{itemize}

\paragraph{Evaluation.} 
We evaluate generated text against the held-out OWT distribution. Metrics include:  
\begin{itemize}[leftmargin=*]
    \item \textbf{MAUVE}~\citep{pillutla2021mauve}: MAUVE directly compares the generated distribution from a text generation model to a distribution of human-written text using divergence frontiers. Generated samples and a ground-truth corpus of data are embedded using an external language model. The two distributions are compared in the embedding space using an area under the divergence curve to summarize both Type I and Ttype II errors.  
    \item \textbf{Generative perplexity (Gen PPL.):} cross-entropy of generated samples under a pretrained language model. This measures the concordance of the model under consideration and a strong pretrained language model on the generated text. This can be problematic as it only considers generated text. Some distributions of text make it much simpler (and thereby better Gen PPL) even when not satisfactory to a human reader.
    \item \textbf{Entropy:} average per-token entropy of the generated distribution. This measures the token diversity of the generated text. Model collapse can be evaluated if entropy decreases substantially. 
\end{itemize}
MAUVE is a most robust indicator, while Gen PPL. can be gamed by overconfident sampling schedules. For all evaluation metrics we match the settings of \cite{wang2025remaskingdiscretediffusionmodels}.

\subsubsection{\rev{Ablation of Sampling Methods}}
\rev{Table~\ref{tab:sampling_methods_text} compares inference-time planners while holding the denoiser fixed. Across all step budgets, P2-Self sampling consistently yields the best quality–diversity tradeoff: it achieves the highest MAUVE and the lowest generative perplexity, with only a mild reduction in entropy relative to Greedy and Probability Margin. The gap is most pronounced in the fast-sampling regime. At $T=64$, Greedy decoding as in MaskGIT~\citep{Chang_2022_CVPR} substantially underperforms, suggesting that purely myopic confidence selection can lock the trajectory into suboptimal local choices when the model is imperfect. Probability Margin~\citep{kim2025trainworstplanbest} improves over Greedy at intermediate and large $T$, but remains below P2-Self and does not scale monotonically with more steps, indicating that margin-based heuristics are less robust to sampling budget. Overall, these results reinforce that planner choice remains a crucial degree of freedom at inference, and that the P2 framework~\citep{peng2025pathplanningmaskeddiffusion} provides a more reliable path selection rule for converting denoiser confidence into high-quality generations.}
\begin{table}[h]
\centering
\caption{Comparison of Greedy, Probability Margin, and P2-Self Sampling across metrics and number of sampling steps.}
\begin{tabular}{llccc}
\toprule
\textbf{Metric} & \textbf{Method} & $T=32$ & $T=64$ & $T=128$ \\
\midrule
\multirow{3}{*}{MAUVE} 
 & Greedy & 0.011 & 0.021 & 0.056 \\
 & Probability Margin & 0.011 & 0.039 & 0.051 \\
 & P2-Self & 0.013 & 0.046 & 0.067 \\
\midrule
\multirow{3}{*}{Gen PPL}
 & Greedy & 44.34 & 34.18 & 29.38 \\
 & Probability Margin & 44.27 & 34.37 & 29.39 \\
 & P2-Self & 40.19 &29.98  & 24.33 \\
\midrule
\multirow{3}{*}{Entropy}
 & Greedy & 5.35 & 5.30 & 5.25 \\
 & Probability Margin & 5.35 & 5.30 & 5.25 \\
 & P2-Self & 5.32 & 5.24 & 5.16 \\
\bottomrule
\end{tabular}
\label{tab:sampling_methods_text}
\end{table}

\subsection{Code Generation}
\subsubsection{Setup}
\label{app:code-gen-setup}
We evaluate \shortname on code generation. Training follows the Open-dLLM framework~\citep{opendllm2025}, where we initialize from Qwen2.5-Coder and adapt it to the diffusion setting with bidirectional attention. The model is trained on the FineCode corpus, which integrates the \textit{opc-annealing-corpus} and \textit{Ling-Coder-SyntheticQA}, providing both algorithmic and synthetic QA data. Following the Open-dLLM recipe, each training sequence is randomly masked with a ratio uniformly sampled from $[0,1]$, and the model is optimized with a cross-entropy loss over masked positions, weighted by the inverse of the mask ratio. This ensures compatibility with the released training pipeline, data recipes, and evaluation protocols.  

Evaluation is conducted on the following benchmarks:  
\begin{itemize}[leftmargin=*]
    \item \textsc{HumanEval}~\citep{Chen2021EvaluatingLL}: 164 hand-written Python programming problems designed to test functional correctness.  
    \item \textsc{MBPP}~\citep{austin2021programsynthesislargelanguage}: 974 crowd-sourced Python problems of varying difficulty.  
    \item \textsc{HumanEval+} and \textsc{MBPP+}: augmented variants that extend the original datasets with paraphrased prompts and additional test cases.  
\end{itemize}

For code infilling, we use \textsc{HumanEval-Infill}~\citep{bavarian2022efficienttraininglanguagemodels} and the Python subset of \textsc{SantaCoder-FIM}~\citep{sagtani2024improvingfimcodecompletions}. We report pass@1 and pass@10 for \textsc{HumanEval} and \textsc{MBPP}, and exact match for \textsc{SantaCoder-FIM}, following the official evaluation protocols. To examine length control, we additionally test under different initial mask spans (4, 8, 16, 32, 64) and report their averaged results.

\subsubsection{\rev{Ablation of Sampling Methods}}
\rev{
We ablate the role of the sampling strategy while keeping the denoiser and training setup fixed. Table~\ref{tab:morecodesamplingcomparison} compares P2-self against several commonly used alternatives: (i) vanilla ancestral sampling with uniform unmasking, (ii) greedy ancestral sampling that always selects the highest-confidence positions, (iii) entropy-based confidence ordering, and (iv) the TopK-margin heuristic. Across all six code benchmarks, P2-self is consistently the strongest method. The improvements are most pronounced in Pass@1, where P2-self exceeds the best baseline by large margins (e.g., 20.8 vs.\ 12.6 on HumanEval and 16.7 vs.\ 9.2 on MBPP), and it also yields steady gains in Pass@10 and in the infilling and SantaCoder evaluations. 
Among baselines, confidence-aware orderings (greedy ancestral and entropy-based confidence) substantially outperform vanilla ancestral sampling, indicating that exploiting denoiser uncertainty during decoding is critical for code generation. However, these heuristics remain myopic: they make locally optimal unmasking decisions without explicitly reasoning about longer-range dependencies or future refinement steps, which limits their ability to traverse the high-probability decoding trajectories that matter at inference. TopK-margin is weaker than other confidence-based methods, suggesting that hard-thresholding margins can be overly conservative and discard useful intermediate updates. Overall, this ablation shows that simply reordering unmasking by instantaneous confidence is insufficient; self-planning with path-level lookahead is necessary to fully close the training--inference gap and achieve robust gains.}

\begin{table}[ht]
\centering
\resizebox{\textwidth}{!}{%
\begin{tabular}{lrrrrrrrrrr}
\toprule
\textbf{Method} &
\textbf{HumanEval} & & 
\textbf{HumanEval+} & & 
\textbf{MBPP} & & 
\textbf{MBPP+} & & 
\textbf{HumanEval Infill} & 
\textbf{SantaCoder} \\
 & \textbf{P@1} & \textbf{P@10} 
 & \textbf{P@1} & \textbf{P@10} 
 & \textbf{P@1} & \textbf{P@10} 
 & \textbf{P@1} & \textbf{P@10} 
 & \textbf{P@1} & \textbf{P@1} \\
\midrule
\textbf{P2-self} & \textbf{20.8} & \textbf{38.4} & \textbf{17.6} & \textbf{35.2} & \textbf{16.7} & \textbf{38.4} & \textbf{23.9} & \textbf{53.6} & \textbf{77.4} & \textbf{56.4} \\
Vanilla Ancestral & 3.3 & 18.3 & 3.2 & 15.2 & 1.8 & 13.2 & 2.9 & 21.8 & 72.7 & 53.8 \\
Greedy Ancestral & 9.3 & 31.1 & 8.1 & 28.7 & 5.3 & 29.0 & 8.7 & 41.5 & 75.1 & 53.7 \\
Entropy-based Confidence & 12.6 & 35.4 & 10.9 & 29.9 & 9.2 & 36.8 & 15.2 & 50.7 & 75.1 & 53.2 \\
TopK-Margin  & 7.6 & 27.4 & 6.5 & 26.2 & 3.9 & 24.0 & 6.2 & 33.5 & 75.0 & 54.4 \\
\bottomrule
\end{tabular}
}
\caption{Performance comparison across coding benchmarks for sampling different methods.}
\label{tab:morecodesamplingcomparison}
\end{table}

\section{Additional Results}
\label{app:additional-results}
\subsection{Unstable Training with Pure PAPL Loss}
We investigate the effect of training solely with the PAPL loss under the default temperature setting ($\tau=1$). As shown in Figure~\ref{fig:papl_instability}, training exhibits large fluctuations and fails to achieve stable convergence on the validation loss. We hypothesize that this instability arises from the denoiser becoming overly confident: the path weights bias the model toward a narrow set of generation paths, which reduces diversity and leads to overfitting.

\begin{figure}[t]
    \centering
    \begin{subfigure}[t]{0.48\textwidth}
        \centering
        \begin{tikzpicture}
            \node[anchor=south west,inner sep=0] (image) at (0,0) {
                \includegraphics[width=\linewidth]{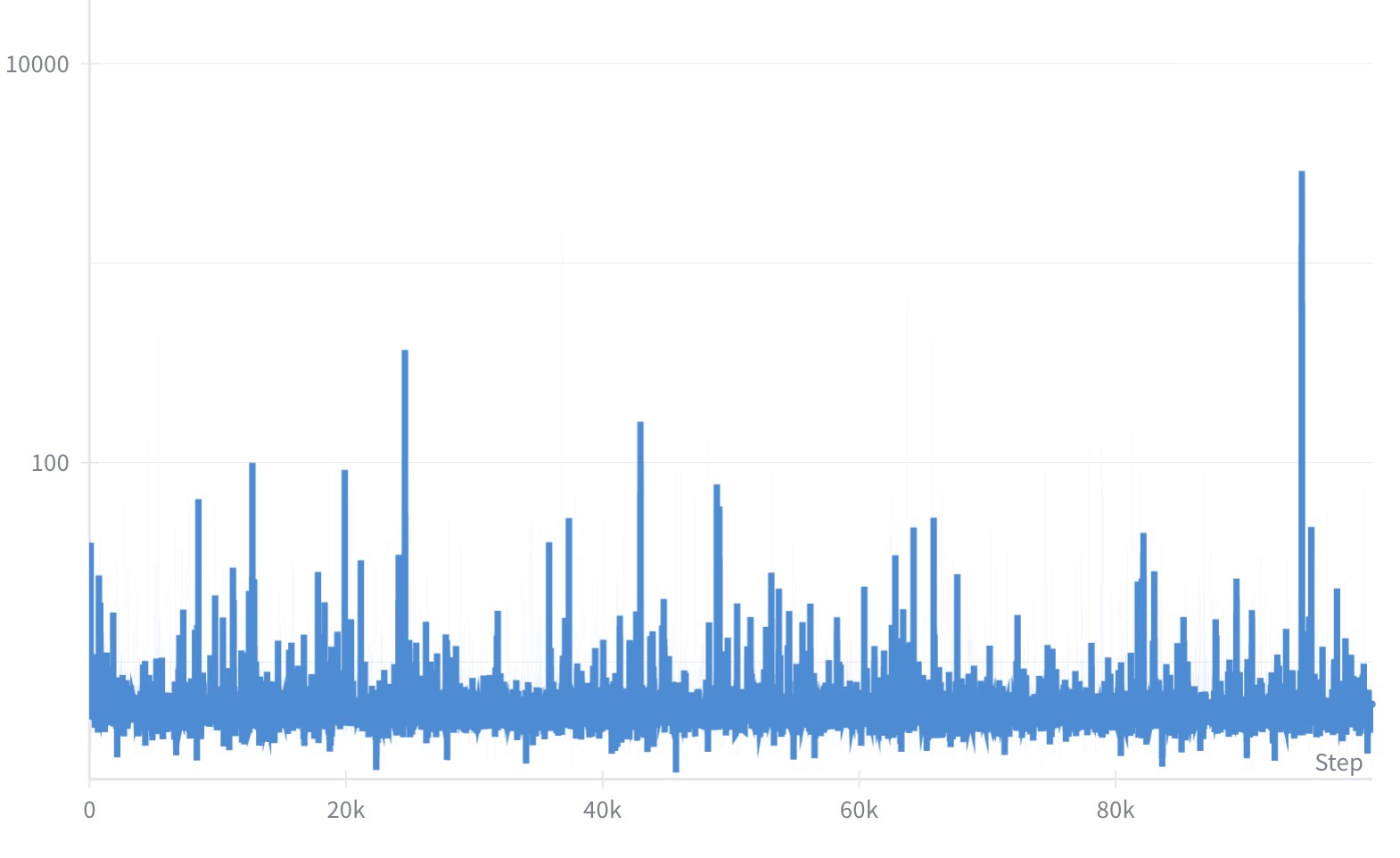}
            };
            \node[rotate=90] at (-0.3,2.5) {\small Training Loss};
            \node at (3.8,-0.4) {\small Training Step};
        \end{tikzpicture}
    \end{subfigure}
    \hfill
    \begin{subfigure}[t]{0.48\textwidth}
        \centering
        \begin{tikzpicture}
            \node[anchor=south west,inner sep=0] (image) at (0,0) {
                \includegraphics[width=\linewidth]{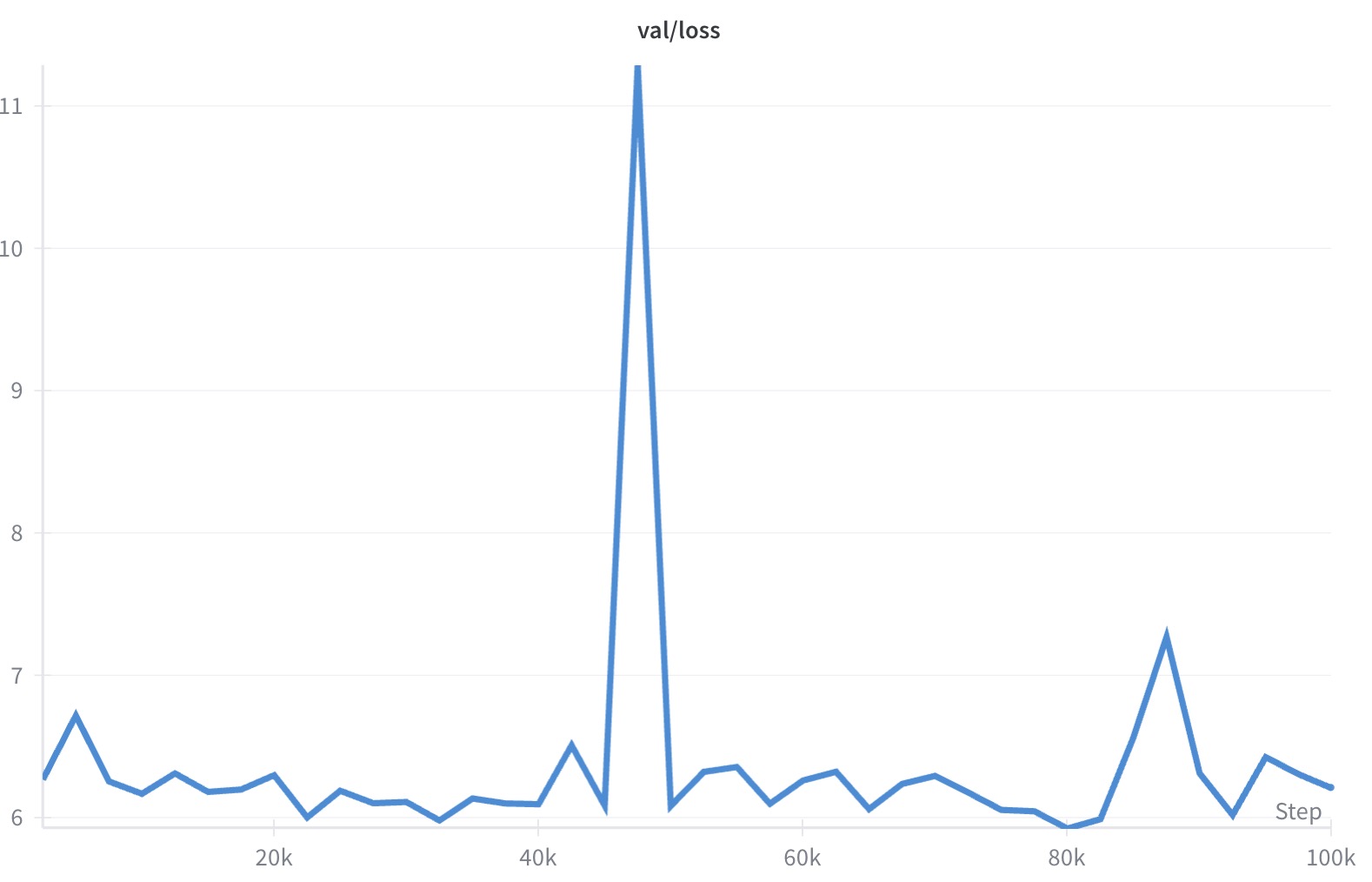}
            };
            \node[rotate=90] at (-0.6,2.5) {\small Validation Loss};
            \node at (3.5,-0.4) {\small Training Step};
        \end{tikzpicture}
    \end{subfigure}
    \caption{Training with pure PAPL loss ($\tau=1$) leads to unstable behavior, with large fluctuations in training (left) and poor convergence on validation (right).}
    \label{fig:papl_instability}
\end{figure}

\subsection{\rev{Comparison of Training Curves with Vanilla MDM Loss}}

\begin{figure}[t]
\centering
\includegraphics[width=0.7\linewidth]{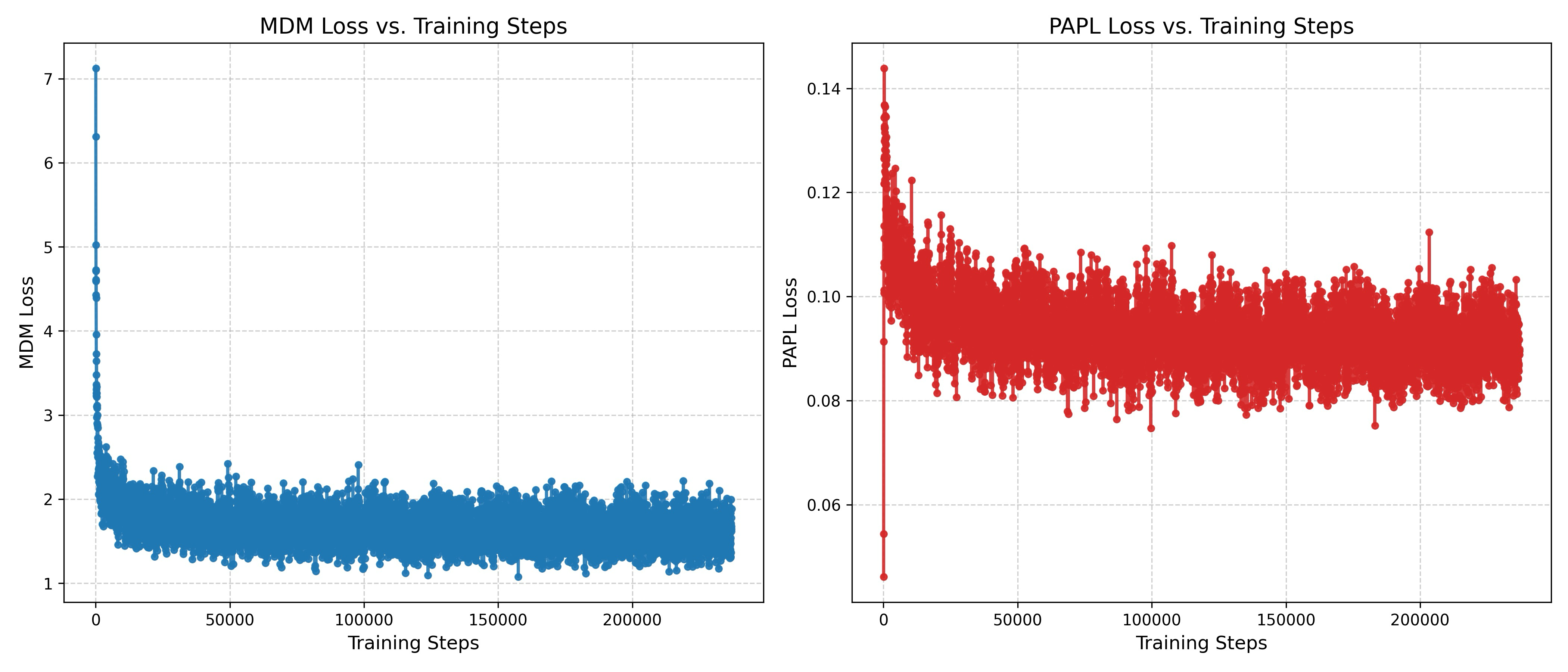}
\vspace{-10pt}
\caption{\textbf{Vanilla MDM vs PAPL Training Curves.} 
Early in training, the PAPL loss remains small because the denoiser has not yet formed meaningful beliefs about the correct token positions, causing the planner-dependent weights $w_i$ to be close to zero. As the model begins to identify correct positions with higher certainty, these weights increase, leading to a temporary rise in the PAPL loss. Once the denoiser becomes sufficiently confident, the loss decreases and eventually mirrors the behavior of the standard MDM loss. These dynamics provide additional evidence that PAPL naturally adapts its emphasis as the model’s confidence improves, stabilizing precisely when confidence becomes a reliable signal.
}
\label{fig:trainingcurves}
\end{figure}
\rev{
In Figure \ref{fig:trainingcurves} we observe that the PAPL loss exhibits distinct training curves compared to the vanilla MDM loss. In particular, initially the PAPL loss is small, because the denoiser has not established any confidence about the correct token positions, and hence the weights $w^i$ are near-zero. As the denoiser gains confidence about the correct token positions during the initial training stages, the weights and hence loss increase, before decreasing with a curve similar to that of the vanilla MDM loss.
}

\subsection{\rev{Empirical Estimation of the Effect of the Approximation Steps}}

\begin{table}[h!]
\centering
\renewcommand{\arraystretch}{1.2}
\begin{tabularx}{\textwidth}{@{}l*{10}{>{\centering\arraybackslash}X}@{}}
\toprule
Loss &  Greedy &  E1-softmax& E2-softmax & Vanilla-softmax & PAPL-softmax & E1-rand & E2-rand & PAPL& Vanilla \\ 
\midrule
Value &  23294 & .002 &  3.364 & 17.897 & 17.897 & .002& 3.365 & 18.362 & 18.362\\ 
\bottomrule
\end{tabularx}
\caption{Ablation of the approximation steps used to obtain the PAPL loss.}
\label{tab:approxsteps}
\end{table}

\rev{Here we provide Table \ref{tab:approxsteps}, where we empirically estimate the effect of the series of approximations used to move from the true ELBO of Proposition \ref{prop:general_ELBO} specialized to Greedy Ancestral (see Corollary \ref{cor:greedyELBO}) the PAPL loss \eqref{eq:papl-loss}. Here, rather than sampling a random timestep $k$, we sum along the entire generation trajectory. Greedy is the full greedy loss from Corollary \ref{cor:greedyELBO},  calculated along the true greedy path from Corollary \ref{cor:greedyELBO}. We remark that this term is extremely large due to the crude lower bound on $\mathcal{E}^{\theta,\phi}_2$ used to arrive at \ref{cor:greedyELBO} from Proposition \ref{prop:general_ELBO}. $E1$-softmax is the contribution of the term $\mathcal{E}^{\theta,\phi,\tau}_1$ from specialization of Proposition \ref{prop:general_ELBO} to the softmax planner from Corollary \ref{cor:softmaxelbo} to the true softmax loss, and $E2$-softmax is the contribution of $\mathcal{E}^{\theta,\phi,\tau}_2$. PAPL-softmax is the PAPL loss \eqref{eq:papl-loss} but with $\mathbf{x}_k$ sampled from the softmax path from Corollary \ref{cor:softmaxelbo}, and Vanilla-softmax is the vanilla loss from \eqref{eq:AOARMELBO} but with $\mathbf{x}_k$ sampled from the softmax path. Finally, $E1$-rand and $E2$-rand are the contribution of $\mathcal{E}^{\theta,\phi,\tau}_1,\mathcal{E}^{\theta,\phi,\tau}_2$ Corollary \ref{cor:softmaxelbo} but taken along a random path, and PAPL and Vanilla are the actual PAPL and Vanilla losses, which are computed along the random path by definition.}

\rev{The losses are estimated via accumulating along the entire trajectories for a batch-size of 1024, using the PAPL-fine-tuned protein MDM with $\alpha=5$ and $\tau=1$.}

\rev{We remark that the computations of the $E2$ terms are highly unstable. However, we can observe still in Table \ref{tab:approxsteps} the following: As predicted by Proposition \ref{prop:greedynotanELBO}, Greedy Loss is much larger than Vanilla Loss - Vanilla Loss does not provide an upper bound for greedy ancestral sampling, so we would expect the true upper bound to be larger. Moreover, as predicted by Proposition \ref{prop:relativesizedifferenttau}, when computed along the same path, $E_1$ is dominated by the vanilla loss. Finally, we observe that indeed the path taken in terms of unmasking order does have effect on the losses - the losses increase while taking a random path. This is expected, as it is known that taking a random path, as is trained for in vanilla MDMs, yields significantly worse sample quality than a greedy or soft-greedy path.}

\subsection{HumanEval Performance Analysis}
\label{appendix:humaneval-analysis}

We evaluated the model on 40 HumanEval tasks. While it handles straightforward problems well, performance declines sharply for tasks requiring careful constraint handling, multi-step logic, or less familiar algorithms.

\paragraph{Strengths}

The model performs best when tasks align with standard Python idioms or textbook solutions. In \texttt{HumanEval/12: longest}, for example, it produced the compact and idiomatic implementation in Listing~\ref{lst:longest}, which is more direct than the canonical reference.

\begin{figure}[h!]
\centering
\begin{minipage}{0.45\textwidth}
\captionof{listing}{Model's solution for \texttt{HumanEval/12: longest}.}
\label{lst:longest}
\begin{lstlisting}[language=Python]
def longest(strings: list) -> str | None:
    if not strings:
        return None
    return max(strings, key=len)
\end{lstlisting}
\end{minipage}
\hfill
\begin{minipage}{0.45\textwidth}
\captionof{listing}{Canonical solution.}
\begin{lstlisting}[language=Python]
def longest(strings: list) -> str | None:
    if not strings:
        return None
    maxlen = max(len(x) for x in strings)
    for s in strings:
        if len(s) == maxlen:
            return s
\end{lstlisting}
\end{minipage}
\end{figure}

The model also demonstrates competence in basic algorithmic tasks. For example, \texttt{HumanEval/25: factorize} was solved with a standard trial division approach (Listing~\ref{lst:factorize}), and string prefix generation (\texttt{HumanEval/14}) and set-based deduplication (\texttt{HumanEval/34}) were handled correctly.

\begin{listing}[h!]
\caption{Model's correct solution to \texttt{HumanEval/25: factorize}.}
\label{lst:factorize}
\begin{lstlisting}[language=Python]
def factorize(n: int) -> list[int]:
    factors = []
    while n % 2 == 0:
        factors.append(2)
        n //= 2
    i = 3
    while i * i <= n:
        while n % i == 0:
            factors.append(i)
            n //= i
        i += 2
    if n > 2:
        factors.append(n)
    return factors
\end{lstlisting}
\end{listing}

\paragraph{Weaknesses}

The most common failures stem from flawed algorithmic reasoning. In \texttt{HumanEval/9: rolling\_max}, the model produced a redundant nested loop instead of the correct single-pass running maximum (Listing~\ref{lst:rollingmax}).

\begin{listing}[h!]
\caption{Incorrect vs.\ canonical solutions for \texttt{HumanEval/9: rolling\_max}.}
\label{lst:rollingmax}
\begin{lstlisting}[language=Python]
# Model (incorrect)
def rolling_max(numbers: list) -> list[int]:
    result = []
    max_val = numbers[0]
    i = 0
    while i < len(numbers):
        max_val = max(max_val, numbers[i])
        while i < len(numbers):  # Redundant nested loop
            max_val = max(max_val, numbers[i])
            i += 1
        result.append(max_val)
    return result

# Canonical
def rolling_max(numbers: list) -> list[int]:
    running_max = None
    result = []
    for n in numbers:
        running_max = n if running_max is None else max(running_max, n)
        result.append(running_max)
    return result
\end{lstlisting}
\end{listing}

Another recurring issue is misinterpretation of constraints. In \texttt{HumanEval/3: below\_zero}, the model ignored the requirement to detect negative balances \emph{at any point}, checking only the final state instead (Listing~\ref{lst:belowzero}).

\begin{listing}[h!]
\caption{Misinterpretation of temporal constraint in \texttt{HumanEval/3: below\_zero}.}
\label{lst:belowzero}
\begin{lstlisting}[language=Python]
# Model (incorrect)
def below_zero(operations: list[int]) -> bool:
    balance = 0
    for op in operations:
        # Missing balance update logic
        if balance < 0:
            return False
    return balance < 0

# Canonical
def below_zero(operations: list[int]) -> bool:
    balance = 0
    for op in operations:
        balance += op
        if balance < 0:
            return True
    return False
\end{lstlisting}
\end{listing}

Finally, there are occasional catastrophic failures, where the generated code bears no relation to the task. In \texttt{HumanEval/2: truncate\_number}, for instance, the model produced irrelevant variable assignments and returned the input unchanged, instead of applying a simple modulo operation (Listing~\ref{lst:truncate}).

\begin{listing}[h!]
\caption{Catastrophic failure on \texttt{HumanEval/2: truncate\_number}.}
\label{lst:truncate}
\begin{lstlisting}[language=Python]
# Model (incorrect)
def truncate_number(number: float) -> float:
    a, b, c = 1, 2, 3  # Irrelevant assignments
    return number

# Canonical
def truncate_number(number: float) -> float:
    return number % 1.0
\end{lstlisting}
\end{listing}

\paragraph{Discussion}

Overall, the model succeeds when tasks resemble common idioms or well-documented examples, but struggles as soon as prompts introduce layered constraints or require multi-step reasoning. The lack of self-checking mechanisms is evident in tautological or irrelevant code that would be immediately rejected by a human programmer. While the system can accelerate routine coding tasks, it cannot yet be relied upon for problems that demand algorithmic novelty or strict logical consistency. Improving decomposition of complex prompts and incorporating verification steps remain key directions for future work.

\section{Practitioner's Guide}

There are two main parameters for PAPL, temperature $\tau$ and loss weight $\alpha$. For practitioners, we recommend initial settings of $\tau=1$ and $\alpha=1$. All experiments in this work leave $\tau=1$. We experimented with $\alpha$ values in the range $1\ldots10$. We found that higher values of $\alpha$ can be more effective in the range of $\approx 5$, especially for protein models, but not beyond that. If hyperparameter tuning with PAPL, we therefore recommend starting with $\tau=1$ and $\alpha=1$ and doubling $\alpha$ to efficiently search the space.

We recommend this because there is no huge issue with setting $\alpha$ as low as $0$, the model will still achieve good performance, as a reminder $\alpha=0$ recovers standard DLM training. However, you may not get the benefit of PAPL training, and slow convergence on practical inference paths.

With $\alpha$ too high, the training may become unstable, and may not be trained well on all paths. We hypothesis this may hinder performance on more out of distribution tasks, where the planner is not as confident, or may hinder performance by learning sub-optimal paths for generation by latching on to a specific path too quickly.

\end{document}